\documentclass{article}

\PassOptionsToPackage{numbers, compress}{natbib}
\usepackage[main,final]{style_file}

\usepackage[utf8]{inputenc} 
\usepackage[T1]{fontenc}    
\usepackage{amsmath, amsthm, amssymb, amsfonts, bm}   
\usepackage{hyperref}       
\usepackage{url}            
\usepackage{booktabs}       
\usepackage{graphicx}       
\usepackage{caption}        
\usepackage{subcaption}     
\usepackage{nicefrac}       
\usepackage{microtype}      
\usepackage[table]{xcolor}         
\usepackage[capitalize]{cleveref} 
\usepackage{multirow} 
\usepackage{array}    
\usepackage{adjustbox} 
\usepackage{algorithm}
\usepackage{algorithmic}

\ifx\pdfminorversion\undefined\else\pdfminorversion=7\fi 

\newtheorem{theorem}{Theorem}
\newtheorem{corollary}[theorem]{Corollary}
\newtheorem{lemma}[theorem]{Lemma}
\newtheorem{proposition}[theorem]{Proposition}

\newtheorem{remark}{Remark}

\title{OliO: ODE-based Linear Transition Operator for Self-Supervised Time Series Forecasting}

\author{
    Kwangryeol Park \\
    Graduate School of Artificial Intelligence \\
    Ulsan National Institute of Science and Technology \\
    Ulsan, South Korea\\
    \texttt{pkr7098@unist.ac.kr} \\
    \And
    Seulki Lee\thanks{Corresponding Author}\\
    School of Electrical Engineering \\
    Korea Advanced Institute of Science and Technology \\
    Daejeon, South Korea\\
    \texttt{seulki.lee@kaist.ac.kr} \\
}

\begin{document}

\maketitle

\begin{abstract}
    Self-supervised representation learning has  advanced time series forecasting by capturing robust features from unlabeled data. However, existing contrastive and masking-based methods often struggle to explicitly model the underlying temporal flow, either by treating time lag as noise or by disrupting the inherent temporal dynamics essential for forecasting. To address these limitations, we propose OliO (\textbf{O}DE-based \textbf{Li}near Transition \textbf{O}perator), a plug-and-play self-supervised learning method that learns bidirectional temporal flows as a continuous and structurally consistent dynamical system. OliO introduces a transition operator derived from linear Ordinary Differential Equations (ODEs) to align latent representations across arbitrary time shifts. By imposing a strictly upper triangular constraint on the transition matrix, we mathematically ensure numerical stability and enforce a robust polynomial boundary that effectively captures long-term dependencies while preventing the exponential divergence typically found in unconstrained ODE-based models. This structural constraint induces a temporally coherent representation space that preserves the underlying flow essential for accurate forecasting. Extensive experiments across various backbone architectures and benchmarks demonstrate that OliO achieves significant performance gains, providing Mean Squared Error (MSE) reductions of up to 5.75\% over the most competitive grid-searched SOTA baselines. The code is available at \url{https://github.com/eai-lab/OliO}.
\end{abstract}

\section{Introduction}

Time series forecasting plays a crucial role in various real-world applications, such as energy consumption planning, traffic analysis, healthcare monitoring, and financial market prediction~\cite{Autoformer, Informer}. However, building accurate forecasting models remains a significant challenge due to the complex temporal variations and high-dimensional nature of massive time series data. Recently, to address these challenges, self-supervised learning (SSL), which learns meaningful representations from unlabeled data, has been actively adopted in the time series domain. Representative approaches include contrastive learning~\cite{TS2Vec, TS-TCC, CoST}, which learns representations by regulating the similarity of related temporal views, and masked modeling~\cite{Ti-MAE, SimMTM, TimeSiam}, which learns through reconstruction of masked observations.

A central question for forecasting is how the temporal displacement between historical windows should be represented during self-supervised pre-training. Two overlapping windows may share many observations while representing different temporal states. Their temporal displacement specifies both how far apart the windows are and which window precedes the other. Contrastive and masked-modeling objectives provide supervision through representation similarity or observation reconstruction. SoftCLT~\cite{SoftCLT} additionally uses temporal distance to weight similarity, but does not explicitly learn a transition conditioned on the signed lag between the two windows. This motivates an objective that explicitly learns where a historical representation should move under the pair's actual signed temporal displacement.

To address this question, we propose OliO (\textbf{O}DE-based \textbf{Li}near Transition \textbf{O}perator), a plug-and-play self-supervised framework for learning signed, lag-conditioned, bidirectional temporal transitions. Rather than directly aligning representations from different time points, OliO first transitions one representation from its original time point to the other before performing alignment. Specifically, given two representations independently encoded from historical windows separated by a lag $\tau>0$, OliO models the relationship between the two representations using an ODE-derived linear transition operator~\cite{minoza2026spike,somasekharan2026deep,mondal2024efficient}. It then transitions the earlier representation to the later time point using $+\tau$ and aligns it with the later representation. In the reverse direction, OliO similarly transitions the later representation using $-\tau$ and aligns it with the earlier representation.

OliO introduces the projection and transition modules only during pre-training. Both modules are discarded for downstream forecasting, leaving the forecasting architecture of the original backbone unchanged. We evaluate the framework across multiple backbones under a shared-backbone protocol and examine representation quality and transferability through frozen-backbone probing and zero-shot forecasting.

The main contributions of this paper are as follows:
\begin{itemize}
    \item \textbf{Signed temporal-transition pre-training:} We formulate a forecasting-oriented self-supervised objective that learns bidirectional transitions between independently encoded representations of historical windows, conditioned on the magnitude and direction of their actual temporal displacement.
    \item \textbf{Finite-polynomial transition structure:} We parameterize bidirectional transitions between representations of historical windows using a strictly upper-triangular ODE generator. The generator's nilpotency ensures that these transitions are exactly expressible as finite polynomials in temporal lag.
    \item \textbf{Model-agnostic representation learning:} OliO operates on the final-layer representations of forecasting backbones, allowing it to be applied during pre-training without modifying their internal architectures.
\end{itemize}

\section{Related work}
\label{sec:related_work}

\subsection{Self-supervised learning for time series}
Contrastive learning and masked modeling provide effective supervision for learning time-series representations without forecasting labels. Contrastive methods such as TS2Vec~\cite{TS2Vec}, TS-TCC~\cite{TS-TCC}, and CoST~\cite{CoST} regulate similarity between related temporal views while separating negative pairs. Masked-modeling methods such as Ti-MAE~\cite{Ti-MAE} and SimMTM~\cite{SimMTM} learn representations through observation reconstruction. These objectives capture useful temporal structure, but do not explicitly parameterize how one historical representation should transition toward another under their actual signed temporal displacement.

Non-contrastive methods such as SimSiam~\cite{SimSiam} and BYOL~\cite{BYOL} learn by predicting a target representation from another view, with gradients stopped through the target branch. OliO builds on this alignment mechanism. The distinction lies in the input and role of the prediction module: conventional view-alignment heads receive neither the actual temporal separation of two historical windows nor its direction. In contrast, OliO takes the pair-specific signed lag as an input and explicitly incorporates it into the transition module's formulation to transition the source representation before alignment. Thus, its temporal supervision comes from the actual displacement between the windows, in addition to prediction across the two branches.

Time-aware methods incorporate temporal relations more directly. SoftCLT~\cite{SoftCLT} uses temporal distance to weight a contrastive similarity objective. The distance controls the strength of attraction between representations, rather than a transformation toward a lag-conditioned destination; exchanging temporal order leaves the unsigned distance unchanged. TimeSiam~\cite{TimeSiam} incorporates temporally separated subseries and past-to-current reconstruction. It therefore exploits temporal context, while supervising observation reconstruction rather than a pair-specific signed transition between independently encoded historical representations. OliO instead conditions a learnable bidirectional transition on the actual signed lag and aligns each transitioned representation with the independently encoded representation of the other window. This makes temporal displacement determine a representation-level transition, rather than only the strength of similarity or the context for reconstructing observations.

\begin{figure*}[!t]
    \centering
    \setlength{\tabcolsep}{1.0pt} 
    
    \begin{tabular}{cllllc}
        \adjustbox{valign=m, raise=0ex}{\rotatebox{90}{\textbf{UMAP}}} & 
        \begin{subfigure}{0.21\textwidth}
            \centering
            \adjustbox{valign=m}{\includegraphics[width=\textwidth]{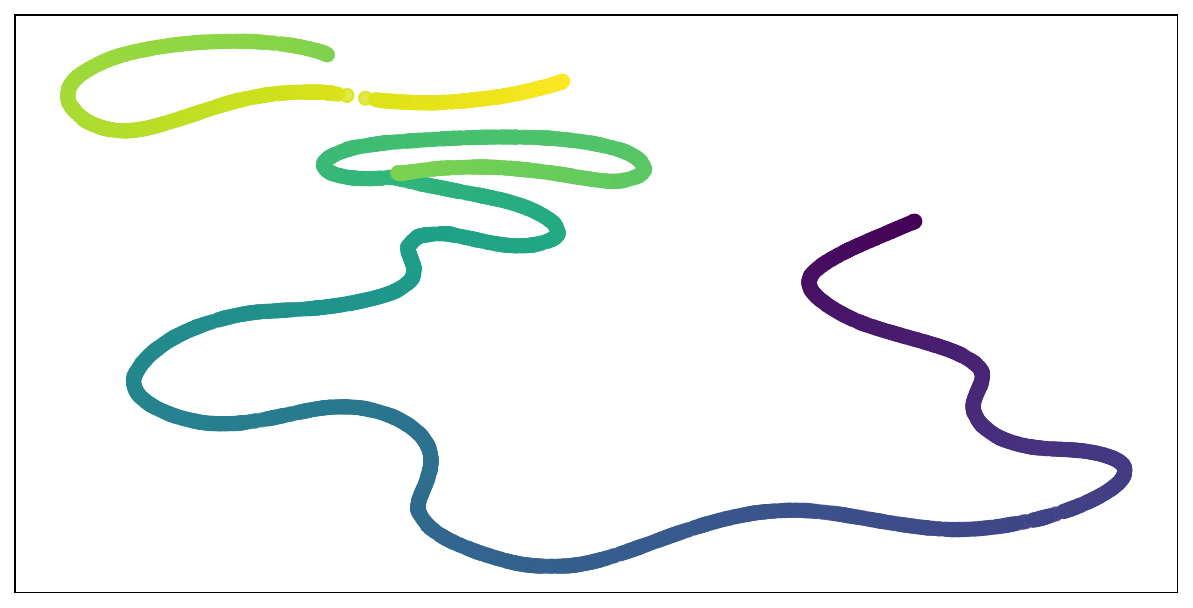}}
        \end{subfigure} &
        \begin{subfigure}{0.21\textwidth}
            \centering
            \adjustbox{valign=m}{\includegraphics[width=\textwidth]{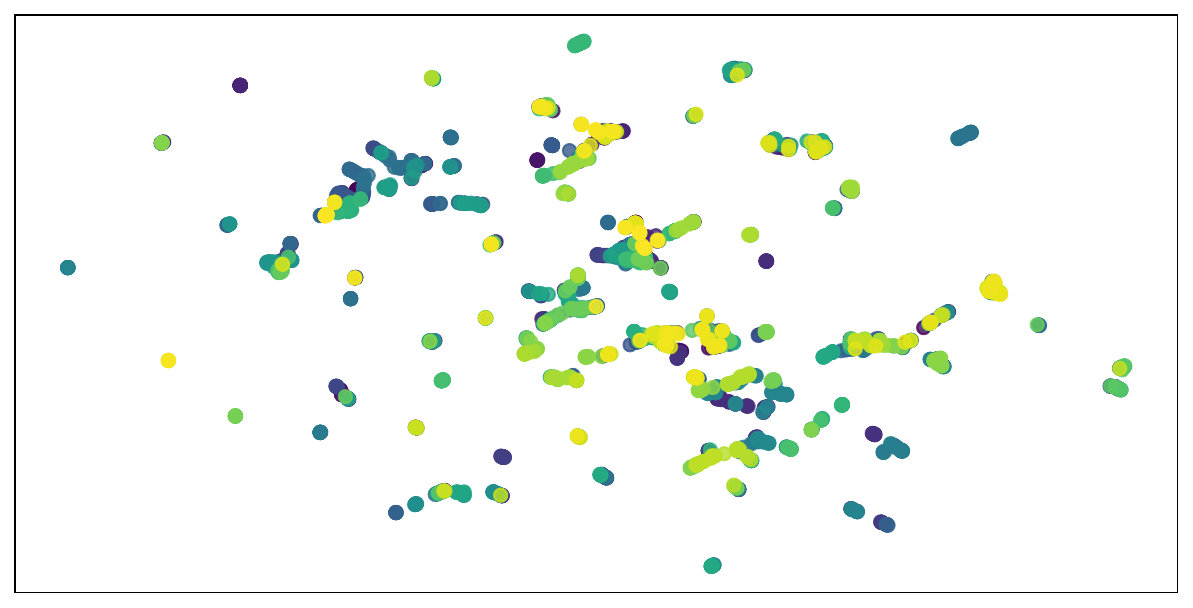}}
        \end{subfigure} &
        \begin{subfigure}{0.21\textwidth}
            \centering
            \adjustbox{valign=m}{\includegraphics[width=\textwidth]{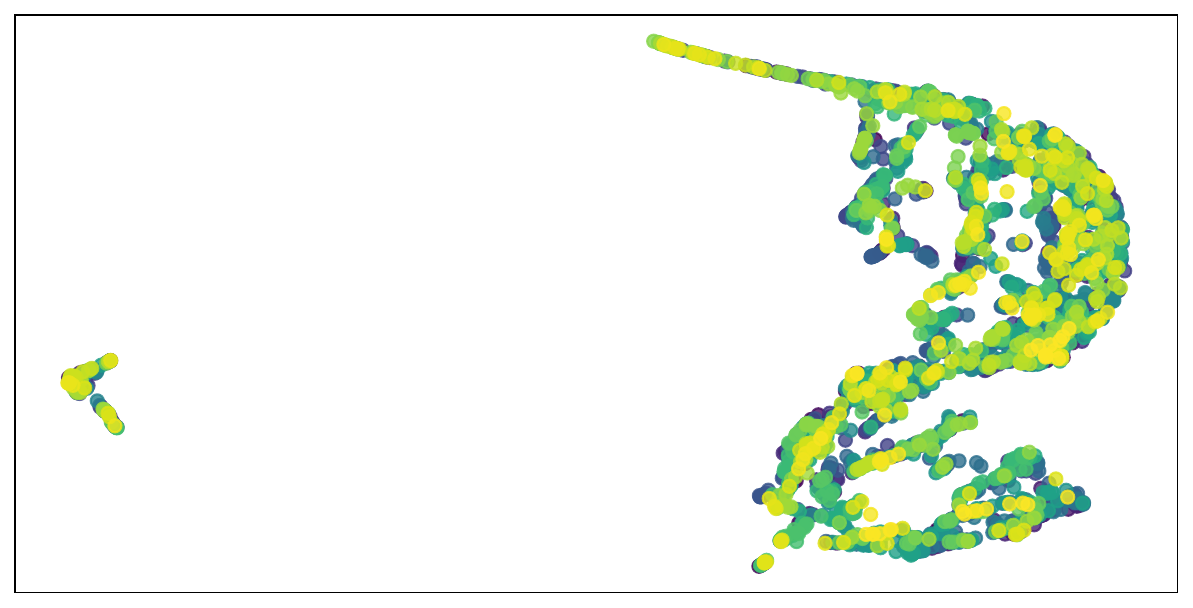}}
        \end{subfigure} &
        \begin{subfigure}{0.21\textwidth}
            \centering
            \adjustbox{valign=m}{\includegraphics[width=\textwidth]{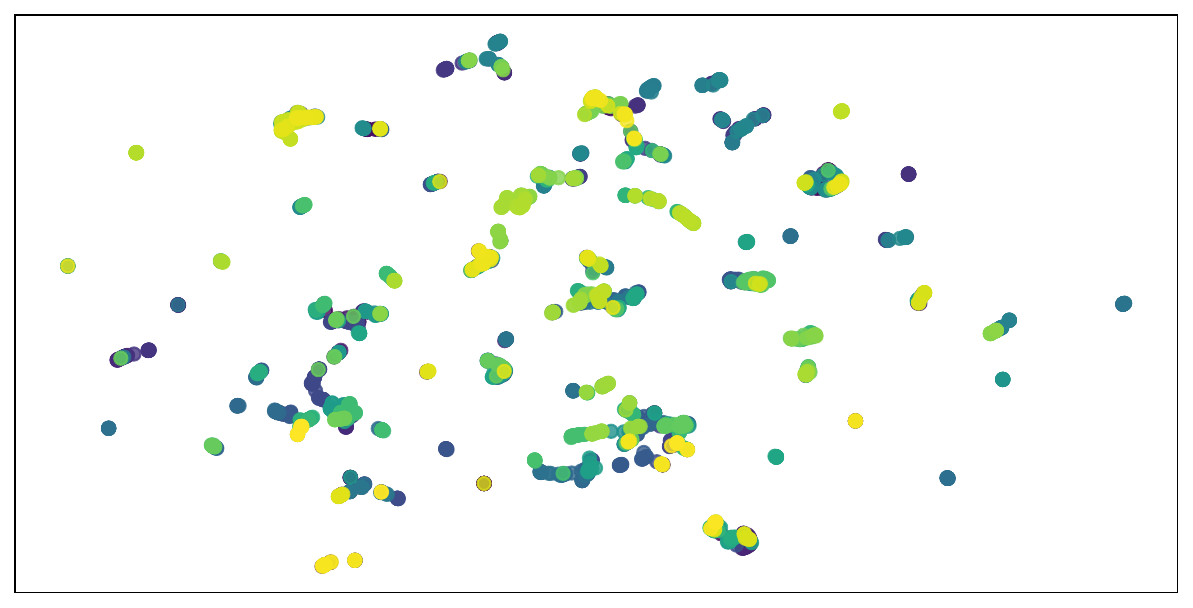}}
        \end{subfigure} &
        \multirow{2}{*}[1.5em]{\adjustbox{valign=m}{ 
            \includegraphics[height=0.20\textwidth]{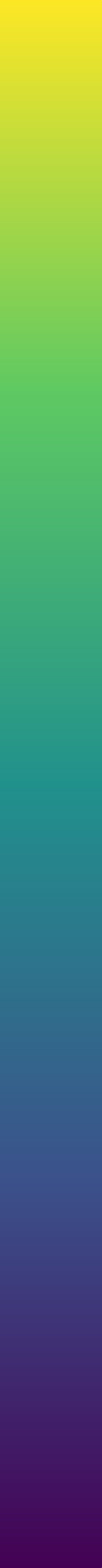}
        }} \\ [1.5em] 
        
        \adjustbox{valign=m, raise=7ex}{\rotatebox{90}{\textbf{PHATE}}} & 
        \begin{subfigure}{0.21\textwidth}
            \centering
            \adjustbox{valign=m}{\includegraphics[width=\textwidth]{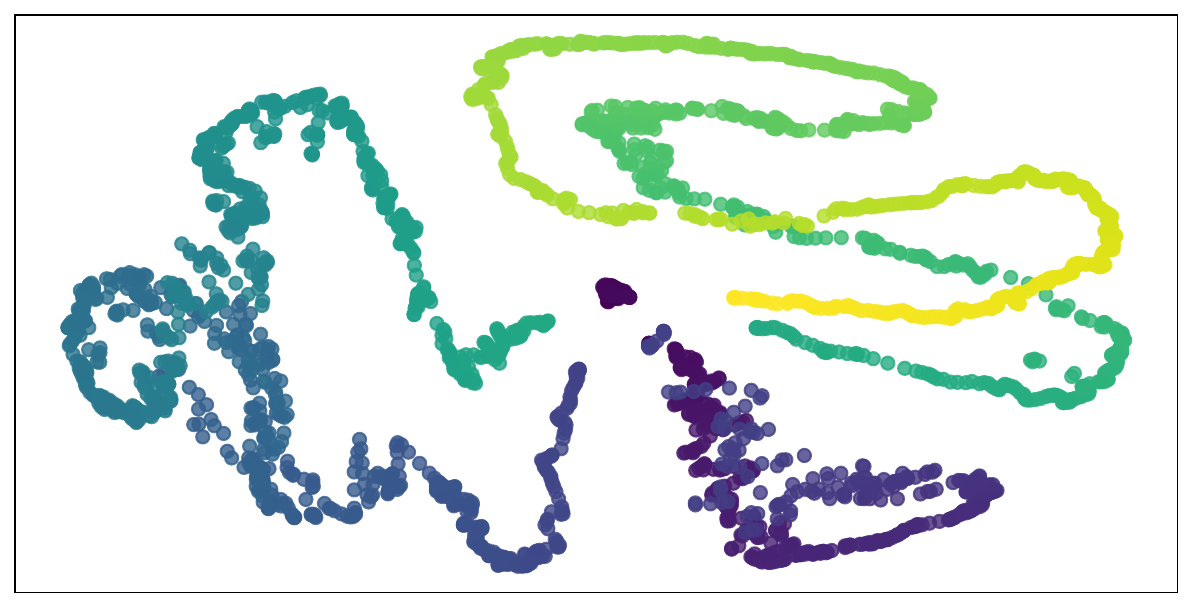}}
            \caption{OliO}
        \end{subfigure} &
        \begin{subfigure}{0.21\textwidth}
            \centering
            \adjustbox{valign=m}{\includegraphics[width=\textwidth]{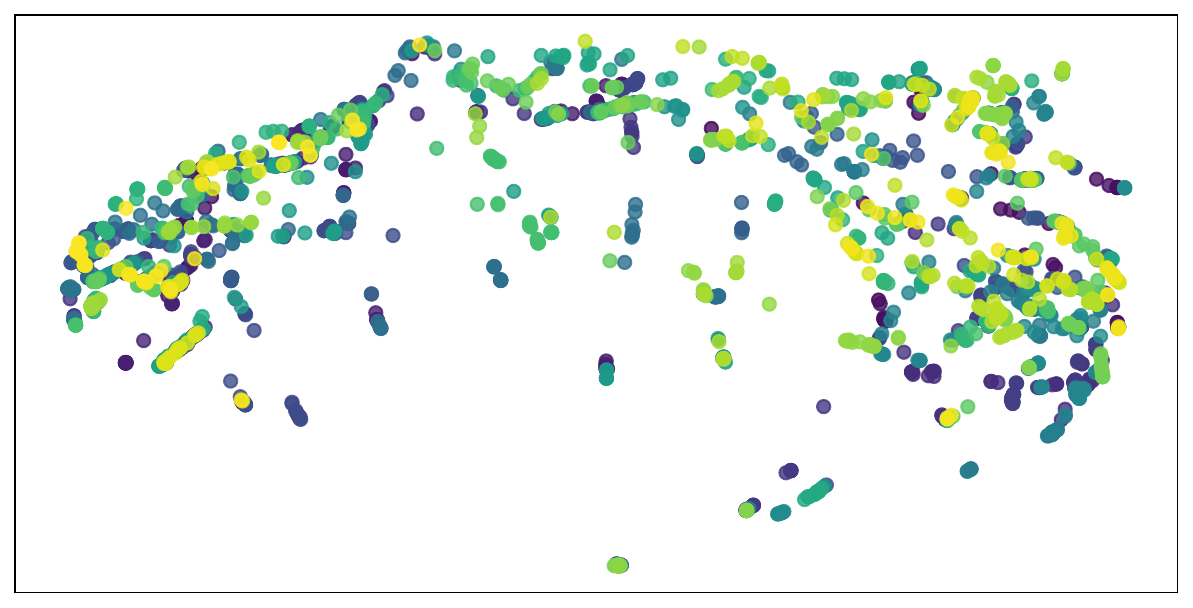}}
            \caption{SimMTM}
        \end{subfigure} &
        \begin{subfigure}{0.21\textwidth}
            \centering
            \adjustbox{valign=m}{\includegraphics[width=\textwidth]{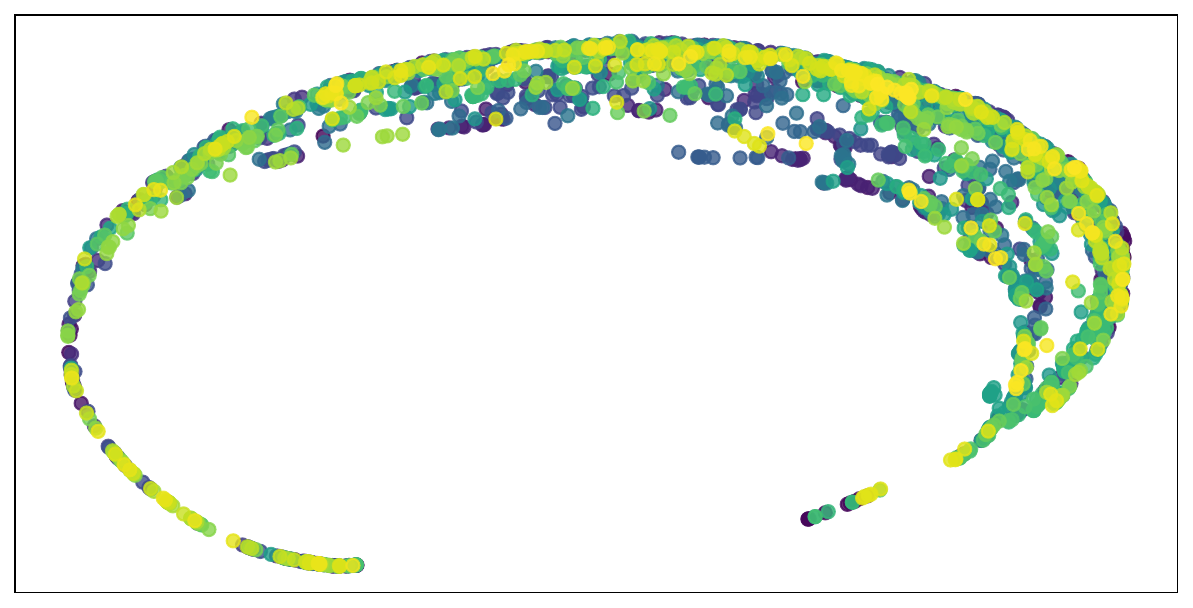}}
            \caption{TS2Vec}
        \end{subfigure} &
        \begin{subfigure}{0.21\textwidth}
            \centering
            \adjustbox{valign=m}{\includegraphics[width=\textwidth]{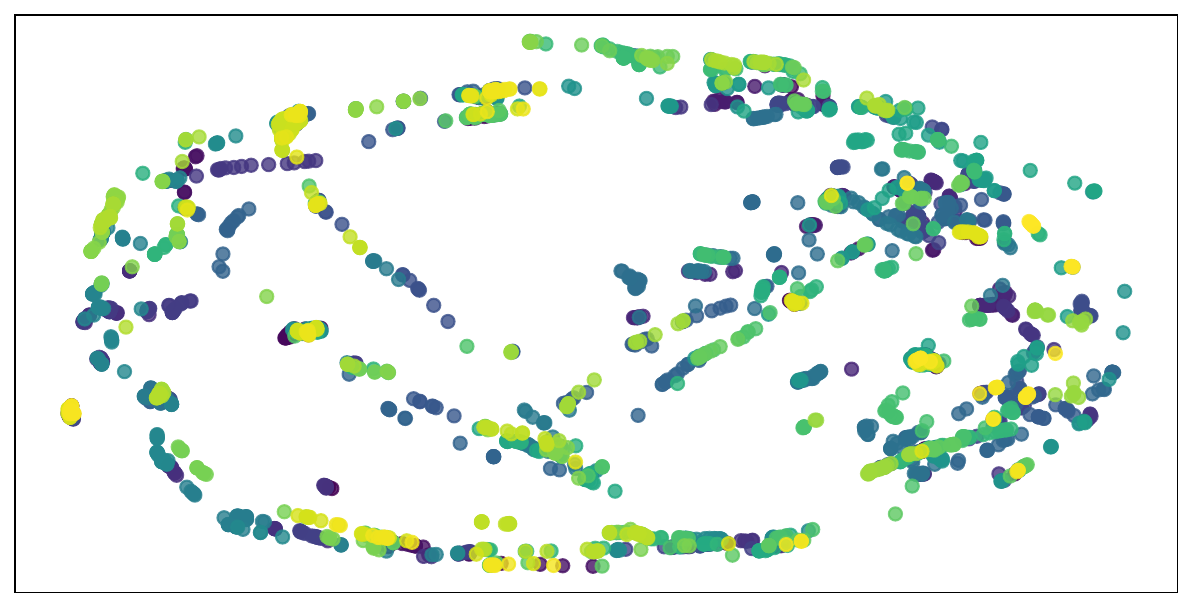}}
            \caption{SoftCLT}
        \end{subfigure} & 
        \\ 
    \end{tabular}
    \caption{Visual comparison of the pseudo output $h$ from the pretrained iTransformer~\cite{iTransformer} backbone using UMAP~\cite{umap} (top row) and PHATE~\cite{phate} (bottom row) on the Weather dataset~\cite{Autoformer}. The color gradient represents the progression of time. While baseline methods (SimMTM~\cite{SimMTM}, TS2Vec~\cite{TS2Vec}, SoftCLT~\cite{SoftCLT}) yield scattered or tightly clustered results that collapse temporal order, OliO maintains a continuous trajectory, effectively capturing the underlying temporal flow.}
    \label{fig:motivation_visual}
\end{figure*}

Fig.~\ref{fig:motivation_visual} provides a qualitative comparison of temporal organization in representations learned on Weather~\cite{Autoformer}. Each point corresponds to a historical window, and its color encodes the window's time index. In SimMTM and SoftCLT, points from different time indices are scattered throughout the embedding, yielding fragmented and intermixed color patterns. TS2Vec forms a denser structure, but colors associated with different time indices remain mixed within it. In contrast, OliO exhibits a smoother progression of colors along its trajectory, suggesting a more temporally ordered organization of the visualized representations. These visualizations complement the forecasting evaluation; additional backbone and dataset visualizations appear in App.~\ref{appendix:latent_visualizations}.

\subsection{Relation to latent dynamics and operator learning}
Latent dynamics and operator-learning methods learn representations for trajectory generation~\cite{LatentODE}, forecasting~\cite{KoopmanNeuralForecaster}, and evolution-operator estimation and analysis~\cite{SelfSupervisedEvolutionOperator}. OliO draws on latent-transition modeling to define a forecasting-oriented pre-training objective: the actual signed lag between independently encoded historical windows supervises bidirectional representation transitions. Its transition module is used to train forecasting representations and is then discarded together with the projection head; downstream prediction is performed by the original backbone. OliO therefore uses established latent-dynamics tools to construct signed temporal supervision for forecasting representations.

\section{Methodology}
\label{sec:methodology}

\subsection{Problem formulation and notation}
Fig.~\ref{fig:olio_architecture} illustrates the OliO framework. We describe the construction using a univariate time series for simplicity; the same formulation applies to multivariate windows. Scalars are denoted by lowercase letters, vectors by bold lowercase letters, and matrices by uppercase letters. Here, $t$ is absolute time and $\tau$ the nonnegative separation between window starts. Subscripts $\theta$ and $\phi$ denote network parameters, and $fw$ and $bw$ denote the forward and backward directions.

\begin{figure}[t!]
    \centering
    \includegraphics[width=\linewidth]{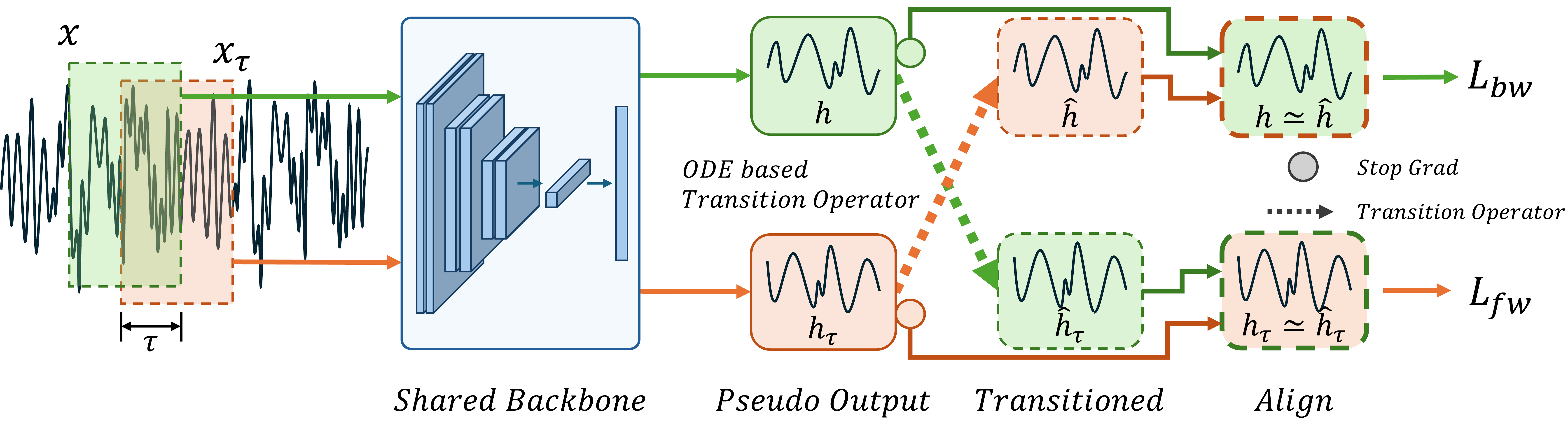}
    \caption{Overall architecture of the OliO framework. (a) Input windows $x$ and $x_\tau$ are encoded by a shared backbone into pseudo-outputs. (b) The forward shift operator aligns $h$ to $h_\tau$ via the matrix exponential $e^{\tau K}$, modeling the transition from past to future. (c) Conversely, the backward shift operator restores the temporal flow from future to past, ensuring bidirectional transition consistency.}
    \label{fig:olio_architecture}
\end{figure}

Consider a discretely sampled time series and a historical window $\bm{x}^{(t)}=[x_t,x_{t+1},\ldots,x_{t+l-1}]^\top\in\mathbb{R}^{l}$ of length $l$. A second window $\bm{x}^{(t+\tau)}$ has the same length and begins $\tau$ time steps later. During standard pre-training, $\tau\in\mathbb{N}$ satisfies $0<\tau<l$, so the two windows overlap. We use $+\tau$ for the transition from the earlier window to the later one and $-\tau$ for the reverse direction.

A shared forecasting backbone $f_\theta$ independently encodes the two windows into final-layer representations $\bm{r}^{(t)},\bm{r}^{(t+\tau)}\in\mathbb{R}^{d}$. A shared projection head $g_\phi$, implemented as a two-layer MLP with ReLU activation, maps them into the pseudo-output space~\cite{pmlr-v119-chen20j}, producing $\bm{h}^{(t)}=g_\phi(\bm{r}^{(t)})$ and $\bm{h}^{(t+\tau)}=g_\phi(\bm{r}^{(t+\tau)})$ in $\mathbb{R}^{d_h}$. OliO transitions each source pseudo-output according to the signed lag before aligning it with the independently encoded pseudo-output of the other window. In each direction, the target remains untransformed.

\subsection{ODE-based transition operator}
We derive the matrix transition from a homogeneous linear ordinary differential equation:
\begin{equation}
    \frac{d\bm{h}}{d\tau}=K\bm{h},
\end{equation}
where $K\in\mathbb{R}^{d_h\times d_h}$ is a learnable generator. The corresponding homogeneous transition of a source pseudo-output $\bm{h}$ over a lag $\tau$ is $e^{\tau K}\bm{h}$, providing a closed-form transition without an iterative nonlinear ODE solver. The ODE describes latent transport without assuming that observations follow a linear or continuous-time process. Training evaluates it at integer lags.

OliO augments this matrix transition with a shared learnable offset $\bm{v}\in\mathbb{R}^{d_h}$, scaled by the signed lag. The forward and backward transitions are
\begin{align}
    \hat{\bm{h}}^{(t+\tau)}
    &=\mathcal{T}_{+\tau}(\bm{h}^{(t)})
    =e^{\tau K}\bm{h}^{(t)}+\tau\bm{v},
    \label{eq:fwd_shift}\\
    \hat{\bm{h}}^{(t)}
    &=\mathcal{T}_{-\tau}(\bm{h}^{(t+\tau)})
    =e^{-\tau K}\bm{h}^{(t+\tau)}-\tau\bm{v}.
    \label{eq:bwd_shift}
\end{align}
Both directions share the same generator $K$ and offset $\bm{v}$; reversing the temporal direction changes the sign of the lag in both terms. The matrix component is ODE-derived, while the shared signed offset supplies additional lag-dependent displacement. The transitioned source $\hat{\bm{h}}^{(t+\tau)}$ is aligned with the later target $\bm{h}^{(t+\tau)}$, and $\hat{\bm{h}}^{(t)}$ is aligned with the earlier target $\bm{h}^{(t)}$.

\subsection{Strictly upper-triangular constraint}
We constrain the generator to be strictly upper triangular, with $k_{ij}=0$ for $i\ge j$. This makes $K$ nilpotent: there exists a smallest positive integer $q\le d_h$ such that $K^q=0$. Nilpotency provides an exact finite-polynomial realization of the matrix transition in both temporal directions.

\begin{theorem}[Finite-polynomial transition structure]
\label{thm:polynomial_distance}
Let $K\in\mathbb{R}^{d_h\times d_h}$ be strictly upper triangular with nilpotency index $q$. For every nonnegative integer lag $\tau\in\mathbb{N}$,
\begin{equation}
    e^{\pm\tau K}=\sum_{j=0}^{q-1}\frac{(\pm\tau)^jK^j}{j!}.
    \label{eq:finite_transition}
\end{equation}
The matrix $e^{\pm\tau K}$ is upper triangular with unit diagonal. For any fixed $\bm{h}$ and $\bm{v}$, the complete transition $\mathcal{T}_{\pm\tau}(\bm{h})=e^{\pm\tau K}\bm{h}\pm\tau\bm{v}$ is a vector-valued polynomial in $\tau$ of degree at most $\max\{q-1,1\}$.
\end{theorem}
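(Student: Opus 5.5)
The plan is to work directly from the power-series definition of the matrix exponential, $e^{A}=\sum_{j\ge 0}A^j/j!$, which converges absolutely for every square matrix. I take $A=\pm\tau K$. Since scalars commute with matrices, $(\pm\tau K)^j=(\pm\tau)^jK^j$. By the definition of the nilpotency index $q$, $K^q=0$, and hence $K^j=K^{q}K^{j-q}=0$ for every $j\ge q$. The infinite series therefore truncates exactly to $\sum_{j=0}^{q-1}(\pm\tau)^jK^j/j!$, which is \eqref{eq:finite_transition}. This argument holds for any real $\tau$, so in particular it holds for $\tau\in\mathbb{N}$.

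For the triangular structure, I first show that products of strictly upper-triangular matrices remain strictly upper triangular. If $A$ and $B$ satisfy $a_{ik}=0$ for $i\ge k$ and $b_{kj}=0$ for $k\ge j$, then $(AB)_{ij}=\sum_k a_{ik}b_{kj}$ can be nonzero only when $i<k<j$. It therefore vanishes when $i\ge j$, and in fact when $j\le i+1$. By induction, every $K^j$ with $j\ge1$ is strictly upper triangular. The same argument also recovers nilpotency: $(K^j)_{ik}=0$ unless $k-i\ge j$, so $K^{d_h}=0$ and $q\le d_h$, consistent with the text. In the finite sum, the $j=0$ term is $I$ and all other terms are strictly upper triangular. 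Hence $e^{\pm\tau K}$ is upper triangular with every diagonal entry equal to $1$.

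For the polynomial claim, fix $\bm{h}$ and $\bm{v}$. The matrix part is $e^{\pm\tau K}\bm{h}=\sum_{j=0}^{q-1}\frac{(\pm1)^j}{j!}\tau^j(K^j\bm{h})$. Each coefficient vector $K^j\bm{h}$ does not depend on $\tau$, so this is a vector polynomial in $\tau$ of degree at most $q-1$. The offset term $\pm\tau\bm{v}$ is a polynomial of degree at most $1$. A sum of polynomials has degree at most the maximum of their degrees, so the total degree is at most $\max\{q-1,1\}$.

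The step that needs the most care is the truncation. It relies on the exact vanishing $K^j=0$ for $j\ge q$ and on writing the series in terms of powers of $K$ alone, so no commutation issues arise. Everything else is routine bookkeeping about matrix entries.
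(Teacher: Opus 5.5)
Your proof is correct and follows the same route as the paper's. You truncate the exponential series using $K^j=0$ for $j\ge q$, read off the unit-diagonal upper-triangular structure from $K^0=I$ and the strictly upper-triangular positive powers, and bound the degree after adding the linear offset $\pm\tau\bm{v}$; the only difference is that you verify explicitly, via $(K^j)_{ik}=0$ unless $k-i\ge j$, two facts the paper merely asserts: that positive powers of $K$ stay strictly upper triangular and that $K^{d_h}=0$.
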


The finite expansion follows because every power $K^j$ with $j\ge q$ vanishes; App.~\ref{sec:proofs} provides the derivation. The expansion can contain higher-order terms while excluding exponential dependence on lag. Temporal direction is specified by the sign of $\pm\tau$, whereas triangularity determines the algebraic structure across latent coordinates. Polynomial dependence alone guarantees neither bounded magnitudes nor global distance monotonicity.

\subsection{Computational complexity analysis}
\label{sec:complexity_analysis}
The finite expansion in Eq.~\eqref{eq:finite_transition} can be evaluated directly on a representation vector. Retaining terms through order $p$ requires $p$ matrix--vector products, giving a cost of $O(pd_h^2)$, plus $O(d_h)$ for the additive offset. App.~\ref{sec:complexity_proof} gives the matrix- and vector-wise evaluations using Horner's method.

For exact evaluation, $p=q-1$, so the vector-wise cost is $O((q-1)d_h^2+d_h)$ and remains cubic in the worst case $q=d_h$. Using a fixed lower order $p<q-1$ reduces computation but approximates the complete transition. The main OliO formulation uses the complete transition; lower-order approximations are evaluated separately in Sec.~\ref{sec:ablation_study}. Their accuracy and measured transition-computation speed reflect a trade-off between retained polynomial terms and computational cost. Since the projection and transition modules are used only during pre-training, downstream forecasting retains the inference architecture and cost of the original backbone.

\subsection{Self-supervised objective}
OliO learns the transition relation by aligning each transitioned pseudo-output with its independently encoded target. This objective uses the observed historical windows as supervision without mining negative pairs. Following non-contrastive representation alignment~\cite{SimSiam,BYOL,AliO}, we apply stop-gradient, denoted by $sg(\cdot)$, to the target in each direction. The directional losses are
\begin{align}
    \mathcal{L}_{fw}
    &=1-\operatorname{CosSim}\!\left(
        sg(\bm{h}^{(t+\tau)}),\hat{\bm{h}}^{(t+\tau)}
    \right),\\
    \mathcal{L}_{bw}
    &=1-\operatorname{CosSim}\!\left(
        sg(\bm{h}^{(t)}),\hat{\bm{h}}^{(t)}
    \right).
\end{align}
The total objective averages the two directions:
\begin{equation}
    \mathcal{L}_{\mathrm{OliO}}
    =\frac{1}{2}\left(\mathcal{L}_{fw}+\mathcal{L}_{bw}\right).
    \label{eq:olio_loss}
\end{equation}
Each window serves as a source in one direction and a stopped-gradient target in the other. Stop-gradient breaks gradient symmetry and empirically mitigates representation collapse~\cite{SimSiam,BYOL,AliO}. 

Through this objective, the backbone and projection head learn representations whose transitions are conditioned on the actual temporal displacement between the windows. After pre-training, the projection head and transition module are discarded, and downstream forecasting proceeds through the original backbone and its forecasting head.

\section{Experiments}
\label{sec:experiment}
\begin{table}[!htb]
\centering
\caption{Long-term forecasting results for PatchTST~\cite{PatchTST} and iTransformer~\cite{iTransformer} backbones. Comparison between random initialization (Rand.), OliO (ours), and state-of-the-art SSL methods. TS2Vec~\cite{TS2Vec} is denoted as TS2V, SimMTM~\cite{SimMTM} as MTM, SoftCLT~\cite{SoftCLT} as SCLT, TimeSiam~\cite{TimeSiam} as Siam, and TimesURL~\cite{TimesURL} as URL. Best results are in \textbf{bold} and the second best are \underline{underlined}.}
\label{tab:results_patchtst_itransformer}
\setlength{\tabcolsep}{2pt}
\begin{adjustbox}{max width=\linewidth}
\begin{tabular}{llcccccccccccccc}
\toprule
\multicolumn{2}{c}{\multirow{3}{*}{Dataset}} & \multicolumn{7}{c}{PatchTST} & \multicolumn{7}{c}{iTransformer} \\
\cmidrule(lr){3-9} \cmidrule(lr){10-16}
& & Rand. & \textbf{OliO} & TS2V & MTM & SCLT & Siam & URL & Rand. & \textbf{OliO} & TS2V & MTM & SCLT & Siam & URL \\
\midrule
\multirow{4}{*}{\rotatebox{90}{ETTh1}} & 96  & .3735 & .3716 & \underline{.3708} & .3985 & .3754 & .3871 & \underline{.3708} & \textbf{.3903} & .3969 & .4027 & .3993 & .4040 & \underline{.3914} & .3924 \\
& 192 & .4094 & .4082 & \underline{.4080} & .4299 & .4095 & .4192 & \textbf{.4062} & .4448 & \underline{.4442} & .4507 & .4490 & .4540 & \textbf{.4408} & .4448 \\
& 336 & .4343 & \textbf{.4252} & \underline{.4311} & .4473 & .4357 & .4380 & .4329 & .4884 & \underline{.4865} & .4875 & .4927 & .4932 & \textbf{.4824} & .4868 \\
& 720 & .4582 & \textbf{.4356} & \underline{.4483} & .4560 & .4589 & .4547 & .4520 & .5120 & \underline{.4919} & .4978 & .5043 & .5203 & \textbf{.4812} & .5016 \\ \midrule
\multirow{4}{*}{\rotatebox{90}{ETTh2}} & 96  & .2833 & \textbf{.2769} & .2778 & .2783 & .2802 & .2808 & \underline{.2773} & \textbf{.2988} & \underline{.3002} & .3079 & .3181 & .3016 & .3050 & .3037 \\
& 192 & .3638 & \textbf{.3392} & \underline{.3406} & .3431 & .3441 & .3440 & .3410 & \textbf{.3785} & .3811 & .3923 & .4040 & .3871 & \underline{.3794} & .3891 \\
& 336 & .3918 & \textbf{.3599} & .3606 & .3642 & .3643 & .3676 & \underline{.3601} & .4250 & \underline{.4230} & .4333 & .4496 & .4271 & \textbf{.4180} & .4313 \\
& 720 & .3986 & .3934 & \textbf{.3910} & .3949 & .3960 & \underline{.3925} & .3926 & .4357 & \textbf{.4271} & .4373 & .4386 & .4412 & \underline{.4289} & .4398 \\ \midrule
\multirow{4}{*}{\rotatebox{90}{ETTm1}} & 96  & .2859 & \textbf{.2830} & .2860 & .2839 & .2917 & .2948 & \underline{.2837} & .3456 & \textbf{.3291} & .3367 & .3341 & .3368 & .3435 & \underline{.3315} \\
& 192 & .3266 & \textbf{.3242} & .3279 & .3255 & .3331 & .3363 & \underline{.3252} & .3831 & \textbf{.3711} & .3777 & .3746 & .3764 & .3837 & \underline{.3732} \\
& 336 & .3642 & \underline{.3605} & .3668 & \textbf{.3592} & .3692 & .3672 & .3627 & .4205 & \textbf{.4064} & .4138 & \underline{.4083} & .4097 & .4175 & .4124 \\
& 720 & .4140 & \textbf{.4060} & .4162 & \underline{.4120} & .4154 & .4165 & .4126 & .4896 & \textbf{.4730} & .4844 & \underline{.4784} & .4805 & .4839 & .4812 \\ \midrule
\multirow{4}{*}{\rotatebox{90}{ETTm2}} & 96  & .1660 & \textbf{.1636} & .1669 & .1725 & .1694 & .1692 & \underline{.1658} & .1857 & \textbf{.1812} & .1901 & .1836 & .1875 & \underline{.1826} & .1838 \\
& 192 & .2243 & \textbf{.2189} & .2220 & .2278 & .2249 & .2279 & \underline{.2210} & .2528 & \textbf{.2486} & .2548 & \underline{.2488} & .2539 & .2536 & .2517 \\
& 336 & .2770 & \textbf{.2727} & .2760 & .2835 & .2803 & .2802 & \underline{.2752} & .3146 & \underline{.3105} & .3181 & \textbf{.3095} & .3173 & .3203 & .3134 \\
& 720 & \underline{.3670} & \textbf{.3644} & .3684 & .3706 & .3722 & .3672 & .3701 & .4138 & \textbf{.4088} & .4184 & .4131 & .4151 & .4191 & \underline{.4129} \\ \midrule
\multirow{4}{*}{\rotatebox{90}{Weather}} & 96  & .1529 & \textbf{.1507} & .1615 & .1646 & .1540 & \underline{.1511} & .1528 & \underline{.1776} & \textbf{.1756} & .2139 & .2041 & .2156 & .1778 & .1834 \\
& 192 & .1959 & \textbf{.1948} & .2036 & .2043 & .1975 & \underline{.1949} & .1976 & .2263 & \textbf{.2256} & .2591 & .2499 & .2570 & \underline{.2262} & .2314 \\
& 336 & .2478 & \textbf{.2453} & .2528 & .2549 & \underline{.2466} & .2467 & .2467 & .2829 & \underline{.2825} & .3075 & .3008 & .3085 & \textbf{.2819} & .2865 \\
& 720 & .3221 & \textbf{.3193} & .3242 & .3253 & \underline{.3201} & .3207 & .3216 & .3590 & \underline{.3587} & .3774 & .3706 & .3798 & \textbf{.3574} & .3623 \\ \midrule
\multirow{4}{*}{\rotatebox{90}{ECL}}     & 96  & .1356 & \underline{.1297} & .1314 & .1387 & .1325 & .1300 & \textbf{.1287} & \underline{.1503} & \textbf{.1497} & .2611 & .3656 & .1646 & .2849 & .1508 \\
& 192 & .1515 & \underline{.1476} & .1481 & .1541 & .1493 & .1481 & \textbf{.1471} & \textbf{.1649} & .1671 & .2646 & .3655 & .1789 & .2908 & \underline{.1659} \\
& 336 & .1682 & \textbf{.1633} & .1653 & .1707 & .1656 & .1672 & \underline{.1638} & .1815 & \underline{.1800} & .2880 & .3800 & .1982 & .3050 & \textbf{.1797} \\
& 720 & .2064 & \textbf{.2001} & \underline{.2013} & .2026 & .2017 & .2042 & .2123 & .2131 & \textbf{.2107} & .3216 & .4121 & .2394 & .3138 & \underline{.2114} \\ \midrule
\multicolumn{2}{l}{\textbf{Top1 count}} & 0 & \textbf{18} & 1 & 1 & 0 & 0 & 3 & 4 & \textbf{12} & 0 & 1 & 0 & 6 & 1 \\
\multicolumn{2}{l}{\underline{Top2 count}} & 1 & 3 & 6 & 1 & 2 & 3 & \underline{9} & 2 & \underline{9} & 0 & 3 & 0 & 5 & 5 \\ \bottomrule
\end{tabular}
\end{adjustbox}
\end{table}

We evaluate forecasting performance, representation quality, and transferability, then test the signed-lag supervision and transition structure through controlled ablations.

\subsection{Experimental setup}
\textbf{Datasets and backbones.}
We use ETTh1, ETTh2, ETTm1, ETTm2, Weather, and Electricity (ECL)~\cite{Autoformer}, with prediction horizons $\{96,192,336,720\}$. We evaluate three forecasting backbones: PatchTST~\cite{PatchTST}, which uses temporal patches; iTransformer~\cite{iTransformer}, which attends across variable tokens; and the multimodal TimeVLM~\cite{TimeVLM}. Dataset statistics appear in App.~\ref{sec:datasets}.

\textbf{Baselines and evaluation.}
We compare random initialization with TS2Vec~\cite{TS2Vec}, SimMTM~\cite{SimMTM}, SoftCLT~\cite{SoftCLT}, TimeSiam~\cite{TimeSiam}, TimesURL~\cite{TimesURL}, and OliO using the same forecasting backbones. These comparisons evaluate SSL objectives under shared-backbone adaptation; temporal loss components requiring a timestamp axis are omitted when that axis is unavailable, as documented in App.~\ref{sec:implementation_details}. Following TimeSiam, we fully finetune the pretrained backbone using $L_2$ loss. Input lengths follow the backbone configurations, and standard OliO pre-training samples overlapping window pairs. We retain the baseline-specific hyperparameter searches in App.~\ref{sec:implementation_details}. Configurations are selected by validation MSE, fixed, and evaluated on the test split with seeds 2023, 2024, and 2025. Test performance is excluded from model and hyperparameter selection. Unless stated otherwise, results report three-seed mean MSE; App.~\ref{app:detailed_results} provides per-setting MSE and MAE with sample standard deviations. Few-shot results using 1\%, 5\%, and 10\% of the training data are provided in App.~\ref{app:fewshot_results}.

\subsection{Long-term forecasting}
Tables~\ref{tab:results_patchtst_itransformer} and~\ref{tab:results_timevlm} report forecasting MSE for the three backbones, and Fig.~\ref{fig:performance_improvement} summarizes relative gains across horizons. Against random initialization, OliO reduces MSE by up to 8.1\%, 4.8\%, and 9.2\% on PatchTST, iTransformer, and TimeVLM, respectively. On PatchTST, it ranks first in 18 of 24 settings. The improvement varies by dataset and backbone; some SSL methods also underperform random initialization, showing that pre-training benefits depend on the downstream setting.

\begin{table}[!tb]
\begin{minipage}[t]{0.57\linewidth}
\vspace{0pt}
\centering
\captionof{table}{Long-term forecasting results for TimeVLM~\cite{TimeVLM}. TS2Vec is denoted as TS2V, SimMTM as MTM, SoftCLT as SCLT, TimeSiam as Siam, and TimesURL as URL. Best results are in \textbf{bold} and the second best are \underline{underlined}.}
\label{tab:results_timevlm}
\setlength{\tabcolsep}{2pt}
\begin{adjustbox}{max width=\linewidth}
\begin{tabular}{llccccccc}
\toprule
\multicolumn{2}{c}{\multirow{2}{*}{Dataset}} & \multicolumn{7}{c}{TimeVLM} \\
\cmidrule(lr){3-9}
& & Rand. & \textbf{OliO} & TS2V & MTM & SCLT & Siam & URL \\ \midrule
\multirow{4}{*}{\rotatebox{90}{ETTh1}} & 96  & .3704 & .3669 & .3666 & .3666 & \textbf{.3660} & .3779 & .3672 \\
& 192 & .4074 & .4029 & .4040 & .4038 & \textbf{.4017} & .4074 & \underline{.4027} \\
& 336 & .4659 & \underline{.4232} & .4287 & .4341 & \underline{.4232} & .4285 & .4250 \\
& 720 & .4981 & .4587 & .4634 & .4930 & \textbf{.4556} & \underline{.4577} & .4596 \\ \midrule
\multirow{4}{*}{\rotatebox{90}{ETTh2}} & 96  & .2818 & \underline{.2755} & .2766 & .2862 & .2762 & .2777 & \textbf{.2754} \\
& 192 & .3497 & \textbf{.3359} & .3369 & .3409 & .3377 & .3409 & \underline{.3368} \\
& 336 & .3767 & \textbf{.3613} & \underline{.3621} & .3664 & .3624 & .3713 & .3625 \\
& 720 & .4117 & \underline{.4009} & .4069 & .4073 & .4119 & .4218 & \textbf{.3977} \\ \midrule
\multirow{4}{*}{\rotatebox{90}{ETTm1}} & 96  & .2908 & \underline{.2895} & .2896 & .2899 & .2903 & .2940 & \underline{.2895} \\
& 192 & .3354 & \textbf{.3271} & .3310 & .3348 & \underline{.3308} & .3331 & .3309 \\
& 336 & .3676 & \textbf{.3593} & .3617 & .3663 & .3615 & .3619 & \underline{.3605} \\
& 720 & .4209 & \underline{.4096} & .4102 & .4104 & .4102 & .4165 & \textbf{.4081} \\ \midrule
\multirow{4}{*}{\rotatebox{90}{ETTm2}} & 96  & .1690 & .1674 & \underline{.1673} & .1691 & .1688 & .1678 & \underline{.1673} \\
& 192 & .2251 & \underline{.2216} & .2233 & .2238 & \textbf{.2211} & .2227 & .2229 \\
& 336 & .2745 & .2733 & \underline{.2702} & .2728 & \textbf{.2699} & .2716 & .2712 \\
& 720 & .3676 & \textbf{.3521} & .3540 & .3581 & .3544 & .3590 & .3540 \\ \midrule
\multirow{4}{*}{\rotatebox{90}{Weather}} & 96  & .1495 & .1490 & .1490 & .1492 & \textbf{.1488} & .1520 & .1491 \\
& 192 & \underline{.1937} & \underline{.1937} & .1938 & .1946 & .1942 & .1956 & .1938 \\
& 336 & .2445 & \textbf{.2433} & \underline{.2444} & .2451 & .2453 & .2453 & .2445 \\
& 720 & .3180 & \underline{.3148} & \textbf{.3147} & .3163 & .3170 & .3151 & .3149 \\ \midrule
\multirow{4}{*}{\rotatebox{90}{ECL}}     & 96  & .1353 & .1319 & \textbf{.1315} & \underline{.1318} & .1325 & .1346 & .1363 \\
& 192 & .1513 & .1483 & \textbf{.1478} & .1487 & \underline{.1481} & .1533 & .1534 \\
& 336 & .1673 & \textbf{.1643} & .1670 & .1665 & \underline{.1651} & .1692 & .1727 \\
& 720 & .2080 & \textbf{.2044} & .2091 & .2080 & .2077 & .2080 & \underline{.2061} \\ \midrule
\multicolumn{2}{l}{\textbf{Top1 count}} & 0 & \textbf{8} & 3 & 0 & 6 & 0 & 3 \\
\multicolumn{2}{l}{\underline{Top2 count}} & 1 & \underline{8} & 4 & 1 & 4 & 1 & 6 \\ \bottomrule
\end{tabular}
\end{adjustbox}

\end{minipage}\hfill
\begin{minipage}[t]{0.40\linewidth}
\vspace{0pt}
\centering
\includegraphics[width=\linewidth]{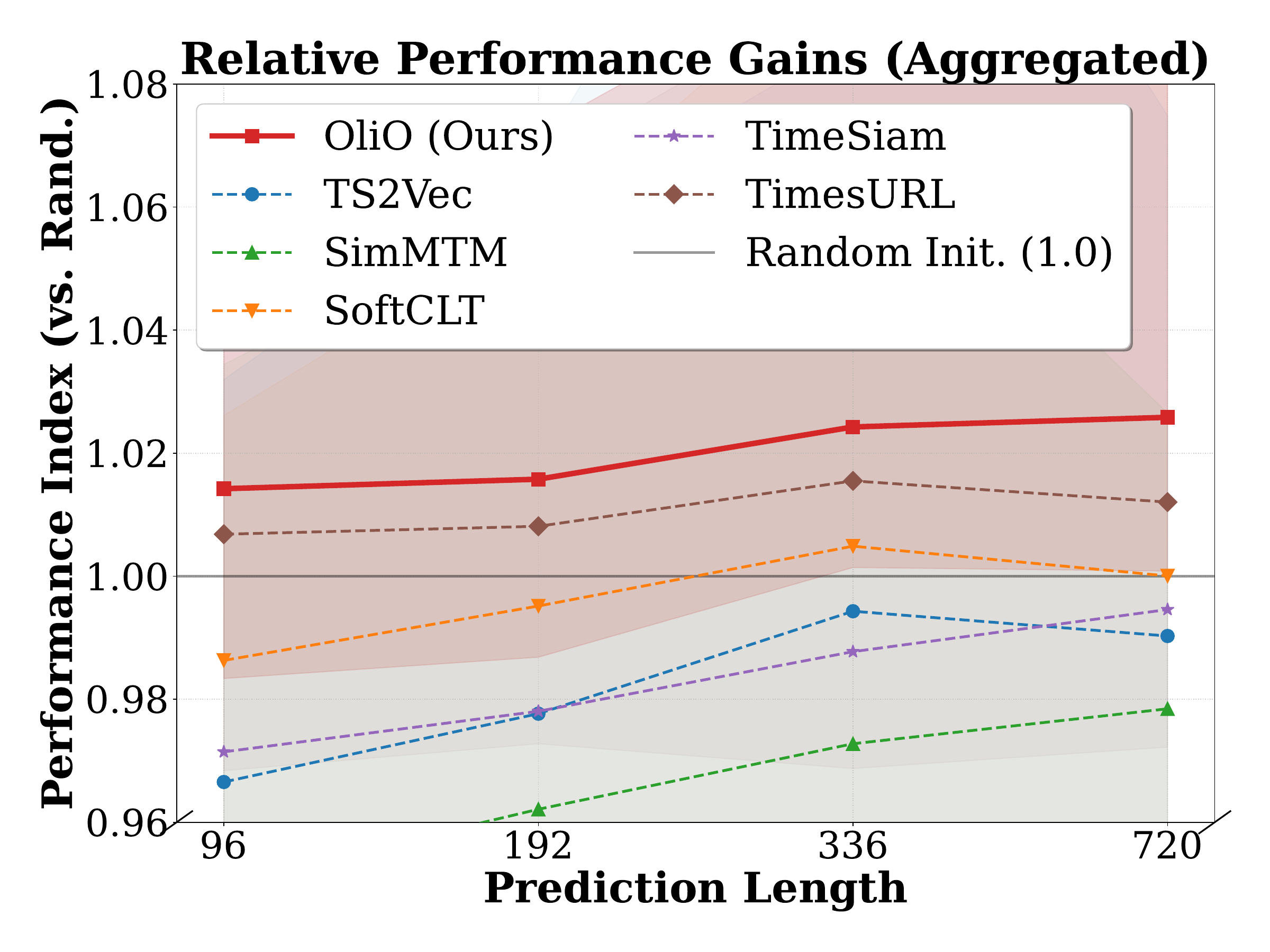}
\captionof{figure}{Relative performance improvement of our proposed method, OliO, compared to state-of-the-art SSL baselines. The performance index is defined as $MSE_{\text{Rand}} / MSE_{\text{Meth.}}$ across varying prediction lengths, where a value of 1.0 represents the random initialization baseline, and \textbf{higher values indicate better performance}. Solid lines indicate the mean index aggregated across three backbone architectures (PatchTST, iTransformer, TimeVLM) and all datasets, with shaded regions representing the min-max performance envelope. Note the truncated Y-axis $[0.96, 1.08]$ for focused visualization of performance gains.}
\label{fig:performance_improvement}

\end{minipage}
\end{table}

\textbf{Effect size and seed variation.}
TimesURL is the strongest baseline by aggregate MSE. Using unrounded three-seed means, OliO outperforms it in 58/72 backbone--dataset--horizon settings, with mean, median, and maximum relative MSE improvements of 0.88\%, 0.55\%, and 5.75\%. Averaging the four horizons within each dataset--backbone group, OliO wins 17/18 groups; a two-sided exact Wilcoxon signed-rank test gives $W=3$, $p=3.81\times10^{-5}$. It also wins all six dataset-level averages. These grouped comparisons reduce dependence among horizon-level results. Table~\ref{tab:representation_summary} summarizes full-finetuning performance and seed variation alongside frozen-backbone results. Each entry $\bar m\pm\bar s$ averages the per-setting three-seed means and sample standard deviations, respectively; $\bar s$ describes typical seed variation, rather than the standard deviation of an aggregated score. Further statistical comparisons are in App.~\ref{app:fullshot_statistics}.

\begin{table}[!htb]
\centering
\caption{OliO--TimesURL comparison under full finetuning and frozen-backbone linear probing.}
\label{tab:representation_summary}
\setlength{\tabcolsep}{5pt}
\begin{adjustbox}{max width=\linewidth}
\begin{tabular}{lcccc}
\toprule
& \multicolumn{2}{c}{Full finetuning} & \multicolumn{2}{c}{Frozen backbone} \\
\cmidrule(lr){2-3}\cmidrule(lr){4-5}
Backbone & TimesURL & OliO & TimesURL & OliO \\
\midrule
PatchTST & 0.2925 $\pm$ 0.0007 & \textbf{0.2898 $\pm$ 0.0007} & \textbf{0.2994 $\pm$ 0.0011} & 0.2997 $\pm$ 0.0008 \\
iTransformer & 0.3300 $\pm$ 0.0017 & \textbf{0.3262 $\pm$ 0.0022} & 0.3412 $\pm$ 0.0007 & \textbf{0.3369 $\pm$ 0.0007} \\
TimeVLM & 0.2918 $\pm$ 0.0018 & \textbf{0.2906 $\pm$ 0.0014} & -- & -- \\
\bottomrule
\end{tabular}
\end{adjustbox}
\end{table}

\subsection{Representation quality and transferability}
\label{sec:representation_quality}
\textbf{Frozen-backbone linear probing.}
To isolate representation quality from full finetuning, we freeze the pretrained backbone and train an identical linear forecasting head. Across PatchTST and iTransformer, six datasets, and four horizons (48 settings), OliO outperforms TimesURL in 35/48 settings and 8/12 dataset--backbone averages. Overall MSE decreases from 0.3203 to 0.3183. As Table~\ref{tab:representation_summary} shows, iTransformer improves from 0.3412 to 0.3369, while PatchTST is comparable (0.2994 versus 0.2997). Thus, the aggregate representation-quality gain does not imply an improvement for every backbone. Dataset-level means and seed variation appear in App.~\ref{app:representation_quality}.

\textbf{Zero-shot forecasting.}
We additionally use LightGTS-4M~\cite{wang2025lightgts} with its original architecture, forecasting settings, and pre-training protocol. A single pretrained checkpoint is evaluated directly on eight downstream datasets disjoint from pre-training, without downstream parameter updates or checkpoint selection. Table~\ref{tab:zeroshot_main} compares OliO, TimesURL, and the default mask-reconstruction objective over 32 dataset--horizon settings. OliO achieves average MSE 0.3034, compared with 0.3114 for mask reconstruction and 0.3219 for TimesURL. Relative to mask reconstruction, average MAE decreases from 0.3469 to 0.3391. The MSE gains include Electricity, Solar, and Traffic, while Weather favors mask reconstruction. These results extend the evaluation beyond full finetuning and complement the frozen-backbone comparison. Complete horizon-level results are in App.~\ref{app:zero_shot}.

\begin{table}[!htb]
\centering
\caption{Zero-shot forecasting results with the LightGTS-4M time-series foundation model~\cite{wang2025lightgts}. Mask Recon. denotes its default mask-reconstruction pre-training. Values average four forecasting horizons. Best results are in \textbf{bold}, and tied best values are \underline{underlined}. Lower MSE is better.}
\label{tab:zeroshot_main}
\setlength{\tabcolsep}{8pt}
\begin{tabular}{lccc}
\toprule
Dataset & TimesURL & Mask Recon. & OliO \\
\midrule
ETTh1 & 0.4096 & 0.3958 & \textbf{0.3937} \\
ETTh2 & 0.3501 & 0.3474 & \textbf{0.3420} \\
ETTm1 & 0.3461 & 0.3335 & \textbf{0.3329} \\
ETTm2 & 0.2566 & \underline{0.2461} & \underline{0.2461} \\
Weather & 0.2357 & \textbf{0.2134} & 0.2182 \\
ECL & 0.2556 & 0.2476 & \textbf{0.2325} \\
Solar & 0.1679 & 0.1712 & \textbf{0.1612} \\
Traffic & 0.5540 & 0.5364 & \textbf{0.5010} \\
\midrule
Average & 0.3219 & 0.3114 & \textbf{0.3034} \\
\bottomrule
\end{tabular}
\end{table}

\subsection{Ablation studies}
\label{sec:ablation_study}
\textbf{Actual signed temporal displacement.}
Table~\ref{tab:signed_lag} tests which aspects of the signed lag provide useful supervision on PatchTST and iTransformer across 48 settings. Each control changes only the supplied lag relative to the original formulation.
\emph{Fixed} retains the correct direction but replaces the pair-specific magnitude with a constant. Its 1.10\% higher MSE indicates that the actual separation carries useful information beyond temporal order.
\emph{Random Magnitude} retains the correct sign but supplies a mismatched magnitude. The 2.57\% increase shows that varying the conditioning value alone does not recover the benefit of matching it to the actual window separation.
\emph{Random Direction} preserves the actual magnitude and opposite signs between branches, but randomizes their correspondence to earlier--later order. Its 1.82\% increase isolates the value of assigning signs according to the true temporal direction, beyond simply using opposite signs in the two branches.
\emph{Unsigned} supplies the correct magnitude with a positive sign in both branches. Its 4.86\% increase, the largest among these controls, supports distinguishing forward from backward transport. Together, the controls support conditioning the transition on both the actual separation and its temporal direction.
With the transition module and alignment objective held fixed, these results indicate that the actual temporal relation between windows provides useful supervision for forecasting.

\textbf{Exponential transport and additive bias.}
Table~\ref{tab:transition_parameterization} examines the contributions of matrix transport and the additive offset over the same 48 settings.
\emph{Bias-only} removes matrix transport, applying the same displacement to every representation at a given signed lag. Its 3.83\% higher MSE supports allowing the displacement to depend on the source representation.
\emph{K-only} retains exponential transport but removes the offset. The 1.40\% increase indicates that a learned translation complements the multiplicative transformation of the source state; neither component alone matches the complete transition.
\emph{MLP} tests whether a generic nonlinear mapping can replace the explicit transition structure. Its 3.13\% increase supports the proposed structured parameterization over this nonlinear alternative.
\emph{Spec-K} replaces the strictly upper-triangular generator with a full generator using spectral normalization. It remains numerically stable but has 1.18\% higher MSE. Thus, the performance gap persists when both variants train without numerical divergence, supporting a contribution of the generator structure beyond avoiding numerical failure. Dataset-level results are in App.~\ref{app:transition_parameterization}.

\begin{table}[!htb]
\centering

\begin{minipage}[t]{0.52\linewidth}
\centering
\vspace{0pt}
\captionof{table}{
Effect of signed temporal displacement over 48 settings.
}

\label{tab:signed_lag}
\setlength{\tabcolsep}{2pt}
\begin{adjustbox}{max width=\linewidth}
\begin{tabular}{lccc}
\toprule
Variant & Avg. MSE & $\Delta$ MSE & Wins \\
\midrule
Original          & \textbf{0.3080} & --       & -- \\
Fixed             & 0.3114          & +1.10\%  & 40/48 \\
Random Direction  & 0.3136          & +1.82\%  & 40/48 \\
Random Magnitude  & 0.3159          & +2.57\%  & 41/48 \\
Unsigned          & 0.3230          & +4.86\%  & 45/48 \\
\bottomrule
\end{tabular}
\end{adjustbox}
\end{minipage}
\hfill
\begin{minipage}[t]{0.45\linewidth}
\centering
\vspace{0pt}
\captionof{table}{
Ablation of alternative transition parameterizations over 48 settings.
}

\label{tab:transition_parameterization}
\setlength{\tabcolsep}{2pt}
\begin{adjustbox}{max width=\linewidth}
\begin{tabular}{lccc}
\toprule
Variant & Avg. MSE & $\Delta$ MSE & Wins \\
\midrule
\textbf{OliO} & \textbf{0.3080} & -- & -- \\
Spec-K        & 0.3116 & +1.18\% & 39/48 \\
K-only        & 0.3123 & +1.40\% & 42/48 \\
MLP           & 0.3176 & +3.13\% & 45/48 \\
Bias-only     & 0.3198 & +3.83\% & 47/48 \\
\bottomrule
\end{tabular}
\end{adjustbox}
\end{minipage}

\end{table}

\textbf{Bidirectionality, approximation order, and efficiency.}
Table~\ref{tab:ablation_mean} evaluates the original three-backbone, six-dataset setting (72 horizon-level configurations). Bidirectional learning achieves average MSE 0.3022, versus 0.3240 forward-only and 0.3245 backward-only, and improves all six dataset averages. This supports supervising both directions without requiring exact inverse composition. Table~\ref{tab:ablation_order} compares the generator constraint and retained order $p$. The unconstrained full generator diverges in these experiments, whereas low-order polynomial approximations remain finite but can lose accuracy, especially on ECL. Strict upper triangularity guarantees a finite expansion, rather than bounded magnitudes for arbitrary lags. Fig.~\ref{fig:speed_up} shows up to $3.8\times$ faster transition computation with lower orders during pre-training; downstream inference uses the unchanged forecasting backbone. App.~\ref{app:ablation_study} provides backbone-specific results and orders $p=1,\ldots,5$.

\begin{table}[!htb]
\centering
\begin{minipage}[t]{0.30\linewidth}
\centering
\vspace{0pt}
\captionof{table}{\textbf{Effect of bidirectional modeling.} Average MSE across all backbones.}
\label{tab:ablation_mean}
\vspace{3pt}
\setlength{\tabcolsep}{3pt}
\begin{adjustbox}{max width=\linewidth}
\begin{tabular}{lccc}
\toprule
Dataset & \textbf{Bi} & Fwd & Bwd \\
\midrule
ETTh1   & \textbf{.4260} & .4485 & .4465 \\
ETTh2   & \textbf{.3562} & .3625 & .3618 \\
ETTm1   & \textbf{.3616} & .3938 & .3930 \\
ETTm2   & \textbf{.2653} & .2714 & .2717 \\
Weather & \textbf{.2378} & .2495 & .2452 \\
ECL     & \textbf{.1664} & .2185 & .2285 \\
\bottomrule
\end{tabular}
\end{adjustbox}
\end{minipage}
\hfill
\begin{minipage}[t]{0.40\linewidth}
\centering
\vspace{0pt}
\captionof{table}{\textbf{Structural constraints and order $p$.} Evaluation (MSE) of transition matrix constraints and Taylor orders.}
\label{tab:ablation_order}
\vspace{3pt}
\setlength{\tabcolsep}{2pt}
\begin{adjustbox}{max width=\linewidth}
\begin{tabular}{lccccc}
\toprule
Dataset & Full & $p=1$ & $p=2$ & $p=3$ & \textbf{Exact} \\
\midrule
ETTh1   & NaN & .4434 & .4315 & .4315 & \textbf{.4260} \\
ETTh2   & NaN & .3661 & .3624 & .3606 & \textbf{.3562} \\
ETTm1   & NaN & .3752 & .3695 & .3686 & \textbf{.3616} \\
ETTm2   & NaN & .2674 & .2661 & .2678 & \textbf{.2653} \\
Weather & NaN & .2470 & .2480 & .2382 & \textbf{.2378} \\
ECL     & NaN & .2137 & .2063 & .2029 & \textbf{.1664} \\
\bottomrule
\end{tabular}
\end{adjustbox}
\end{minipage}
\hfill
\begin{minipage}[t]{0.26\linewidth}
\vspace{0pt}
\centering
\includegraphics[width=\linewidth]{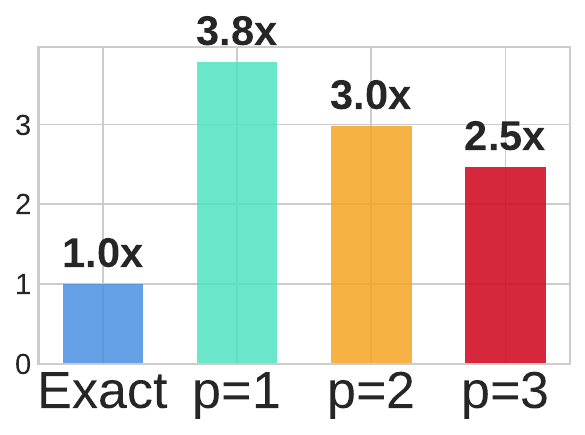}
\captionof{figure}{\textbf{Approximation speedup.} Latency improvement relative to full matrix exponential.}
\label{fig:speed_up}
\end{minipage}
\end{table}

\subsection{Overlap controls and finite-lag behavior}
\textbf{Shared observations.}
Varying overlap produces no monotonic relationship with MSE. Comparing 100\% and 0\% overlap across six representative dataset--backbone settings yields differences within 0.7\% in five cases; 0\% overlap improves over the model without OliO in five cases. These endpoints change both overlap and lag. To isolate shared observations, Table~\ref{tab:matched_masking_main} instead holds the window pair, signed lag, 50\% mask budget, backbone, training schedule, and hyperparameters fixed. We zero either the shared half of the second window or an equal-sized unique half. The aggregate MSE gap is only 0.22\%, with 23 versus 25 lower-MSE settings. Thus, shared observations alone do not explain the gains under this controlled intervention. Full sweeps and endpoint results are in App.~\ref{app:overlap_analysis}; masking details are in App.~\ref{app:matched_masking}.

\textbf{Higher-order terms and distance ordering.}
Table~\ref{tab:higher_order_transition_main} compares $e^{\tau K}$ with $I+\tau K$ using relative Frobenius error. On learned ETTh1 and ETTh2 operators, the discrepancy increases with lag and reaches 1.004 and 0.430 at $\tau=64$, respectively. Hence, the learned finite-lag transport does not generally reduce to its first-order term. Separately, checkpoint measurements over adjacent lags $k=0,\ldots,7$ show complete distance ordering for 88\%--100\% of evaluated representations, with positive mean and median squared-displacement increments at every adjacent lag. This is empirical evidence over the evaluated range, rather than a global monotonicity guarantee. App.~\ref{app:higher_order_transition} gives the approximation definition and additional lags; App.~\ref{app:distance_ordering} details the checkpoint diagnostic.

\begin{table}[!tb]
\begin{minipage}[t]{0.59\linewidth}
\vspace{0pt}
\centering
\captionof{table}{Matched masking at 50\% overlap, averaged over six datasets and four horizons. ``Wins'': Overlap-Mask / Unique-Mask. Lower MSE is better.}
\label{tab:matched_masking_main}
\setlength{\tabcolsep}{3pt}
\begin{adjustbox}{max width=\linewidth}
\begin{tabular}{lcccc}
\toprule
Backbone & Overlap mask & Unique mask & Gap & Wins \\
\midrule
PatchTST & 0.29818 & \textbf{0.29767} & +0.17\% & 9 / 15 \\
iTransformer & 0.35022 & \textbf{0.34932} & +0.26\% & 14 / 10 \\
\midrule
Overall & 0.32420 & \textbf{0.32349} & +0.22\% & 23 / 25 \\
\bottomrule
\end{tabular}
\end{adjustbox}
\end{minipage}\hfill
\begin{minipage}[t]{0.38\linewidth}
\vspace{0pt}
\centering
\captionof{table}{Relative approximation error between the complete transition and its first-order approximation.}
\label{tab:higher_order_transition_main}
\setlength{\tabcolsep}{4pt}
\begin{tabular}{ccc}
\toprule
Lag $\tau$ & ETTh1 & ETTh2 \\
\midrule
1 & 0.018 & 0.000 \\
20 & 0.857 & 0.104 \\
40 & 0.965 & 0.258 \\
64 & 1.004 & 0.430 \\
\bottomrule
\end{tabular}
\end{minipage}
\end{table}

\section{Future work and discussion}
\label{sec:discussion}

The signed-lag controls (Table~\ref{tab:signed_lag}) favor conditioning transitions on the actual temporal separation and direction. The overlap sweeps and matched masking control (Table~\ref{tab:matched_masking_main}) suggest that shared observations alone do not explain the forecasting gains. Together, these findings support learning temporal change by aligning window representations through transitions matched to their observed temporal relation.

The ODE determines OliO's homogeneous matrix transition, while the shared signed offset is parameterized separately. The complete affine map need not be the flow of an autonomous affine ODE. Each lag-specific affine map is invertible, but the opposite-lag transition need not be its inverse. Bidirectional alignment therefore supervises both temporal directions without imposing exact inverse composition. The projection and transition modules provide pre-training supervision and are discarded for downstream forecasting.

Koopman-based methods~\cite{Koopman1931,Lusch2018} motivate representing nonlinear observations in coordinates suited to linear transport. OliO uses a nonlinear projection head for this purpose, but a finite-dimensional learned projection does not guarantee exact linearization of arbitrary dynamics. Its nilpotent generator models finite-lag polynomial transients rather than sustained periodic orbits. A single state-independent generator can also be restrictive when transitions vary across states or regimes; state-dependent transport is a possible extension.

Beyond forecasting, signed-lag supervision may be useful for other time-series tasks. SoftCLT~\cite{SoftCLT} has demonstrated the value of distance-aware representations for classification, motivating further evaluation of OliO for structural-break detection and activity classification on benchmarks such as UCR~\cite{UCR} and UEA~\cite{UEA}. Such transfer remains an empirical question rather than a consequence of the finite-polynomial transition theorem.

\section{Conclusion}
We introduced OliO, a self-supervised pre-training framework that learns bidirectional transitions between representations of historical windows using their actual signed temporal displacement. Its transition combines an ODE-derived matrix exponential with a shared offset scaled by the signed lag. A strictly upper-triangular generator makes the transition exactly expressible as a finite polynomial in lag in both temporal directions. Across three forecasting backbones and six datasets, OliO improves on TimesURL, the strongest SSL baseline by aggregate MSE, in 58 of 72 settings, with up to 5.75\% lower MSE. Frozen-backbone probing and zero-shot evaluation provide additional evidence of representation quality and transferability. Controlled ablations support using the actual lag magnitude and direction, combining exponential transport with an additive offset, and supervising both temporal directions. The projection and transition modules are discarded after pre-training, so downstream forecasting retains the original backbone's architecture and inference cost. These findings support using observed temporal displacement to guide representation learning for forecasting.

\begin{ack}
This work was supported by the Technology Innovation Program (Project No. RS-2026-25618509 and RS-2026-25620133) funded by the Ministry of Trade, Industry and Energy (MOTIE, Republic of Korea) and the Institute of Information \& Communications Technology Planning \& Evaluation (IITP) grant funded by the Korea government (MSIT) (No. RS-2026-25617720, AI Star Fellowship Support Program).
\end{ack}

\bibliographystyle{plainnat}
\bibliography{references}

\clearpage
\newpage
\appendix
\section{Further latent space visualizations}
\label{appendix:latent_visualizations}

We visualize representations learned with OliO across three forecasting backbones using UMAP and PHATE. Colors indicate the time indices of historical windows, allowing temporal organization to be inspected within each two-dimensional projection. These qualitative views accompany the forecasting and representation-quality evaluations in Secs.~\ref{sec:experiment} and~\ref{sec:representation_quality}.

\subsection{Visualizations for the PatchTST backbone}
\label{appendix:latent_patchtst}
Figs.~\ref{fig:latent_patchtst_etth} and~\ref{fig:latent_patchtst_ettm1} show PatchTST representations on ETT datasets. The PHATE projections display connected trajectories with gradual changes in time-index color, while UMAP provides a complementary view of the representation geometry.
\begin{figure*}[!h]
    \centering
    \setlength{\tabcolsep}{1.0pt} 
    \begin{tabular}{ll}
        \begin{subfigure}{0.48\textwidth}
            \centering
            \includegraphics[width=\textwidth]{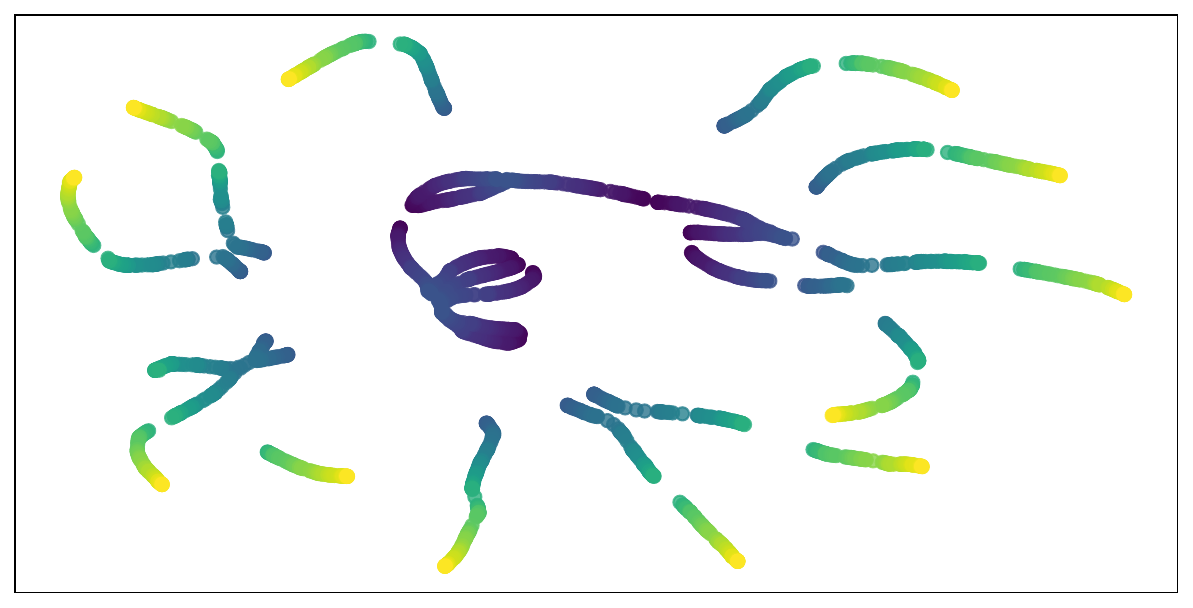}
            \caption{ETTh1 (UMAP)}
        \end{subfigure} &
        \begin{subfigure}{0.48\textwidth}
            \centering
            \includegraphics[width=\textwidth]{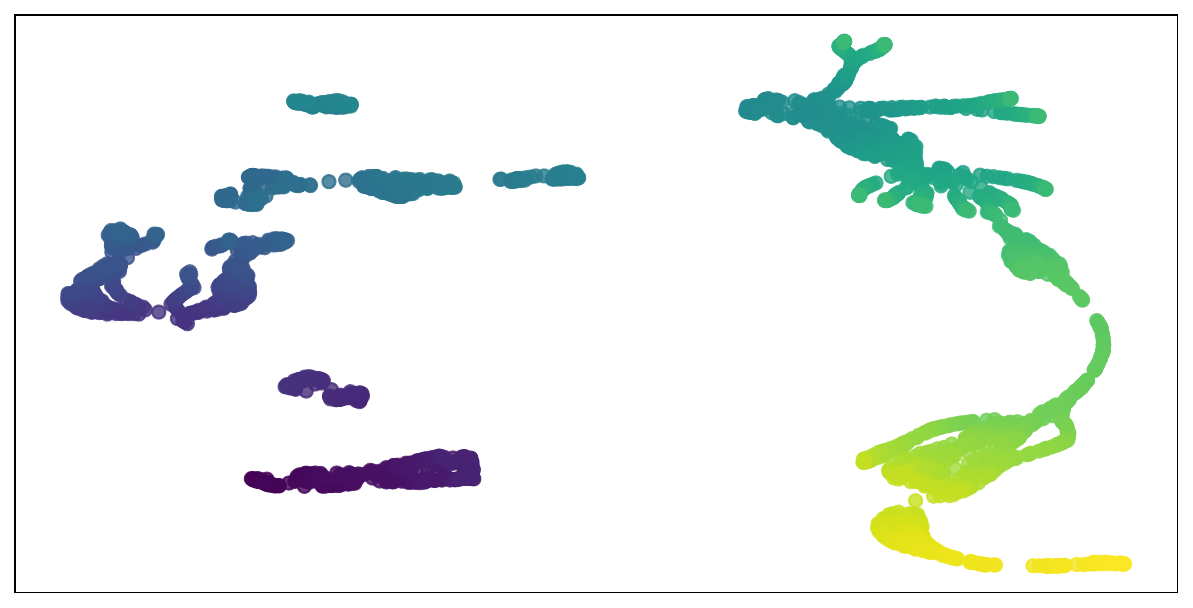}
            \caption{ETTh2 (UMAP)}
        \end{subfigure} \\ [1.5em]
        \begin{subfigure}{0.48\textwidth}
            \centering
            \includegraphics[width=\textwidth]{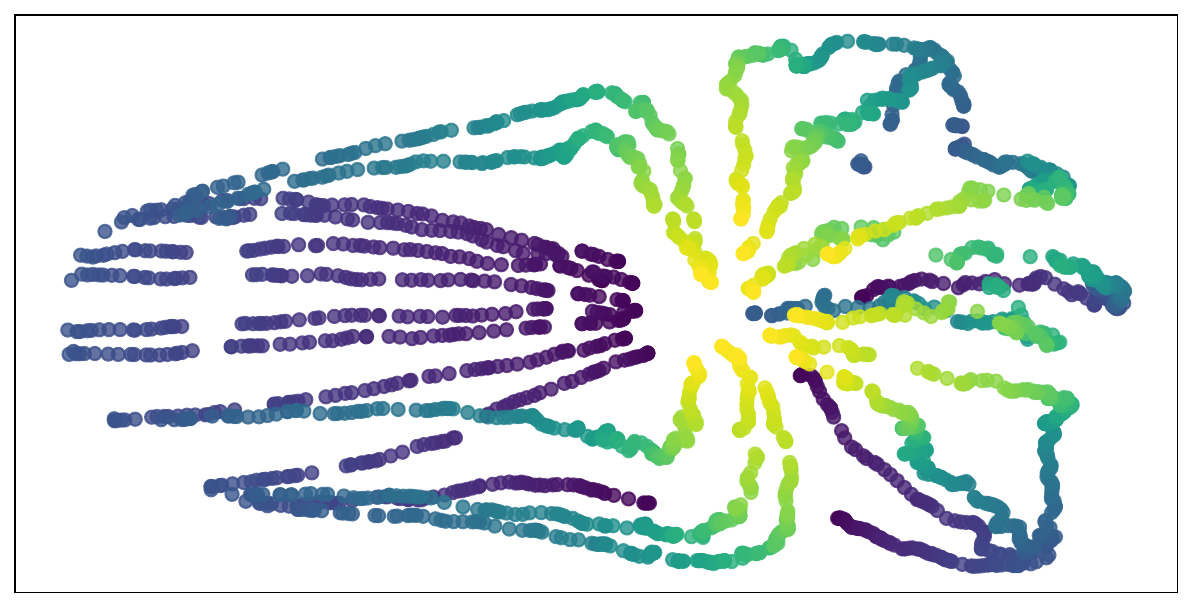}
            \caption{ETTh1 (PHATE)}
        \end{subfigure} &
        \begin{subfigure}{0.48\textwidth}
            \centering
            \includegraphics[width=\textwidth]{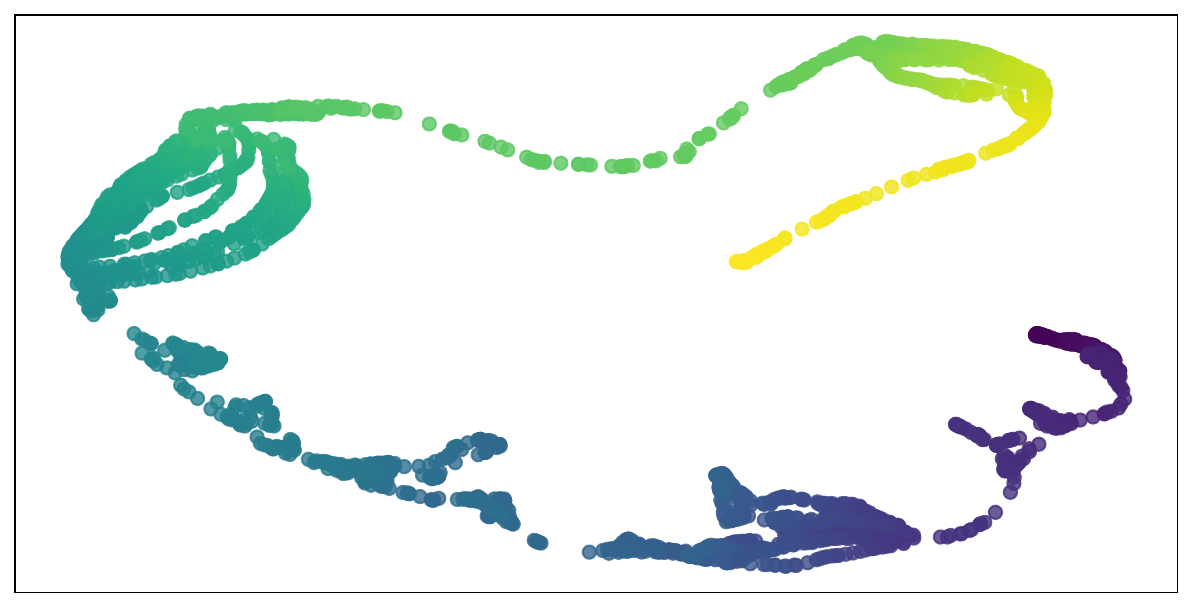}
            \caption{ETTh2 (PHATE)}
        \end{subfigure}
    \end{tabular}
    \caption{Latent space visualizations of the PatchTST backbone on ETTh1 and ETTh2 datasets using UMAP (top) and PHATE (bottom). For ETTh2, we utilize the causal pretraining configuration. The continuous trajectories in the PHATE plots demonstrate that OliO effectively preserves the temporal flow.}
    \label{fig:latent_patchtst_etth}
\end{figure*}

\begin{figure*}[!t]
    \centering
    \setlength{\tabcolsep}{1.0pt} 
    \begin{tabular}{ll}
        \begin{subfigure}{0.48\textwidth}
            \centering
            \includegraphics[width=\textwidth]{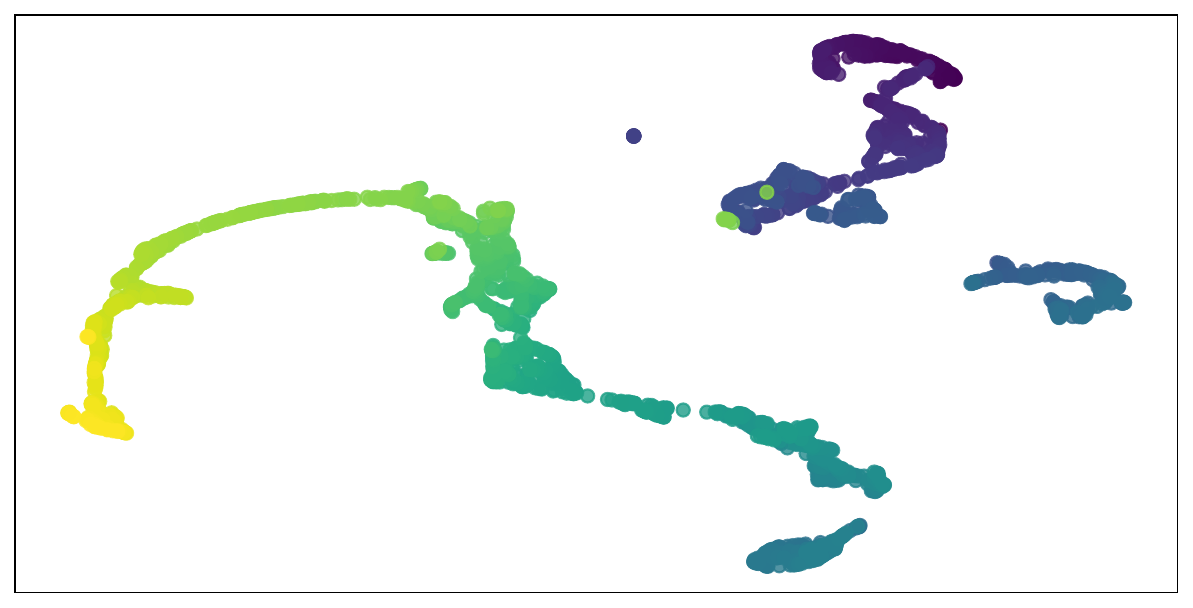}
            \caption{ETTm1 (UMAP)}
        \end{subfigure} &
        \begin{subfigure}{0.48\textwidth}
            \centering
            \includegraphics[width=\textwidth]{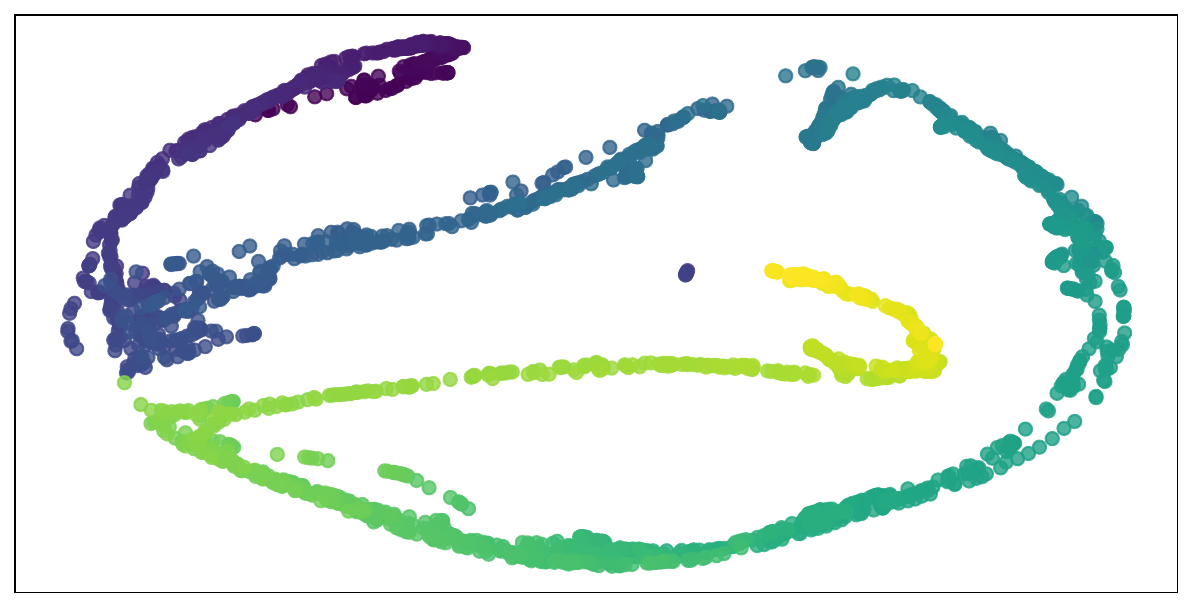}
            \caption{ETTm1 (PHATE)}
        \end{subfigure}
    \end{tabular}
    \caption{Latent space visualization of PatchTST on the ETTm1 dataset, comparing UMAP and PHATE embeddings.}
    \label{fig:latent_patchtst_ettm1}
\end{figure*}

\clearpage

\subsection{Visualizations for the iTransformer backbone}
\label{appendix:latent_itransformer}
Fig.~\ref{fig:latent_itransformer} shows iTransformer representations on ETTm1 and ETTm2. Both projections exhibit trajectories with temporal color progression, with differences in shape and overlap between the datasets and projection methods.
\begin{figure*}[!h]
    \centering
    \setlength{\tabcolsep}{1.0pt} 
    \begin{tabular}{ll}
        \begin{subfigure}{0.48\textwidth}
            \centering
            \includegraphics[width=\textwidth]{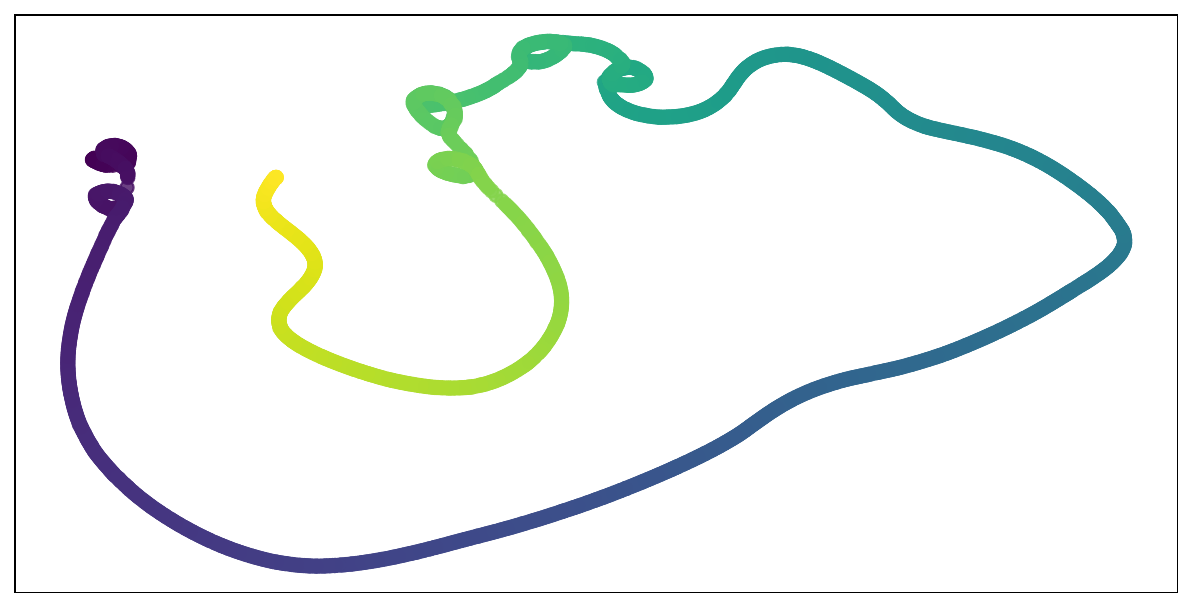}
            \caption{ETTm1 (UMAP)}
        \end{subfigure} &
        \begin{subfigure}{0.48\textwidth}
            \centering
            \includegraphics[width=\textwidth]{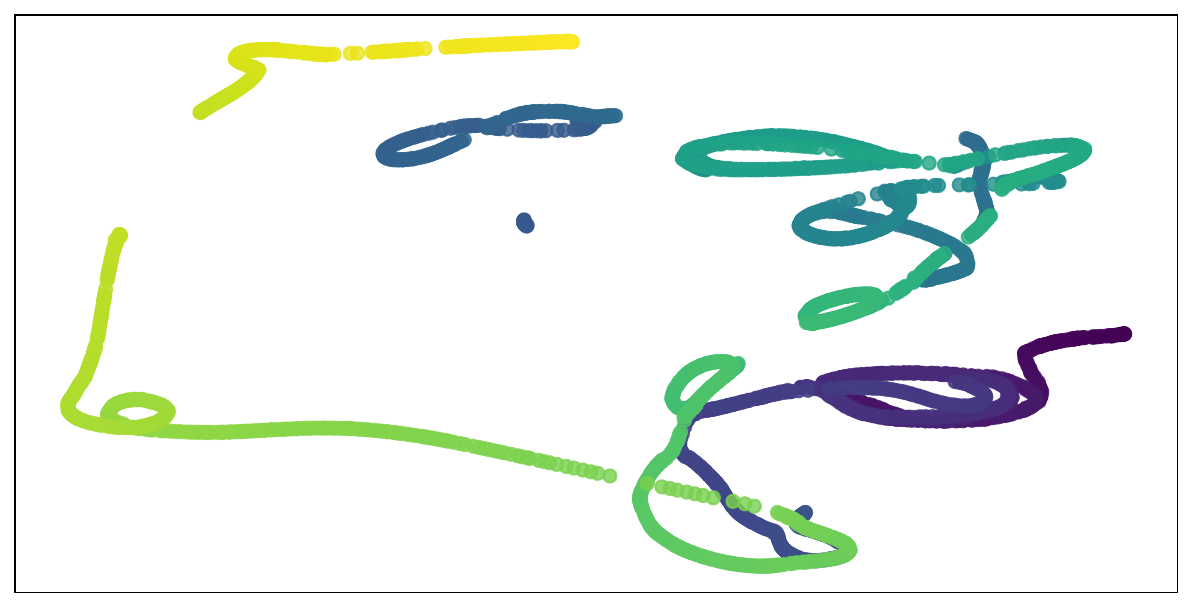}
            \caption{ETTm2 (UMAP)}
        \end{subfigure} \\ [1.5em]
        \begin{subfigure}{0.48\textwidth}
            \centering
            \includegraphics[width=\textwidth]{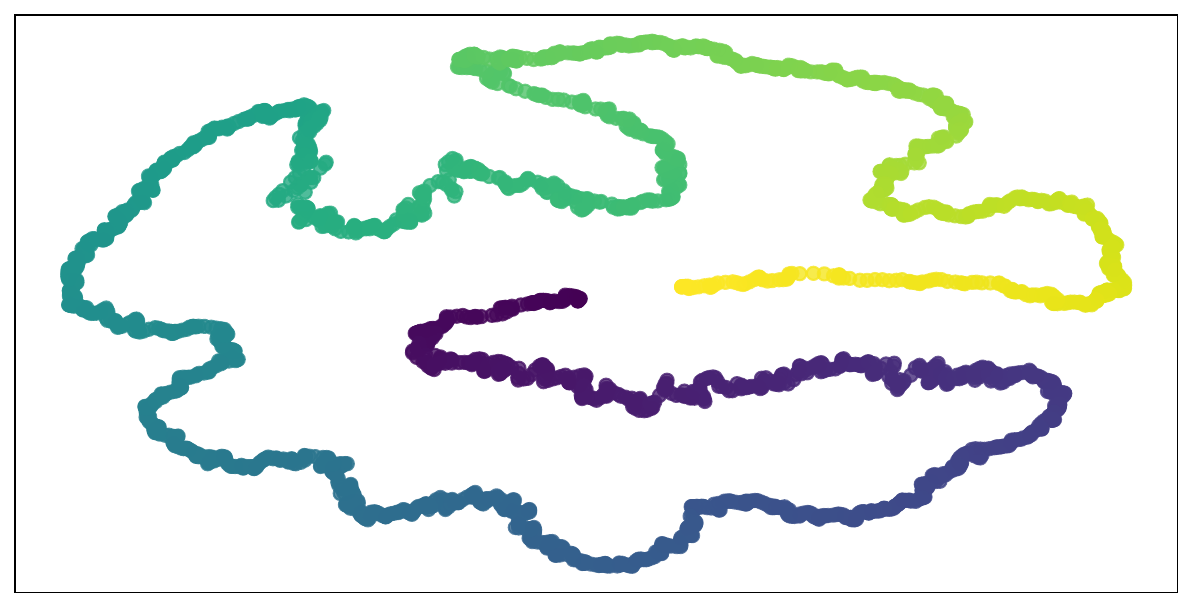}
            \caption{ETTm1 (PHATE)}
        \end{subfigure} &
        \begin{subfigure}{0.48\textwidth}
            \centering
            \includegraphics[width=\textwidth]{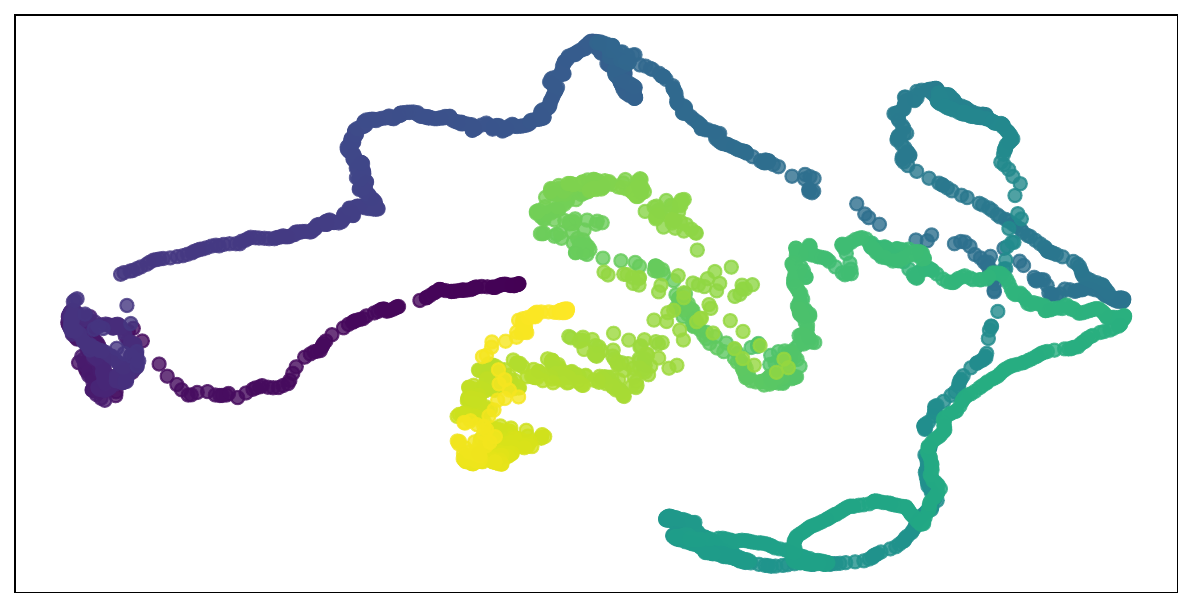}
            \caption{ETTm2 (PHATE)}
        \end{subfigure}
    \end{tabular}
    \caption{Latent space visualizations of iTransformer on ETTm1 and ETTm2 datasets.}
    \label{fig:latent_itransformer}
\end{figure*}

\clearpage

\subsection{Visualizations for the TimeVLM backbone}
\label{appendix:latent_timevlm}
Figs.~\ref{fig:latent_timevlm_etth}--\ref{fig:latent_timevlm_weather} show TimeVLM representations on ETT and Weather. Several projections contain overlapping regions and less distinct temporal color progression than the PatchTST and iTransformer examples. These panels document the geometry of the learned multimodal representations; the forecasting results provide the corresponding quantitative evaluation.
\begin{figure*}[!h]
    \centering
    \setlength{\tabcolsep}{1.0pt} 
    \begin{tabular}{ll}
        \begin{subfigure}{0.48\textwidth}
            \centering
            \includegraphics[width=\textwidth]{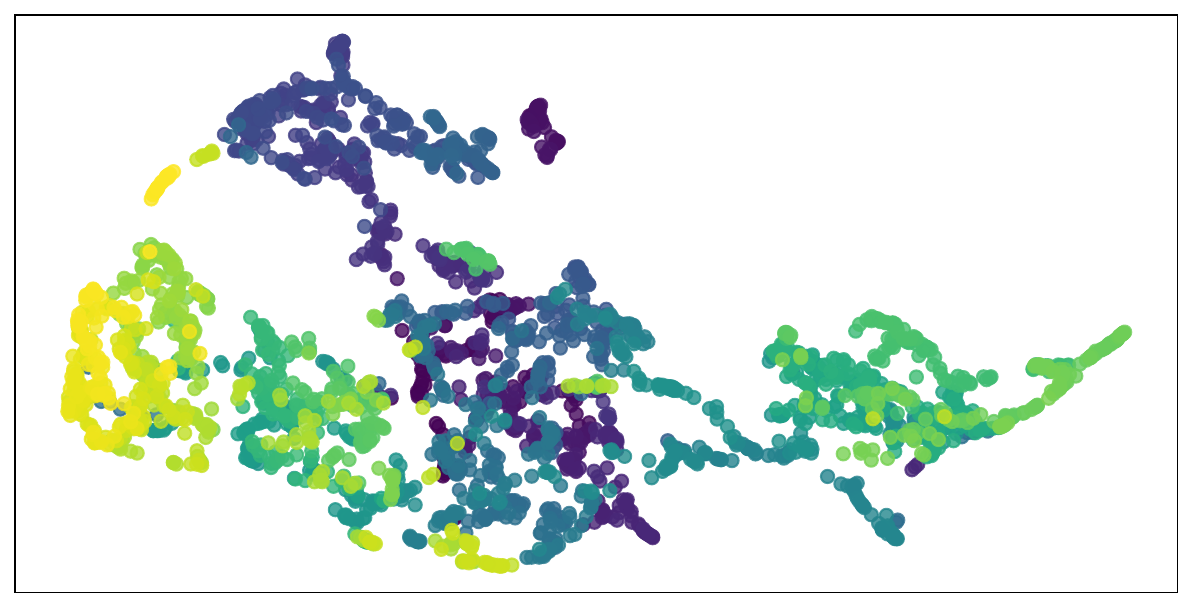}
            \caption{ETTh1 (UMAP)}
        \end{subfigure} &
        \begin{subfigure}{0.48\textwidth}
            \centering
            \includegraphics[width=\textwidth]{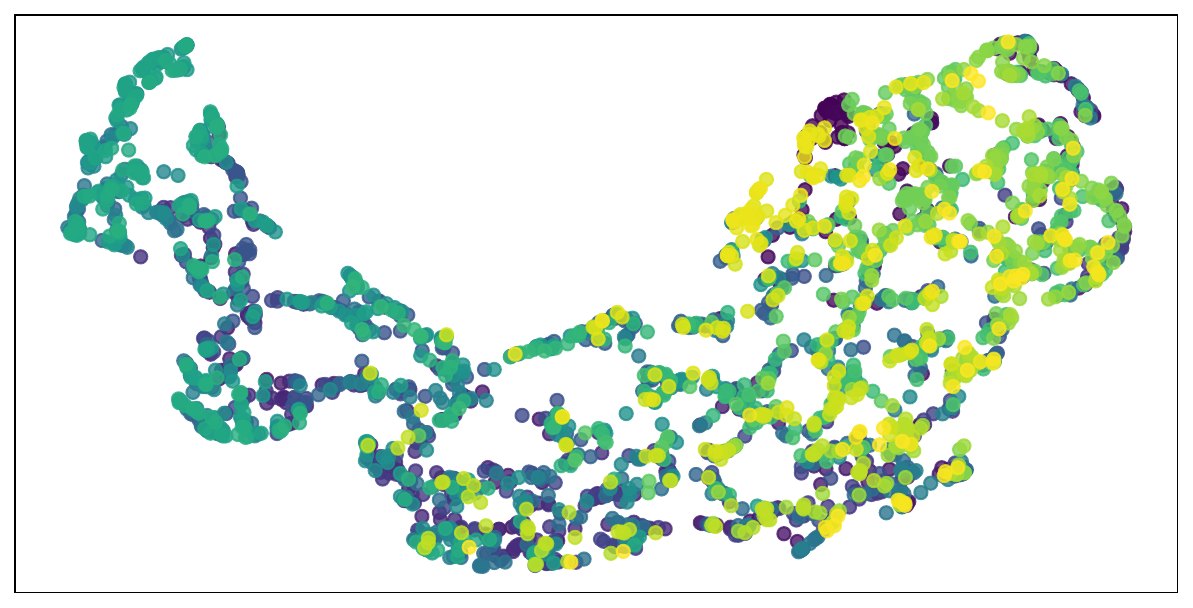}
            \caption{ETTh2 (UMAP)}
        \end{subfigure} \\ [1.5em]
        \begin{subfigure}{0.48\textwidth}
            \centering
            \includegraphics[width=\textwidth]{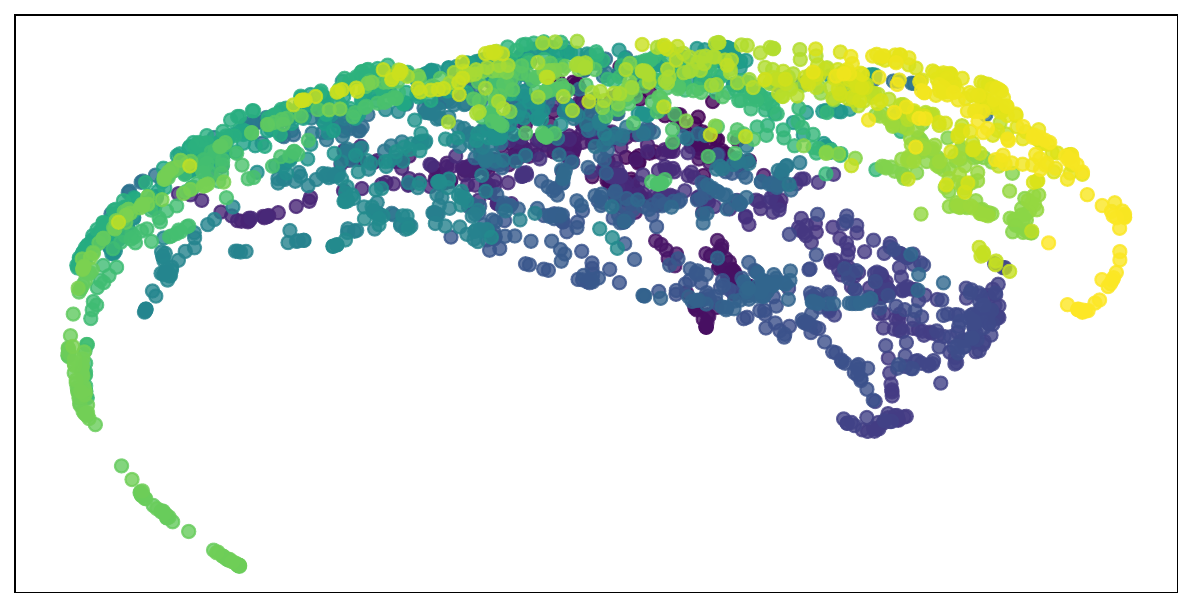}
            \caption{ETTh1 (PHATE)}
        \end{subfigure} &
        \begin{subfigure}{0.48\textwidth}
            \centering
            \includegraphics[width=\textwidth]{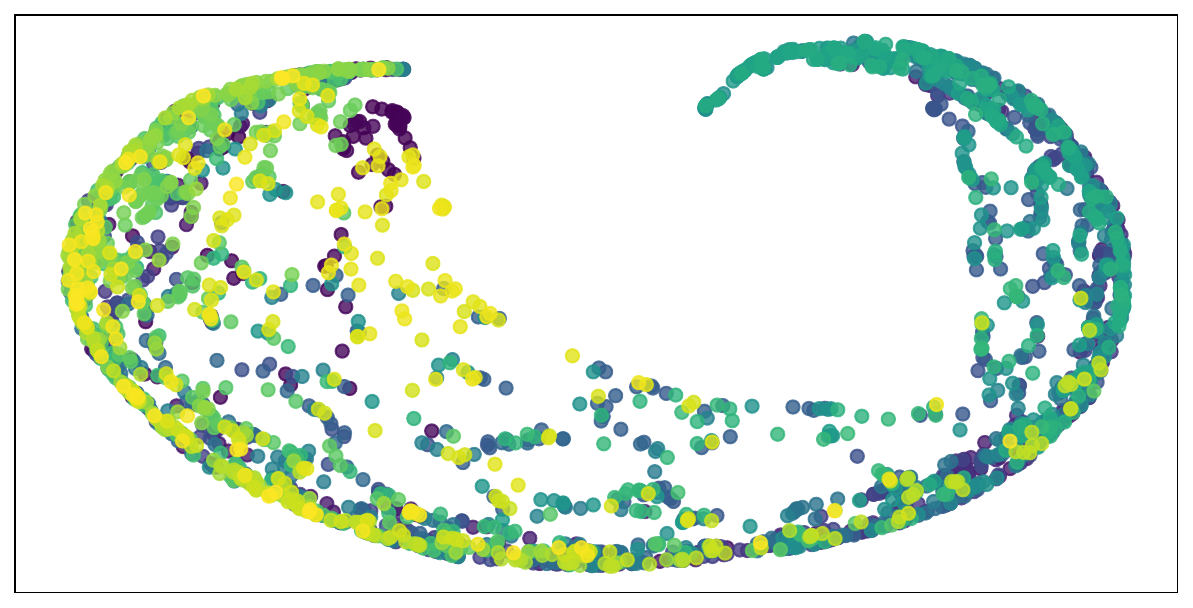}
            \caption{ETTh2 (PHATE)}
        \end{subfigure}
    \end{tabular}
    \caption{Latent space visualizations of TimeVLM on ETTh1 and ETTh2 datasets.}
    \label{fig:latent_timevlm_etth}
\end{figure*}

\begin{figure*}[!t]
    \centering
    \setlength{\tabcolsep}{1.0pt} 
    \begin{tabular}{ll}
        \begin{subfigure}{0.48\textwidth}
            \centering
            \includegraphics[width=\textwidth]{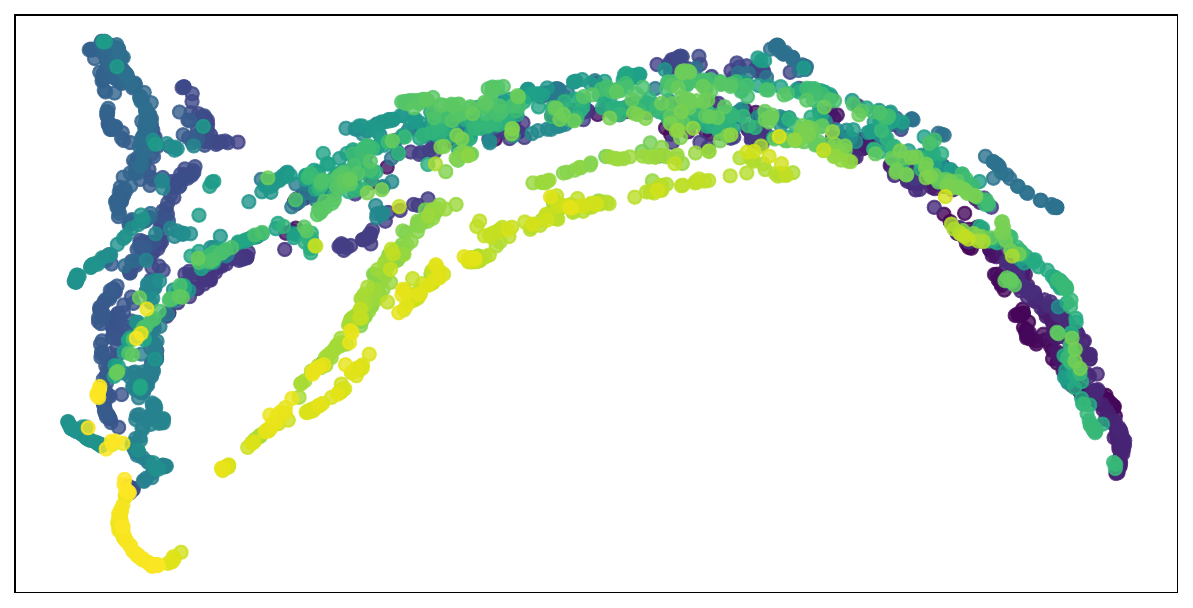}
            \caption{ETTm1 (UMAP)}
        \end{subfigure} &
        \begin{subfigure}{0.48\textwidth}
            \centering
            \includegraphics[width=\textwidth]{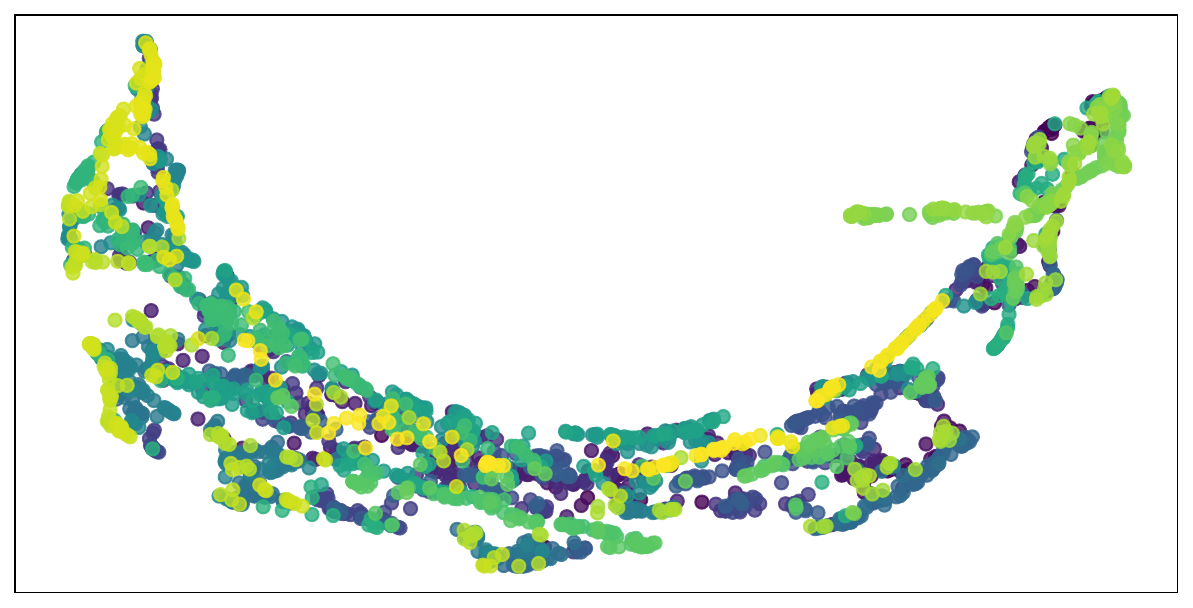}
            \caption{ETTm2 (UMAP)}
        \end{subfigure} \\ [1.5em]
        \begin{subfigure}{0.48\textwidth}
            \centering
            \includegraphics[width=\textwidth]{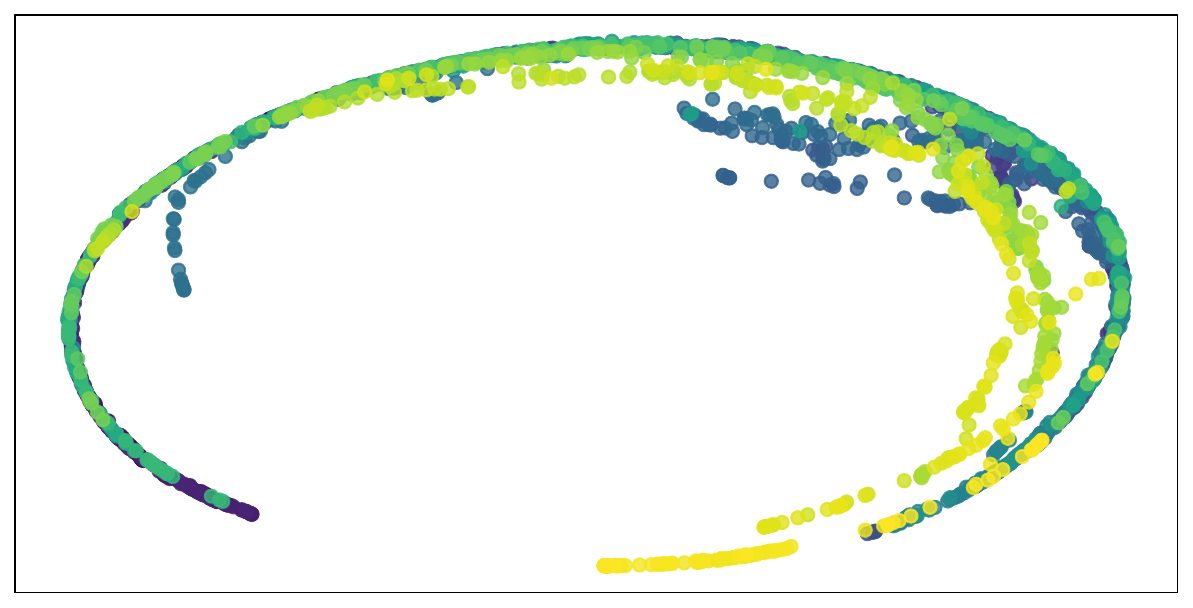}
            \caption{ETTm1 (PHATE)}
        \end{subfigure} &
        \begin{subfigure}{0.48\textwidth}
            \centering
            \includegraphics[width=\textwidth]{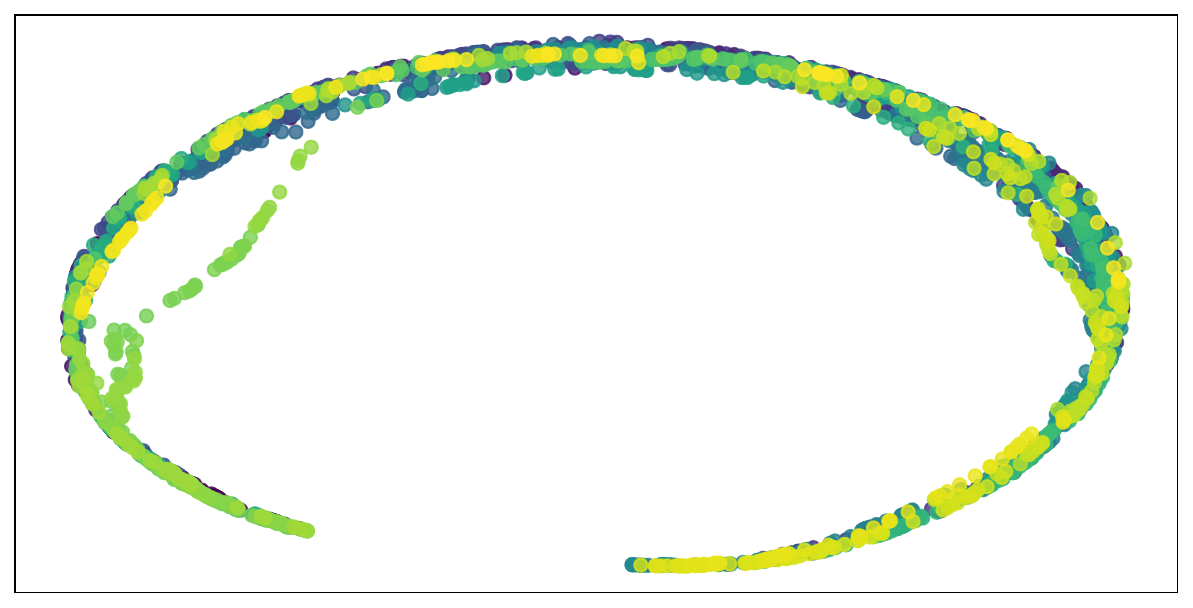}
            \caption{ETTm2 (PHATE)}
        \end{subfigure}
    \end{tabular}
    \caption{Latent space visualizations of TimeVLM on ETTm1 and ETTm2 datasets.}
    \label{fig:latent_timevlm_ettm}
\end{figure*}

\begin{figure*}[!t]
    \centering
    \setlength{\tabcolsep}{1.0pt} 
    \begin{tabular}{ll}
        \begin{subfigure}{0.48\textwidth}
            \centering
            \includegraphics[width=\textwidth]{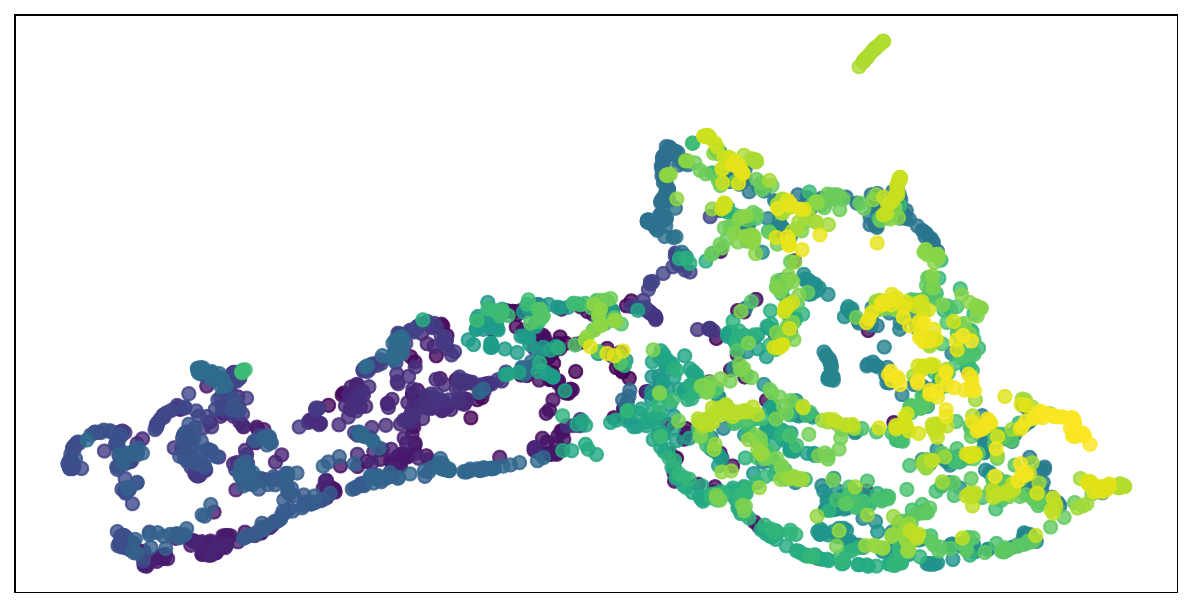}
            \caption{Weather (UMAP)}
        \end{subfigure} &
        \begin{subfigure}{0.48\textwidth}
            \centering
            \includegraphics[width=\textwidth]{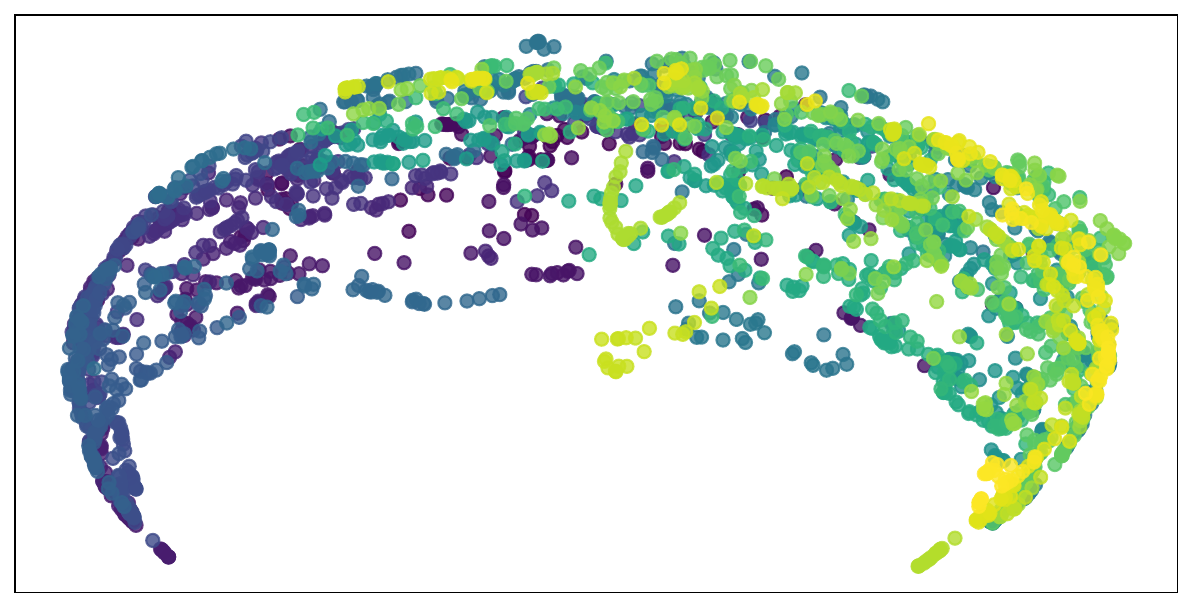}
            \caption{Weather (PHATE)}
        \end{subfigure}
    \end{tabular}
    \caption{Latent space visualization of TimeVLM on the Weather dataset, comparing UMAP and PHATE embeddings.}
    \label{fig:latent_timevlm_weather}
\end{figure*}

\clearpage

\section{Mathematical proofs and transition analysis}
\label{sec:proofs}

We provide proofs of the finite-polynomial transition structure and its displacement properties, followed by a comparison with the spectral behavior of a general linear generator.

\subsection{Proof of Theorem 1 and polynomial structure}
For Theorem~\ref{thm:polynomial_distance}, let $q\le d_h$ denote the nilpotency index of $K$.

\begin{proof}
According to the Taylor series expansion of the matrix exponential, we have:
\begin{equation}
    e^{\tau K} = I + \tau K + \frac{\tau^2 K^2}{2!} + \dots = \sum_{n=0}^{\infty} \frac{(\tau K)^n}{n!}
\end{equation}
A strictly upper-triangular matrix satisfies $K^{d_h}=0$. By the definition of $q$, all powers $K^n$ with $n\ge q$ vanish. Consequently, the infinite series truncates to a finite sum:
\begin{equation}
    e^{\tau K} = \sum_{n=0}^{q-1} \frac{\tau^n}{n!} K^n
\end{equation}
Since $K^0=I$ and every positive power of $K$ is strictly upper triangular, $e^{\tau K}$ is upper triangular with unit diagonal. Every entry is a polynomial of degree at most $q-1$ in $\tau$. Adding $\tau\bm{v}$ makes the complete transition a vector-valued polynomial of degree at most $\max\{q-1,1\}$ for fixed $\bm{h}$ and $\bm{v}$. The backward case follows by replacing $\tau$ with $-\tau$, including the offset $-\tau\bm{v}$; the unit-diagonal property and degree bounds are unchanged. This proves both directions of Theorem~\ref{thm:polynomial_distance}.
\end{proof}

\begin{remark}[Implemented signed offset]
In Eqs.~\eqref{eq:fwd_shift} and \eqref{eq:bwd_shift}, OliO directly parameterizes the offset as $\pm\tau\bm{v}$ using a single shared learnable vector $\bm{v}$. The offset is linear in lag, so the complete transition retains the finite-polynomial dependence established by Theorem~\ref{thm:polynomial_distance}.
\end{remark}

\begin{lemma}[Squared displacement of a fixed source]
\label{lemma:l2_distance}
For fixed $\bm{h}$ and $\bm{v}$, the squared displacement $\|\mathcal{T}_{\pm\tau}(\bm{h})-\bm{h}\|_2^2$ is a polynomial in $\tau$ of degree at most $2\max\{q-1,1\}$ for either choice of sign.
\end{lemma}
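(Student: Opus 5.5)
The plan is to invoke Theorem~\ref{thm:polynomial_distance} to write the displacement vector explicitly as a polynomial in $\tau$, and then use that the squared Euclidean norm is a quadratic form in the coordinates. Fix a sign $\sigma\in\{+1,-1\}$ and put $m=\max\{q-1,1\}$. By Eq.~\eqref{eq:finite_transition},
\begin{equation}
    \bm{d}(\tau):=\mathcal{T}_{\sigma\tau}(\bm{h})-\bm{h}
    =\sum_{j=1}^{q-1}\frac{(\sigma\tau)^j}{j!}K^j\bm{h}+\sigma\tau\bm{v}.
\end{equation}
The $j=0$ term $I\bm{h}$ cancels against $-\bm{h}$, so $\bm{d}(\tau)=\sum_{j=1}^{m}\tau^j\bm{c}_j$. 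Here the coefficient vectors are $\bm{c}_1=\sigma(K\bm{h}+\bm{v})$ and $\bm{c}_j=\sigma^jK^j\bm{h}/j!$ for $2\le j\le q-1$. These vectors are fixed and independent of $\tau$, with $\bm{c}_j=\bm{0}$ whenever $j\ge q$. If $q=1$, then $K=0$ and only the $\bm{c}_1=\sigma\bm{v}$ term remains, which is why the bound uses $\max\{q-1,1\}$.

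Next, expand the squared norm bilinearly:
\begin{equation}
    \|\bm{d}(\tau)\|_2^2=\sum_{i=1}^{m}\sum_{j=1}^{m}\tau^{i+j}\,\bm{c}_i^\top\bm{c}_j .
\end{equation}
Each term is a real scalar times $\tau^{i+j}$ with $2\le i+j\le 2m$. The sum is therefore a polynomial in $\tau$ of degree at most $2m=2\max\{q-1,1\}$. It also has no constant or linear term, which I would note as a byproduct. The argument is identical for both signs, since $\sigma$ only enters through the constant coefficients $\bm{c}_j$.

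No step is a genuine obstacle. The only points needing care are the bookkeeping that the identity term cancels, and the degenerate case $q=1$, where the degree comes from the offset rather than from $K$. I would also remark that "at most" is essential. The leading coefficient $\|\bm{c}_m\|_2^2$ may vanish, for instance when $K^{q-1}\bm{h}=\bm{0}$, or when $K\bm{h}=-\bm{v}$ in the case $q\le 2$. So no exact-degree claim should be made.
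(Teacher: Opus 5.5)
Your proposal is correct and follows the paper's argument: the paper also uses Theorem~\ref{thm:polynomial_distance} to write the displacement as $P(\tau)\bm{h}+\tau\bm{v}$ with $P(\tau)=e^{\tau K}-I$ a polynomial matrix, notes that this vector polynomial has degree at most $\max\{q-1,1\}$, squares and sums its entries, and handles the backward case by $\tau\mapsto-\tau$. Your version just makes the coefficient vectors, the bilinear expansion, and the degenerate case $q=1$ explicit; the clause ``$\bm{c}_j=\bm{0}$ whenever $j\ge q$'' holds only for $j\ge 2$, which your next sentence already reflects.
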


\begin{proof}
Let $P(\tau)=e^{\tau K}-I$. By Theorem~\ref{thm:polynomial_distance}, $P(\tau)$ is a polynomial matrix. Including the implemented offset gives
\begin{equation}
    \|\mathcal{T}_{\tau}(\bm{h})-\bm{h}\|_2^2
    =\|P(\tau)\bm{h}+\tau\bm{v}\|_2^2.
\end{equation}
The displacement is a vector-valued polynomial of degree at most $\max\{q-1,1\}$. Squaring and summing its entries gives the stated degree bound. Replacing $\tau$ with $-\tau$ gives the backward case with the same degree bound.
\end{proof}

These identities characterize displacement of a fixed source. The training objective aligns independently encoded endpoint representations through cosine similarity, leaving their Euclidean norms unconstrained.

\subsection{Local separation and asymptotic growth}
\label{app:displacement_growth}
For analysis, extend the transition to a real lag $s\in\mathbb R$ by $\mathcal T_s(\bm h)=e^{sK}\bm h+s\bm v$. This extension agrees with the integer-lag formulation used in pre-training. For fixed $\bm h$ and $\bm v$, define
\begin{equation}
    \bm d_h(s)=(e^{sK}-I)\bm h+s\bm v,
    \qquad D_h(s)=\|\bm d_h(s)\|_2^2.
    \label{eq:fixed_source_displacement}
\end{equation}

\begin{proposition}[Local displacement separation]
\label{prop:local_displacement}
If $K\bm h+\bm v\ne0$, there exists $\epsilon>0$ such that $D_h(s)$ is strictly increasing for $0<s<\epsilon$.
\end{proposition}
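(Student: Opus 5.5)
The plan is to use the polynomial structure from Theorem~\ref{thm:polynomial_distance} and Lemma~\ref{lemma:l2_distance}: $D_h(s)$ is a polynomial in $s$ vanishing at $s=0$. It therefore suffices to identify its lowest-order nonzero term and show that $D_h'(s)>0$ on some interval $(0,\epsilon)$.

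First, expand $\bm d_h(s)$ with the finite series of Theorem~\ref{thm:polynomial_distance}:
\begin{equation*}
\bm d_h(s)=s(K\bm h+\bm v)+\sum_{j=2}^{q-1}\frac{s^j}{j!}K^j\bm h = s\,\bm a + s^2\bm r(s),
\end{equation*}
where $\bm a=K\bm h+\bm v$ and $\bm r(s)$ is a vector polynomial, hence bounded on $[0,1]$. If $q\le 2$, then $\bm r\equiv 0$. Squaring gives
\begin{equation*}
D_h(s)=s^2\|\bm a\|_2^2+2s^3\langle \bm a,\bm r(s)\rangle+s^4\|\bm r(s)\|_2^2 .
\end{equation*}
This can be written as $D_h(s)=s^2\|\bm a\|^2+s^3 c(s)$ with $c$ a polynomial.

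Next, differentiate to get
\begin{equation*}
D_h'(s)=2s\|\bm a\|^2+3s^2c(s)+s^3c'(s)=s\bigl(2\|\bm a\|^2+s\,e(s)\bigr),
\end{equation*}
where $e(s)=3c(s)+s c'(s)$ is a polynomial. Let $M=\max_{[0,1]}|e|$. Since $\|\bm a\|>0$ by hypothesis, choose $\epsilon=\min\{1,\|\bm a\|^2/(M+1)\}$. Then $2\|\bm a\|^2+s e(s)\ge 2\|\bm a\|^2-\|\bm a\|^2>0$ for $0<s<\epsilon$, so $D_h'(s)>0$ there. Strict monotonicity on $(0,\epsilon)$ follows from the mean value theorem. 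Continuity at $0$ with $D_h(0)=0$ also gives $D_h(s)>0=D_h(0)$, so the monotonicity extends to $[0,\epsilon)$.

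The only delicate step is controlling the remainder uniformly. Because everything is a polynomial, compactness of $[0,1]$ gives the bound $M$ directly, so I do not expect a real obstacle. An alternative is to note that $D_h$ has a zero of order exactly $2$ at $0$ with positive leading coefficient, so $D_h'$ has a simple zero at $0$ with $D_h''(0)=2\|\bm a\|^2>0$. I would also remark that the same argument applies for $s<0$ by symmetry, with $D_h$ decreasing on $(-\epsilon,0)$. That case is not needed for the statement.
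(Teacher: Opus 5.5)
Your argument is correct and is essentially the paper's proof: both expand $\bm d_h(s)=s(K\bm h+\bm v)+O(s^2)$, obtain $D_h(s)=s^2\|K\bm h+\bm v\|_2^2+O(s^3)$, and conclude that $D_h'(s)=2s\|K\bm h+\bm v\|_2^2+O(s^2)>0$ for small $s>0$; your version makes the remainder explicit through the polynomial $e(s)$ and a compactness bound, which yields a concrete $\epsilon$. One small point: you use the finite expansion at real $s$, whereas Theorem~\ref{thm:polynomial_distance} is stated for integer lags, but the truncation holds verbatim for any real scalar because $K^j=0$ for $j\ge q$.
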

\begin{proof}
Expansion at zero gives $\bm d_h(s)=s(K\bm h+\bm v)+O(s^2)$, and hence
\begin{equation}
    D_h(s)=s^2\|K\bm h+\bm v\|_2^2+O(s^3).
\end{equation}
Since $D_h$ is polynomial, its derivative is $2s\|K\bm h+\bm v\|_2^2+O(s^2)$, which is positive for sufficiently small positive $s$.
\end{proof}
The condition holds almost surely for any state distribution assigning zero probability to the affine stationary set $\{\bm h:K\bm h+\bm v=0\}$.

\begin{proposition}[Asymptotic displacement growth]
\label{prop:asymptotic_displacement}
Suppose $\bm d_h(s)$ has positive degree $r$ and leading vector coefficient $\bm c_r\ne0$. Then
\begin{equation}
    \|\bm d_h(s)\|_2=\|\bm c_r\|_2s^r(1+o(1))
    \quad\text{as }s\to+\infty,
\end{equation}
and its norm is eventually strictly increasing.
\end{proposition}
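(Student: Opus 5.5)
By Theorem~\ref{thm:polynomial_distance} and its extension to real lag $s$, $\bm d_h(s)=\sum_{j=1}^{r}\bm c_j s^j$ is a vector-valued polynomial of degree $r\le\max\{q-1,1\}$. Its coefficients are explicit: $\bm c_1=K\bm h+\bm v$ and $\bm c_j=K^j\bm h/j!$ for $j\ge2$. There is no constant term, since $\bm d_h(0)=0$. The plan is to prove the asymptotic statement first by factoring out the leading power, and then to obtain eventual monotonicity by differentiating the squared norm, which is itself a scalar polynomial.

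For the asymptotics, write $\bm d_h(s)=s^r\bigl(\bm c_r+\sum_{j<r}\bm c_j s^{j-r}\bigr)$ for $s>0$. The bracketed vector tends to $\bm c_r$ as $s\to+\infty$, because each correction term carries a negative power of $s$. Continuity of the norm then gives $\|\bm d_h(s)\|_2=s^r\|\bm c_r+o(1)\|_2=\|\bm c_r\|_2 s^r(1+o(1))$, where dividing by $\|\bm c_r\|_2$ is legitimate because $\bm c_r\ne0$.

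For monotonicity, I would work with $D_h(s)=\|\bm d_h(s)\|_2^2=\sum_{i,j}\langle\bm c_i,\bm c_j\rangle s^{i+j}$. This is a real polynomial of degree exactly $2r$ with leading coefficient $\|\bm c_r\|_2^2>0$, by Lemma~\ref{lemma:l2_distance} together with the nonvanishing of $\bm c_r$. Its derivative $D_h'(s)$ is then a polynomial of degree $2r-1\ge1$ with positive leading coefficient $2r\|\bm c_r\|_2^2$. A nonzero polynomial has only finitely many real roots, and this one tends to $+\infty$. Hence there is $s_0$ with $D_h'(s)>0$ for all $s>s_0$, so $D_h$ is strictly increasing on $(s_0,\infty)$. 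Taking $s_0\ge0$ as well, $\|\bm d_h\|_2=\sqrt{D_h}$ is strictly increasing there, since the square root is strictly increasing on $[0,\infty)$.

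The only step needing care is checking that the degree and leading coefficient of $D_h$ are what I claimed. This relies on cross terms $\langle\bm c_i,\bm c_j\rangle s^{i+j}$ with $i+j=2r$ occurring only for $i=j=r$, which holds because all indices satisfy $i,j\le r$. The rest is routine, and the same argument applies to $s\to-\infty$ if the backward direction is wanted.
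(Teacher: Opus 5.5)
Your proof is correct and follows the same route as the paper. Factoring out $s^r$ gives $s^{-r}\bm d_h(s)\to\bm c_r$ and hence the norm asymptotic; then $D_h=\|\bm d_h\|_2^2$ has leading term $\|\bm c_r\|_2^2s^{2r}$, so its derivative is eventually positive and the norm is eventually strictly increasing. Your version adds details the paper leaves implicit: the explicit coefficients, the check that only $i=j=r$ reaches degree $2r$, and passing monotonicity through the square root.
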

\begin{proof}
Factoring out $s^r$ yields $s^{-r}\bm d_h(s)\to\bm c_r$, giving the norm asymptotic. The squared norm has leading term $\|\bm c_r\|_2^2s^{2r}$, whose derivative is eventually positive. The norm is then positive and strictly increasing as well.
\end{proof}
These results describe local separation and eventual growth. Behavior at intermediate lags depends on the learned coefficients. App.~\ref{app:distance_ordering} reports finite-range measurements.

\subsection{Algebraic structure and spectral behavior}

\begin{corollary}[Spectral comparison]
\label{cor:exponential_instability}
For a general linear generator, each Jordan block contributes an exponential factor $e^{\tau\lambda_i}$ multiplied by a finite polynomial. A nonzero real part of $\lambda_i$ introduces a mode growing exponentially in one of the two signed temporal directions. For purely imaginary eigenvalues, oscillatory factors may be accompanied by polynomial terms from nontrivial Jordan blocks.
\end{corollary}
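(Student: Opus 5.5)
The plan is to bring a general real generator $K\in\mathbb{R}^{d_h\times d_h}$ to Jordan normal form over $\mathbb{C}$, $K=PJP^{-1}$ with $J=\bigoplus_i J_i$ and $J_i=\lambda_i I_{m_i}+N_i$, where $N_i$ is the nilpotent superdiagonal shift with $N_i^{m_i}=0$. Since the matrix exponential commutes with similarity, $e^{\tau K}=P\,e^{\tau J}P^{-1}$ and $e^{\tau J}=\bigoplus_i e^{\tau J_i}$. Because $\lambda_i I$ and $N_i$ commute, we have $e^{\tau J_i}=e^{\tau\lambda_i}e^{\tau N_i}$. Theorem~\ref{thm:polynomial_distance} then gives $e^{\tau N_i}=\sum_{j=0}^{m_i-1}\tau^jN_i^j/j!$, applied to the strictly upper-triangular block $N_i$ with nilpotency index $m_i$. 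Each block therefore contributes exactly an exponential factor times a finite polynomial in $\tau$ of degree $m_i-1$, which is the first claim. The same computation with $-\tau$ gives the backward direction.

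For the growth claim, write $\lambda_i=\alpha_i+\mathrm{i}\beta_i$. The factor satisfies $|e^{\pm\tau\lambda_i}|=e^{\pm\tau\alpha_i}$. If $\alpha_i\neq0$, one of the signs $\pm$ makes this factor tend to infinity exponentially as $\tau\to\infty$: the sign $+$ if $\alpha_i>0$, and $-$ otherwise.

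To show this is a genuine mode of the transition and not cancelled by the change of basis, I would take a right eigenvector $\bm u$ with $K\bm u=\lambda_i\bm u$. Then $e^{\pm\tau K}\bm u=e^{\pm\tau\lambda_i}\bm u$ exactly, so $\|e^{\pm\tau K}\bm u\|=e^{\pm\tau\alpha_i}\|\bm u\|$. If $\lambda_i$ is complex, $\bm u$ is complex; in that case I would pass to the real vectors $\operatorname{Re}\bm u$ and $\operatorname{Im}\bm u$. At least one of them must grow at the same exponential rate, since their complex combination does.

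For purely imaginary $\lambda_i=\mathrm{i}\beta_i$, the scalar factor has unit modulus and is oscillatory. The block is $e^{\mathrm{i}\tau\beta_i}\sum_{j<m_i}\tau^jN_i^j/j!$, so when $m_i\ge2$ the entry $(1,m_i)$ carries a term $\tau^{m_i-1}/(m_i-1)!$. Hence oscillation is accompanied by polynomial growth. When $m_i=1$ no such term appears, which matches the hedged ``may''.

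The main subtlety is keeping the statement honest about real versus complex arithmetic and about non-cancellation. The eigenvector argument handles non-cancellation cleanly. The complex-to-real step is the only place where care is needed, and the real/imaginary-part argument above suffices for it. Everything else follows directly from the Jordan decomposition and Theorem~\ref{thm:polynomial_distance}.
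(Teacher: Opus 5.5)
Your proposal is correct and follows the paper's route: Jordan form over $\mathbb{C}$, the factorization $e^{\tau J_i}=e^{\tau\lambda_i}e^{\tau N_i}$ with a terminating nilpotent sum, and a case split on $\operatorname{Re}\lambda_i$. Your eigenvector argument ruling out cancellation by $P$ is a refinement the paper leaves implicit; for complex $\bm u$, the identity $\|e^{\pm\tau K}\operatorname{Re}\bm u\|_2^2+\|e^{\pm\tau K}\operatorname{Im}\bm u\|_2^2=e^{\pm 2\tau\operatorname{Re}\lambda_i}\|\bm u\|_2^2$ is what makes your real-part step precise, while the paper additionally notes the forward decay for $\operatorname{Re}\lambda_i<0$, which the statement does not require.
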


\begin{proof}
Using Jordan normal form over $\mathbb{C}$, write $K=PJP^{-1}$ with blocks $J_i=\lambda_i I+N_i$ of size $n_i$, where $N_i$ is nilpotent. The block exponential is
\begin{equation}
    e^{\tau J_i} = e^{\tau (\lambda_i I + N_i)} = e^{\tau \lambda_i} e^{\tau N_i} = e^{\tau \lambda_i} \left( \sum_{k=0}^{n_i-1} \frac{\tau^k}{k!} N_i^k \right)
\end{equation}

\textbf{Case 1: Positive real part.}
If $\operatorname{Re}(\lambda_i)>0$, the factor $|e^{\tau\lambda_i}|=e^{\tau\operatorname{Re}(\lambda_i)}$ grows exponentially for positive lag.

\textbf{Case 2: Negative real part.}
For $\operatorname{Re}(\lambda_i)<0$, the block decays in the forward direction because exponential decay dominates its polynomial factor. Writing $\alpha=-\operatorname{Re}(\lambda_i)>0$ gives
\begin{equation}
    \lim_{\tau \to \infty} \frac{\tau^k}{e^{\alpha \tau}} = \lim_{\tau \to \infty} \frac{k\tau^{k-1}}{\alpha e^{\alpha \tau}} = \dots = \lim_{\tau \to \infty} \frac{k!}{\alpha^k e^{\alpha \tau}} = 0
\end{equation}
The same mode grows in the backward direction because reversing the lag reverses the sign of the real part in the exponential factor.

\textbf{Case 3: Zero real part.}
For $\lambda_i=i\omega$, the exponential factor has unit magnitude and can be oscillatory. A nontrivial Jordan block adds polynomial factors, so zero real parts alone do not guarantee bounded transport. Nonzero imaginary eigenvalues do not yield the finite-polynomial matrix dependence induced by nilpotency.

Strict upper triangularity supplies nilpotency and exact finite-polynomial transport in both signed directions. The magnitude of the resulting polynomial transients depends on the learned coefficients and the evaluated lag.
\end{proof}

\subsection{Computational complexity and polynomial truncation}
\label{sec:complexity_proof}
We analyze the cost of evaluating the transition and the approximation introduced by truncating its polynomial expansion.

\textbf{Taylor expansion and nilpotency.} The continuous transition operator $e^{\tau K}$ is defined by the infinite Taylor series:
\begin{equation}
    e^{\tau K} = I + \tau K + \frac{\tau^2 K^2}{2!} + \dots = \sum_{n=0}^{\infty} \frac{(\tau K)^n}{n!}
\end{equation}
By Theorem~\ref{thm:polynomial_distance}, the nilpotency index satisfies $q\le d_h$, so the exact series terminates at degree at most $q-1$:
\begin{equation}
    e^{\tau K} = \sum_{n=0}^{q-1} \frac{\tau^n}{n!} K^n
\end{equation}

\textbf{Polynomial truncation.} For $p<q-1$, an approximation retains terms through order $p$:
\begin{equation}
    \hat{S}_p(\tau) = \sum_{n=0}^{p} \frac{(\tau K)^n}{n!}
\end{equation}
To evaluate this polynomial efficiently, we apply \textbf{Horner's method}~\cite{Horner1819}, which reformulates the sum into a nested form. For the matrix exponential, the $p$-th order approximation can be computed as:
\begin{equation}
    \text{Horner}(M, p) = I + M \left( I + \frac{M}{2} \left( I + \frac{M}{3} \dots \left( I + \frac{M}{p} \right) \dots \right) \right)
\end{equation}
where $M=\tau K$ for the forward direction. This nested evaluation uses $p$ matrix--matrix multiplications instead of explicitly forming each power separately~\cite{Higham2008Functions,Moler2003Nineteen}. For $p=0$, the polynomial is simply $I$. Alg.~\ref{alg:horner} summarizes the matrix-level computation, with cost $O(pd_h^3)$.

Applying the polynomial directly to $\bm{h}$ requires only matrix--vector products, giving cost $O(pd_h^2)$ plus $O(d_h)$ for the signed offset, as detailed in Alg.~\ref{alg:horner_vector}. Exact evaluation uses $p=q-1$ and remains cubic in the worst case $q=d_h$; using a fixed lower order reduces cost while approximating the complete transition.

\begin{algorithm}[h]
    \caption{Matrix exponential approximation via Horner's method}
    \label{alg:horner}
    \begin{algorithmic}[1]
        \STATE \textbf{Input:} Matrix $M \in \mathbb{R}^{d_h \times d_h}$, truncation order $p$
        \STATE \textbf{Output:} Approximated transition operator $\hat{S}_p \in \mathbb{R}^{d_h \times d_h}$
        \STATE $\hat{S}_p \leftarrow I_{d_h \times d_h}$ \COMMENT{Initialize with identity matrix}
        \FOR{$i = p$ \textbf{down to} $1$}
            \STATE $\hat{S}_p \leftarrow I_{d_h \times d_h} + \frac{1}{i} M \cdot \hat{S}_p$ \COMMENT{$\mathcal{O}(d_h^3)$ per iteration}
        \ENDFOR
        \RETURN $\hat{S}_p$
    \end{algorithmic}
\end{algorithm}

\begin{algorithm}[h]
    \caption{Efficient representation transition via vector-wise Horner's method}
    \label{alg:horner_vector}
    \begin{algorithmic}[1]
        \STATE \textbf{Input:} Representation $\bm{h}$, matrix $M=\tau K$, order $p$, lag $\tau\in\mathbb{N}$, shared offset $\bm{v}$
        \STATE \textbf{Output:} Transitioned representation $\hat{\bm{h}}_\tau \in \mathbb{R}^{d_h}$
        \STATE $\bm{y} \leftarrow \bm{h}$ \COMMENT{Initialize accumulation vector}
        \FOR{$i = p$ \textbf{down to} $1$}
            \STATE $\bm{y} \leftarrow \bm{h} + \frac{1}{i} M \bm{y}$ \COMMENT{$\mathcal{O}(d_h^2)$ per iteration}
        \ENDFOR
        \STATE $\hat{\bm{h}}_\tau \leftarrow \bm{y} + \tau\bm{v}$ \COMMENT{Add shared signed offset}
        \RETURN $\hat{\bm{h}}_\tau$
    \end{algorithmic}
\end{algorithm}

Alg.~\ref{alg:horner_vector} shows the forward evaluation. The backward evaluation uses $M=-\tau K$ and adds $-\tau\bm{v}$ at the final step; the operation count is unchanged.

\textbf{Effect of truncation.} Truncation omits $E_p(\tau)=\sum_{n=p+1}^{q-1}\tau^nK^n/n!$. The generator can be trained within the truncated parameterization, but optimization does not guarantee recovery of the omitted terms or preservation of forecasting accuracy. The order ablation in Sec.~\ref{sec:ablation_study} evaluates this accuracy--cost trade-off separately from the complete transition.

\clearpage
\begin{figure*}[t]
    \centering
    \includegraphics[width=0.8\textwidth]{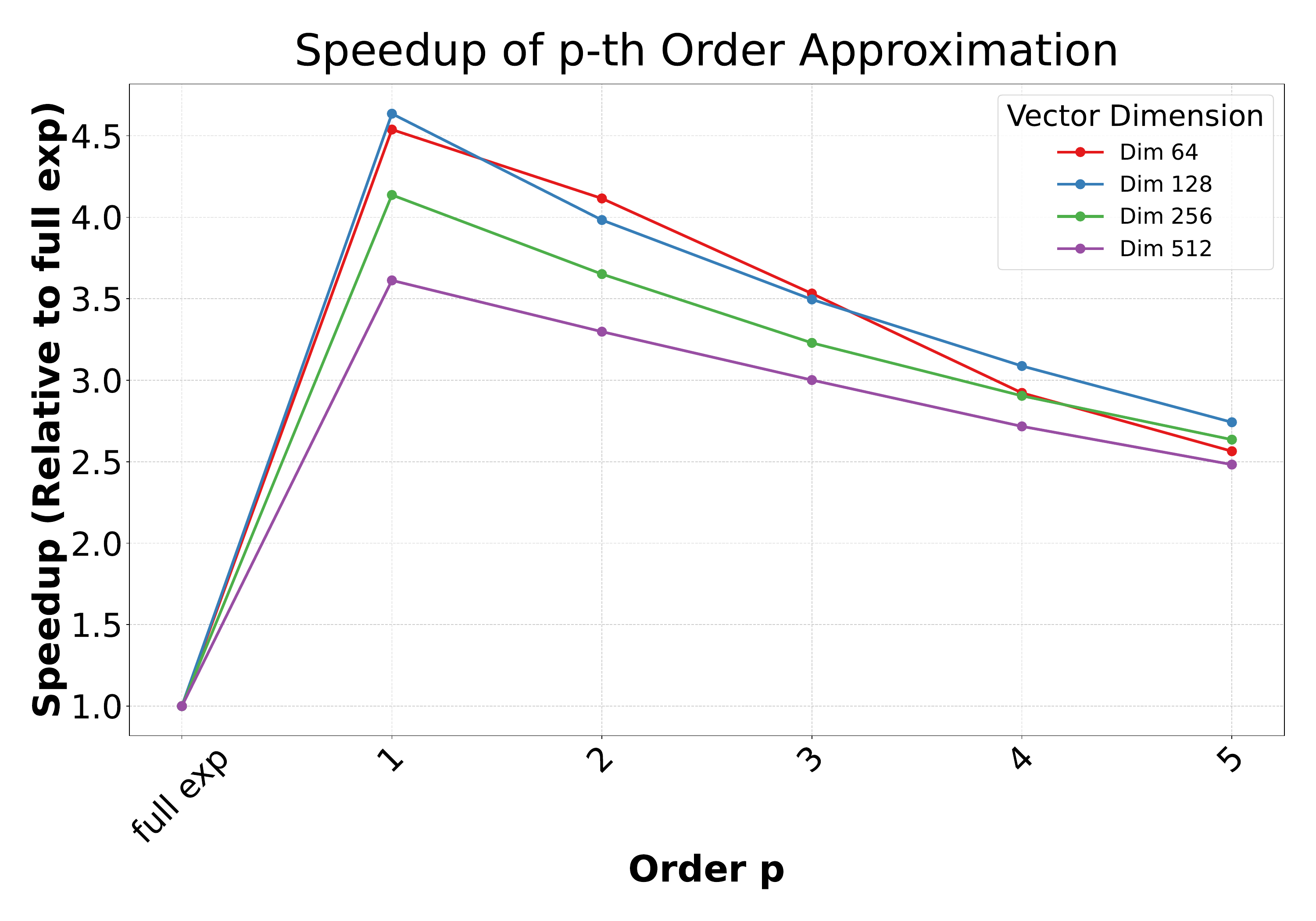}
    \caption{Speedup of the OliO module's $p$-th order approximation relative to the full matrix exponential (computed via \texttt{torch.matrix\_exp}). The benchmark evaluates the computational efficiency of the \texttt{TriangularCausalShift} module using 4,096 samples per operation, with values averaged over 5 independent trials on an NVIDIA RTX 3090 GPU. The first-order approximation ($p=1$) provides an acceleration of approximately 3.5$\times$ to 4.5$\times$ compared to the exact computation, highlighting the high-throughput capability of OliO in processing high-dimensional temporal representations.}
    \label{fig:speed_comparison}
\end{figure*}

\textbf{Empirical transition-computation efficiency.}
Fig.~\ref{fig:speed_comparison} reports a dimension-wise benchmark of the core \texttt{OliO} transition module (\texttt{TriangularCausalShift}) relative to \texttt{torch.matrix\_exp}. The benchmark uses 4,096 samples per operation, latent dimensions $\{64,128,256,512\}$, and five independent trials on an NVIDIA RTX 3090 GPU. Each speedup is relative to the full matrix exponential at the same dimension. Speedup varies with representation size and decreases as more polynomial terms are retained. Fig.~\ref{fig:speed_up} gives the order-wise summary in the main text. These measurements characterize transition computation during pre-training. Downstream forecasting uses the original backbone after the projection and transition modules have been discarded.

\clearpage

\section{Datasets}
\label{sec:datasets}

We describe the six benchmark datasets used for full-shot and few-shot forecasting. Table~\ref{tab:dataset_statistics} summarizes their characteristics. The eight downstream datasets used in the LightGTS zero-shot evaluation are reported in App.~\ref{app:zero_shot}.

\begin{table}[h]
\centering
\caption{Statistics of the benchmark datasets used in our forecasting experiments.}
\label{tab:dataset_statistics}
\begin{adjustbox}{width=\linewidth}
\begin{tabular}{lccccr}
\toprule
Dataset & Channels & Sampling Rate & Time Steps & Train/Val/Test Split & Features \\ \midrule
ETTh1, h2 & 7 & 1 hour & 14,307 & 8,545 / 2,881 / 2,881 & Transformer oil temperature \\
ETTm1, m2 & 7 & 15 min & 57,507 & 34,465 / 11,521 / 11,521 & Transformer oil temperature \\
Weather & 21 & 10 min & 52,603 & 36,792 / 5,271 / 10,540 & Meteorological indicators \\
ECL & 321 & 1 hour & 26,211 & 18,317 / 2,633 / 5,261 & Power consumption \\ \bottomrule
\end{tabular}
\end{adjustbox}
\end{table}


\subsection{Textual prompts for TimeVLM}
For all methods using the TimeVLM backbone, we use the textual prompts supplied by its official implementation without modification. The quotations below document these inherited text inputs, while Table~\ref{tab:dataset_statistics} describes the benchmark data used for forecasting.

\paragraph{ETT (ETTh1, ETTh2, ETTm1, ETTm2)}
\begin{quote}
\textit{``The Electricity Transformer Temperature (ETT) is a crucial indicator in the electric power long-term deployment. This dataset consists of 2 years data from two separated counties in China. To explore the granularity on the Long sequence time-series forecasting (LSTF) problem, different subsets are created, \{ETTh1, ETTh2\} for 1-hour-level and ETTm1 for 15-minutes-level. Each data point consists of the target value "oil temperature" and 6 power load features. The train/val/test is 12/4/4 months.''}
\end{quote}

\paragraph{Weather}
\begin{quote}
\textit{``Weather is recorded every 10 minutes for the 2020 whole year, which contains 21 meteorological indicators, such as air temperature, humidity, etc.''}
\end{quote}

\paragraph{Electricity (ECL)}
\begin{quote}
\textit{``Measurements of electric power consumption in one household with a one-minute sampling rate over a period of almost 4 years. Different electrical quantities and some sub-metering values are available. This archive contains 2075259 measurements gathered in a house located in Sceaux (7km of Paris, France) between December 2006 and November 2010 (47 months).''}
\end{quote}
The ECL experiments use the 321-channel hourly benchmark listed in Table~\ref{tab:dataset_statistics}. The Electricity quotation above is retained as part of the inherited TimeVLM prompt configuration.

\section{Implementation details}
\label{sec:implementation_details}

In this section, we provide the detailed experimental setup, including the hardware environment, software framework, loss functions, and hyperparameters used for our experiments.

\subsection{Experimental setup}
All experiments were conducted on a single NVIDIA GeForce RTX 3090 GPU with 24GB of memory. We implemented our framework using the PyTorch library. For all downstream forecasting tasks, we employed the $L_2$ loss for training. The performance was evaluated using two standard metrics: Mean Squared Error (MSE) and Mean Absolute Error (MAE), defined as follows:
\begin{equation}
    \text{MSE} = \frac{1}{n} \sum_{i=1}^{n} (y_i - \hat{y}_i)^2
\end{equation}
\begin{equation}
    \text{MAE} = \frac{1}{n} \sum_{i=1}^{n} |y_i - \hat{y}_i|
\end{equation}
where $y_i$ and $\hat{y}_i$ denote the ground truth and the predicted values, respectively, and $n$ is the total number of data points.

Training, validation, and test periods are chronologically separated. Pre-training pairs are drawn from the training period and exclude held-out forecasting targets. We use seeds 2023, 2024, and 2025 for three independent runs; tables report means and sample standard deviations.

\subsection{Self-supervised methods}
We compared OliO with several state-of-the-art self-supervised learning (SSL) methods, which can be broadly categorized into two groups: Masked and Reconstruction-based methods, and Contrastive Learning methods. Table~\ref{tab:ssl_methods} summarizes the compared methods.

\begin{table}[h]
\centering
\caption{Summary of self-supervised learning methods compared in our experiments.}
\label{tab:ssl_methods}
\begin{adjustbox}{width=\linewidth}
\begin{tabular}{lll}
\toprule
Category & Method & Key Feature \\ \midrule
\multirow{2}{*}{Masked \& Reconstruction} & TimeSiam \cite{TimeSiam} & Siamese modeling with temporal masking \\
 & SimMTM \cite{SimMTM} & Masked time-series modeling with manifold learning \\ \midrule
\multirow{3}{*}{Contrastive Learning} & TimesURL \cite{TimesURL} & Universal relational learning for time series \\
 & SoftCLT \cite{SoftCLT} & Soft contrastive learning for time series \\
 & TS2Vec \cite{TS2Vec} & Hierarchical contrastive learning on timestamp level \\ \bottomrule
\end{tabular}
\end{adjustbox}
\end{table}

Each SSL objective uses the representation interface available in the selected backbone. For iTransformer~\cite{iTransformer} and TimeVLM~\cite{TimeVLM}, temporal loss components requiring a timestamp axis are omitted when the latent representation has no such axis.

PatchTST retains a temporal latent axis compatible with timestamp-indexed objectives. The other backbones use their available latent interfaces under the same plug-and-play protocol. The evaluated TimeSiam implementation retains temporally separated subseries, Siamese encoding, masking, lineage embeddings, its cross-attention decoder, and past-to-current reconstruction. These configurations define the shared-backbone comparisons reported in Sec.~\ref{sec:experiment}.

\subsection{Finetuning and hyperparameter search}
After pre-training, we fully finetune each backbone for forecasting. For each method--backbone--dataset--horizon setting, we select one configuration by validation MSE from the predefined search space. The selected configuration is fixed before three-seed test evaluation; test performance is excluded from hyperparameter and checkpoint selection. Baseline-specific search spaces follow the protocols documented in the original papers. Specifically, for TimeSiam, we utilized the settings reported in \citet[App. A.2, Table 10]{TimeSiam}, which specify a learning rate of $1 \times 10^{-4}$ and a dropout search space of $\{0, 0.1, 0.2, 0.3\}$. Similarly, for SimMTM, we adopted the configuration from \citet[App. A.4, Table 10]{SimMTM} with a learning rate of $1 \times 10^{-4}$ and a dropout rate search space of $\{0, 0.1, 0.2, 0.3\}$. For SoftCLT, we used a learning rate of $1 \times 10^{-3}$ and searched for the optimal dropout rate within $\{0, 0.1, 0.2, 0.3\}$, as detailed in \citet[App. C, Table C.1]{SoftCLT}. For other SSL methods and backbone models, validation selection uses the following search space:
\begin{itemize}
    \item Learning rate: $\{1 \times 10^{-3}, 5 \times 10^{-4}, 1 \times 10^{-4}, 1 \times 10^{-5}\}$
    \item Dropout rate: $\{0, 0.1, 0.2, 0.3\}$
\end{itemize}

In general, we observed that employing a sequence length for pre-training that is twice that of fine-tuning was more effective for capturing diverse temporal dynamics to model the long-term dependencies. For the fine-tuning stage, a learning rate of $1 \times 10^{-4}$ and a dropout rate of $0.0$ usually yielded the better performance across most benchmarks. Regarding the projection dimension ($d_h$), as no clear performance trend was observed during our preliminary grid search, we followed common practices in contrastive learning and utilized dimensions within the range of $128 \sim 256$.

For OliO pre-training with the iTransformer backbone, we also performed a grid search to determine the optimal hyperparameters for each dataset. The search space included \texttt{projection\_dim} $\in \{128, 256\}$, \texttt{learning\_rate} $\in \{1 \times 10^{-3}, 7 \times 10^{-4}\}$, \texttt{train\_epochs} $\in \{10, 20\}$, and \texttt{max\_tau} $\in \{8, 32, 64\}$. The dataset-specific settings were as follows:
\begin{itemize}
    \item Electricity: \texttt{seq\_len} $\in \{96, 192\}$, \texttt{batch\_size} $= 16$.
    \item ETTh1: \texttt{seq\_len} $\in \{96, 192\}$, \texttt{batch\_size} $= 32$.
    \item ETTh2, ETTm1, ETTm2: \texttt{seq\_len} $\in \{336, 672\}$, \texttt{batch\_size} $= 32$.
    \item Weather: \texttt{seq\_len} $\in \{336, 672\}$, \texttt{batch\_size} $= 16$.
\end{itemize}

For OliO pre-training with the TimeVLM backbone, the search space for \texttt{projection\_dim}, \texttt{learning\_rate}, \texttt{train\_epochs}, and \texttt{max\_tau} was kept identical to the iTransformer experiments. The dataset-specific configurations were as follows:
\begin{itemize}
    \item Electricity: \texttt{seq\_len} $\in \{512, 1024\}$, \texttt{batch\_size} $= 16$.
    \item ETTh1, ETTh2, ETTm1, ETTm2, and Weather: \texttt{seq\_len} $\in \{512, 1024\}$, \texttt{batch\_size} $= 32$.
\end{itemize}

For OliO pre-training with the PatchTST backbone, the search space for the common hyperparameters was also maintained as described above. The dataset-specific settings were as follows:
\begin{itemize}
    \item Electricity: \texttt{seq\_len} $\in \{336, 672\}$, \texttt{batch\_size} $= 8$.
    \item ETTh1, ETTh2, ETTm1, ETTm2, and Weather: \texttt{seq\_len} $\in \{336, 672\}$, \texttt{batch\_size} $= 32$.
\end{itemize}

\subsection{Computational cost}
Each individual experimental session (including pre-training and forecasting fine-tuning for a single dataset) typically takes approximately one hour on a single NVIDIA RTX 3090 GPU.

\subsection{Visualization settings}
Dimensionality-reduction tools such as t-SNE~\cite{tsne}, UMAP~\cite{umap}, and PHATE~\cite{phate} provide low-dimensional views of learned representations. We use UMAP and PHATE to obtain complementary two-dimensional projections, with points colored by the time index of each historical window. App.~\ref{appendix:latent_visualizations} presents the projections alongside the forecasting and representation-quality evaluations in Secs.~\ref{sec:experiment} and~\ref{sec:representation_quality}.

\subsection{Baseline model architectures}
\label{sec:baseline_models}

In our experiments, we evaluated OliO across three representative backbone architectures, each highlighting different structural paradigms in time-series modeling. For all models, we followed the official hyperparameter settings (e.g., embedding dimension, number of layers, and feed-forward network dimension) as specified in their respective repositories.

\begin{itemize}
    \item \textbf{PatchTST} \cite{PatchTST}: A representative patch-based, channel-independent model. By segmenting time series into sub-series patches, it captures local semantic information while maintaining computational efficiency. Following the official implementation, the patch size was set to 12.
    \begin{itemize}
        \item Repository: \url{https://github.com/yuqinie98/patchtst}
        \item Settings: $\text{seq\_len}=336$, $\text{pred\_len} \in \{96, 192, 336, 720\}$.
    \end{itemize}

    \item \textbf{iTransformer} \cite{iTransformer}: A channel-dependent architecture that treats each channel as a single token. This approach leverages the multi-head attention mechanism to explicitly capture inter-channel dependencies, making it effective for multivariate forecasting.
    \begin{itemize}
        \item Repository: \url{https://github.com/thuml/iTransformer}
        \item Settings: $\text{seq\_len}=96$, $\text{pred\_len} \in \{96, 192, 336, 720\}$.
    \end{itemize}

    \item \textbf{TimeVLM} \cite{TimeVLM}: A state-of-the-art multimodal model that integrates vision, text, and time-series data. It aligns temporal features with natural language descriptions to enhance predictive performance.
    \begin{itemize}
        \item Repository: \url{https://github.com/CityMind-Lab/ICML25-TimeVLM}
        \item Settings: $\text{seq\_len}=512$, $\text{pred\_len} \in \{96, 192, 336, 720\}$.
    \end{itemize}
\end{itemize}

All backbones use the same prediction horizons. Each backbone retains its standard preprocessing and normalization across the compared SSL methods.

\section{Additional representation quality and transferability}
\label{app:representation_quality}

\subsection{Frozen-backbone linear probing}

We provide the detailed frozen-backbone linear probing results corresponding to
Sec.~\ref{sec:representation_quality}. After self-supervised pre-training, all backbone parameters are frozen and only an identical linear forecasting head is trained. We compare OliO with TimesURL using PatchTST and iTransformer across six datasets and four prediction horizons. Each reported dataset value averages the four forecasting horizons. The deviation shown after $\pm$ is the average of their three-seed sample standard deviations, rather than the standard deviation of the horizon-averaged score.

\begin{table}[t]
\centering
\caption{Frozen-backbone linear probing results. Each dataset value averages four forecasting horizons. Lower MSE is better.}
\label{tab:frozen_probe_full}
\begin{tabular}{llcc}
\toprule
Backbone & Dataset & TimesURL & OliO \\
\midrule

\multirow{7}{*}{PatchTST}
& Electricity & 0.1846 $\pm$ 0.0012 & \textbf{0.1833 $\pm$ 0.0004} \\
& ETTh1       & 0.4147 $\pm$ 0.0021 & \textbf{0.4107 $\pm$ 0.0009} \\
& ETTh2       & \textbf{0.3409 $\pm$ 0.0006} & 0.3461 $\pm$ 0.0011 \\
& ETTm1       & \textbf{0.3546 $\pm$ 0.0009} & 0.3616 $\pm$ 0.0012 \\
& ETTm2       & 0.2550 $\pm$ 0.0006 & \textbf{0.2528 $\pm$ 0.0005} \\
& Weather     & 0.2469 $\pm$ 0.0015 & \textbf{0.2439 $\pm$ 0.0005} \\
& \textbf{Average}
              & \textbf{0.2994 $\pm$ 0.0011} & 0.2997 $\pm$ 0.0008 \\
\midrule

\multirow{7}{*}{iTransformer}
& Electricity & 0.2266 $\pm$ 0.0004 & \textbf{0.2161 $\pm$ 0.0003} \\
& ETTh1       & \textbf{0.4535 $\pm$ 0.0011} & 0.4536 $\pm$ 0.0011 \\
& ETTh2       & \textbf{0.3796 $\pm$ 0.0013} & 0.3811 $\pm$ 0.0012 \\
& ETTm1       & 0.4176 $\pm$ 0.0002 & \textbf{0.4108 $\pm$ 0.0006} \\
& ETTm2       & 0.2896 $\pm$ 0.0005 & \textbf{0.2862 $\pm$ 0.0007} \\
& Weather     & 0.2805 $\pm$ 0.0005 & \textbf{0.2738 $\pm$ 0.0005} \\
& \textbf{Average}
              & 0.3412 $\pm$ 0.0007 & \textbf{0.3369 $\pm$ 0.0007} \\
\midrule
\multicolumn{2}{l}{\textbf{Overall average}}
& 0.3203 & \textbf{0.3183} \\
\bottomrule
\end{tabular}
\end{table}

Across the 48 backbone--dataset--horizon configurations, OliO achieves lower MSE than TimesURL in 35 settings. When the four horizons are aggregated at the dataset level, OliO performs better in 8 of 12 backbone--dataset combinations. PatchTST shows comparable overall performance between the two methods, whereas OliO provides a clearer improvement on iTransformer, reducing the average MSE from 0.3412 to 0.3369. Overall, the aggregate MSE decreases from 0.3203 to 0.3183.

\subsection{Zero-shot performance with LightGTS-4M}
\label{app:zero_shot}

We follow the LightGTS-4M~\cite{wang2025lightgts} zero-shot protocol, preserving its architecture, forecasting hyperparameters, and pre-training data. A single pretrained checkpoint is evaluated on eight downstream datasets disjoint from pre-training, with four forecasting horizons per dataset. Downstream evaluation uses the pretrained parameters directly, without parameter updates or checkpoint selection. We compare TimesURL, the default LightGTS mask-reconstruction objective, and OliO across the resulting 32 settings.

Table~\ref{tab:zeroshot_full} reports horizon-level MSE. The eight-dataset average is 0.3034 for OliO, 0.3114 for mask reconstruction, and 0.3219 for TimesURL. The main text summarizes these same datasets after averaging over the four horizons.

\paragraph{Additional error metric.}
Across the eight datasets, average MAE is 0.3469 for mask reconstruction and 0.3391 for OliO. Dataset-average MAE also decreases from 0.348 to 0.335 on Electricity, from 0.254 to 0.232 on Solar, and from 0.420 to 0.399 on Traffic. These dataset averages cover the four forecasting horizons and are rounded to three decimal places.

\begin{table}[t]
\centering
\caption{
Full zero-shot forecasting results with the LightGTS-4M time-series foundation model~\cite{wang2025lightgts}.
Mask Recon. denotes its default mask-reconstruction pre-training.
Best results are in \textbf{bold} and the second best are \underline{underlined}. Ranks are determined from the reported values rounded to four decimal places. If multiple methods share the best reported value, no method is counted as Top-1 and all tied methods are treated as Top-2.
Lower MSE is better.
}
\label{tab:zeroshot_full}

\begin{adjustbox}{max width=\linewidth}
\begin{tabular}{llccc}
\toprule
\multicolumn{2}{c}{\multirow{2}{*}{Dataset}}
& \multicolumn{3}{c}{LightGTS-4M} \\
\cmidrule(lr){3-5}
& & TimesURL & Mask Recon. & \textbf{OliO} \\
\midrule

\multirow{4}{*}{\rotatebox{90}{ETTh1}}
& 96  & .3578 & \underline{.3490} & \textbf{.3481} \\
& 192 & .3967 & \underline{.3902} & \textbf{.3859} \\
& 336 & .4213 & \underline{.4115} & \textbf{.4061} \\
& 720 & .4626 & \textbf{.4324} & \underline{.4345} \\
\midrule

\multirow{4}{*}{\rotatebox{90}{ETTh2}}
& 96  & .2814 & \underline{.2802} & \textbf{.2758} \\
& 192 & \underline{.3322} & .3353 & \textbf{.3309} \\
& 336 & \underline{.3592} & .3632 & \underline{.3592} \\
& 720 & .4274 & \underline{.4110} & \textbf{.4022} \\
\midrule

\multirow{4}{*}{\rotatebox{90}{ETTm1}}
& 96  & .3083 & \textbf{.2884} & \underline{.2905} \\
& 192 & .3287 & \underline{.3166} & \textbf{.3164} \\
& 336 & .3526 & \underline{.3423} & \textbf{.3417} \\
& 720 & .3947 & \underline{.3866} & \textbf{.3828} \\
\midrule

\multirow{4}{*}{\rotatebox{90}{ETTm2}}
& 96  & .1863 & \textbf{.1650} & \underline{.1704} \\
& 192 & .2345 & \textbf{.2205} & \underline{.2206} \\
& 336 & .2747 & \underline{.2677} & \textbf{.2655} \\
& 720 & \underline{.3309} & .3311 & \textbf{.3277} \\
\midrule

\multirow{4}{*}{\rotatebox{90}{Weather}}
& 96  & .1587 & \textbf{.1420} & \underline{.1450} \\
& 192 & .2063 & \textbf{.1818} & \underline{.1867} \\
& 336 & .2587 & \textbf{.2305} & \underline{.2381} \\
& 720 & .3190 & \textbf{.2994} & \underline{.3028} \\
\midrule

\multirow{4}{*}{\rotatebox{90}{ECL}}
& 96  & .1859 & \underline{.1684} & \textbf{.1650} \\
& 192 & .2264 & \underline{.2172} & \textbf{.1994} \\
& 336 & \underline{.2681} & .2698 & \textbf{.2423} \\
& 720 & .3421 & \underline{.3349} & \textbf{.3232} \\
\midrule

\multirow{4}{*}{\rotatebox{90}{Solar}}
& 96  & .1560 & \underline{.1553} & \textbf{.1453} \\
& 192 & \underline{.1668} & .1684 & \textbf{.1539} \\
& 336 & \underline{.1741} & .1777 & \textbf{.1656} \\
& 720 & \textbf{.1746} & .1835 & \underline{.1798} \\
\midrule

\multirow{4}{*}{\rotatebox{90}{Traffic}}
& 96  & .4561 & \underline{.4069} & \textbf{.4037} \\
& 192 & .5166 & \underline{.4897} & \textbf{.4549} \\
& 336 & \underline{.5672} & .5761 & \textbf{.5077} \\
& 720 & .6760 & \underline{.6728} & \textbf{.6377} \\
\midrule

\multicolumn{2}{l}{Average}
& .3219 & .3114 & \textbf{.3034} \\

\bottomrule
\end{tabular}
\end{adjustbox}
\end{table}

\section{Few-shot forecasting results}
\label{app:fewshot_results}

We evaluate forecasting performance using limited labeled training data. Following the sampling protocol of TimeVLM~\cite{TimeVLM}, we construct few-shot training sets by randomly selecting 1\%, 5\%, and 10\% of the original training data. These experiments assess the pretrained representations when labeled data for downstream forecasting are scarce.

\subsection{Summary of few-shot results}
Figs.~\ref{fig:fewshot_summary_patchtst}, \ref{fig:fewshot_summary_itransformer}, and~\ref{fig:fewshot_summary_timevlm} summarize the relative MSE across the three few-shot ratios. The relative performance index is the mean MSE of OliO divided by the mean MSE of the corresponding method, each averaged over the same 24 dataset--horizon settings, with OliO fixed at 1.0. Values above 1.0 indicate lower MSE than OliO, while values below 1.0 indicate higher MSE.

The following tables report per-setting means and standard deviations. Ranks use the means rounded to four decimal places, and ties receive the last occupied rank (e.g., $2,2,3$ or $1,3,3$). Bold and underlined entries denote ranks 1 and 2, respectively; Top-1 and Top-2 counts tally these entries.

\begin{figure*}[h]
    \centering
    \includegraphics[width=1.0\linewidth]{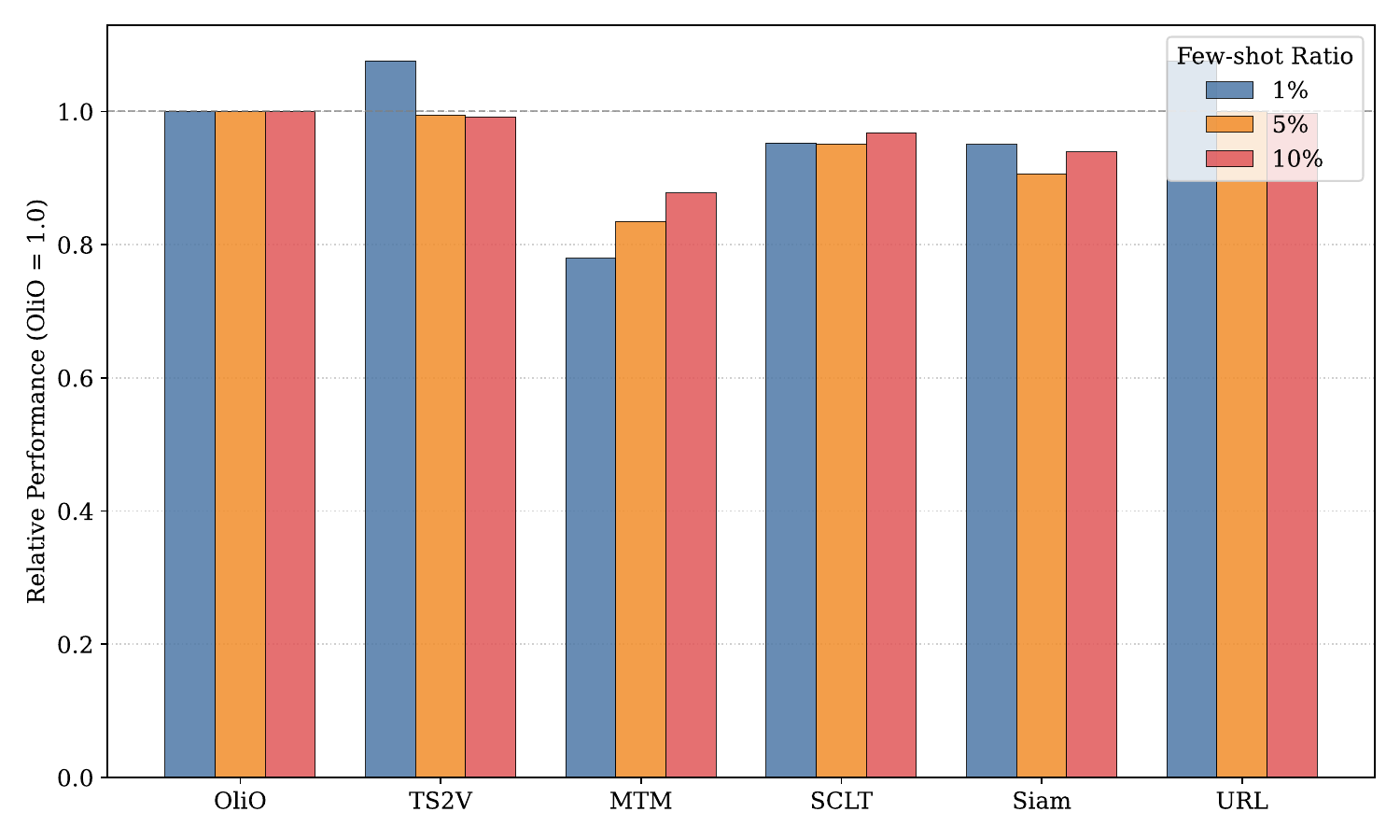}
    \caption{Summary of relative few-shot forecasting performance for \textbf{PatchTST} backbone across 1\%, 5\%, and 10\% settings. Performance is normalized relative to OliO ($=1.0$), where higher values indicate better forecasting quality (lower MSE).}
    \label{fig:fewshot_summary_patchtst}
\end{figure*}

\begin{figure*}[h]
    \centering
    \includegraphics[width=1.0\linewidth]{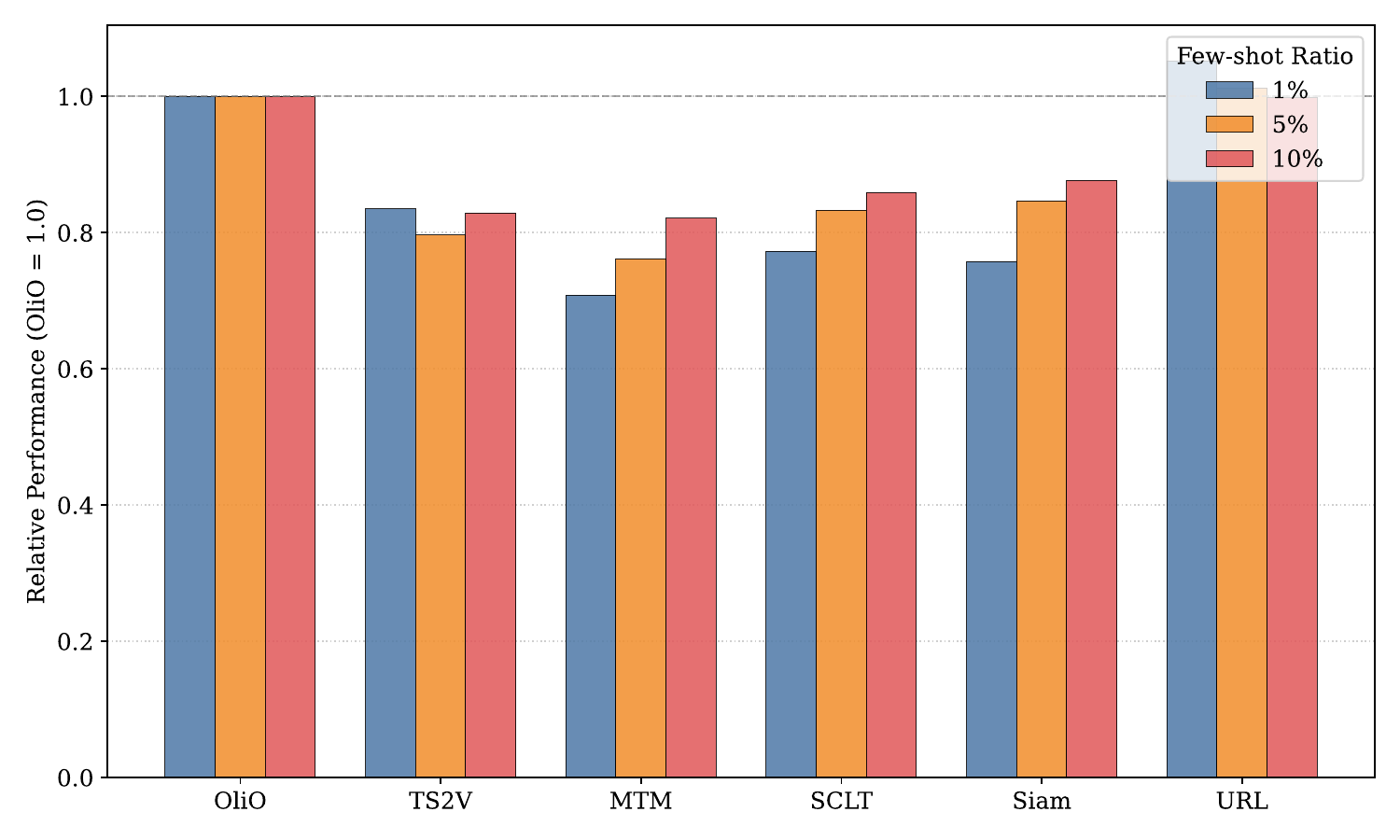}
    \caption{Summary of relative few-shot forecasting performance for \textbf{iTransformer} backbone across 1\%, 5\%, and 10\% settings. Performance is normalized relative to OliO ($=1.0$), where higher values indicate better forecasting quality (lower MSE).}
    \label{fig:fewshot_summary_itransformer}
\end{figure*}

\begin{figure*}[h]
    \centering
    \includegraphics[width=1.0\linewidth]{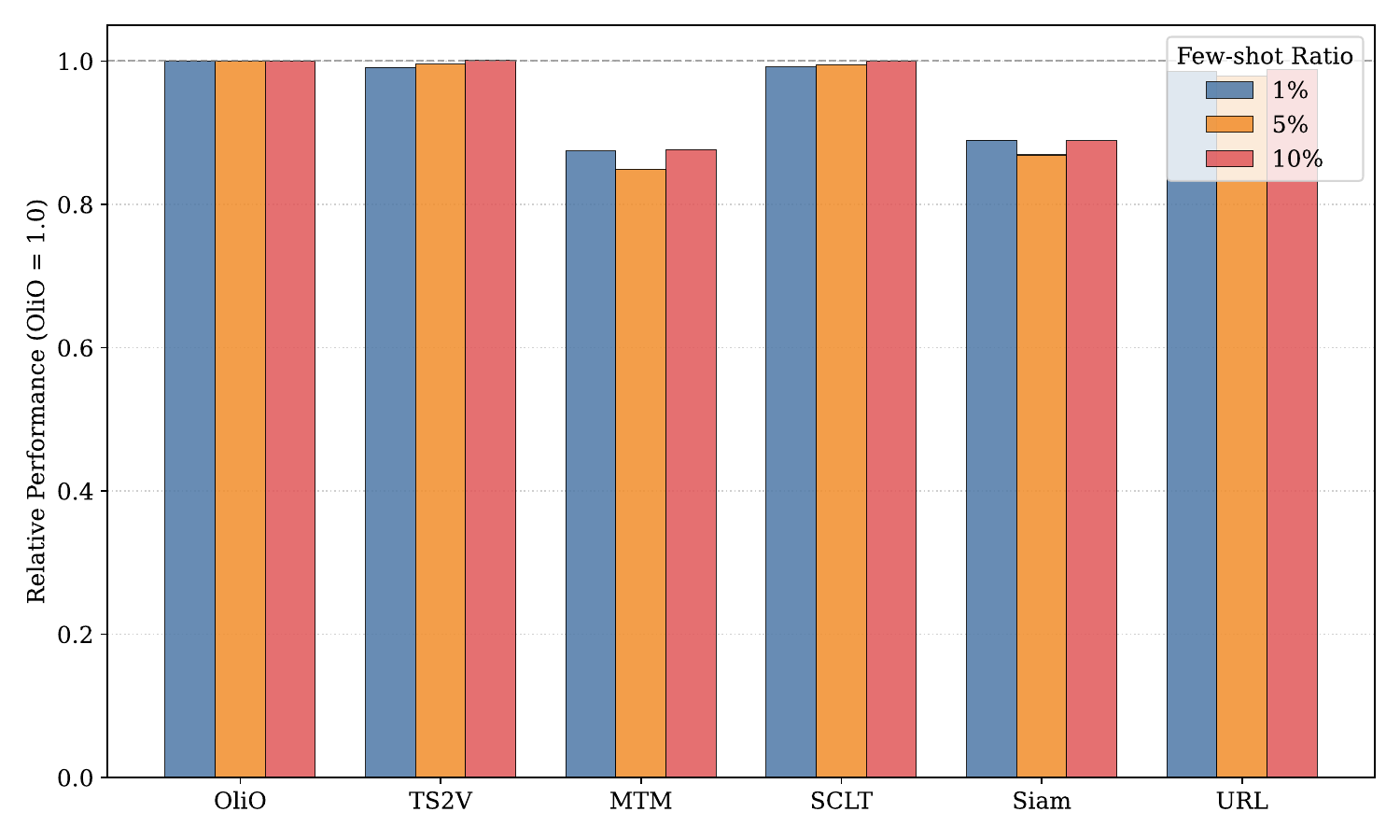}
    \caption{Summary of relative few-shot forecasting performance for \textbf{TimeVLM} backbone across 1\%, 5\%, and 10\% settings. Performance is normalized relative to OliO ($=1.0$), where higher values indicate better forecasting quality (lower MSE).}
    \label{fig:fewshot_summary_timevlm}
\end{figure*}

\clearpage
\subsection{PatchTST few-shot results}
\begin{table*}[!htbp]
\centering
\caption{1\% few-shot forecasting results for PatchTST~\cite{PatchTST} backbone (part 1). TS2Vec~\cite{TS2Vec} is denoted as TS2V, and SimMTM~\cite{SimMTM} as MTM. Best results are in \textbf{bold} and the second best are \underline{underlined}. (SCLT, Siam, and URL results are in Table~\ref{tab:fewshot_01_patchtst_p2}.)}
\label{tab:fewshot_01_patchtst_p1}
\begin{adjustbox}{width=1.0\linewidth}
\begin{tabular}{llcccccc}
\toprule
 \multicolumn{2}{c}{\multirow{2}{*}{Dataset}} & \multicolumn{2}{c}{\textbf{OliO (Ours)}} & \multicolumn{2}{c}{TS2V \cite{TS2Vec}} & \multicolumn{2}{c}{MTM \cite{SimMTM}}\\
\cmidrule(lr){3-4} \cmidrule(lr){5-6} \cmidrule(lr){7-8} & & MSE & MAE & MSE & MAE & MSE & MAE\\

\midrule
 \multirow{4}{*}{\rotatebox{90}{ETTh1}} & 96  & .6167$\pm$.0088 & .5307$\pm$.0033 & \textbf{.4697$\pm$.0042} & \textbf{.4594$\pm$.0022} & .7179$\pm$.0101 & .5701$\pm$.0036\\
& 192 & .6294$\pm$.0090 & .5401$\pm$.0042 & \textbf{.4979$\pm$.0046} & \textbf{.4768$\pm$.0023} & .7295$\pm$.0031 & .5781$\pm$.0013\\
& 336 & .6262$\pm$.0079 & .5453$\pm$.0038 & \textbf{.5049$\pm$.0029} & \textbf{.4829$\pm$.0014} & .7251$\pm$.0019 & .5833$\pm$.0004\\
& 720 & .6198$\pm$.0052 & .5631$\pm$.0017 & \textbf{.5134$\pm$.0072} & \textbf{.5058$\pm$.0035} & .7210$\pm$.0030 & .5999$\pm$.0016\\

\midrule
 \multirow{4}{*}{\rotatebox{90}{ETTh2}} & 96  & .3471$\pm$.0010 & .3923$\pm$.0026 & \textbf{.3290$\pm$.0037} & \textbf{.3788$\pm$.0005} & .3931$\pm$.0003 & .4246$\pm$.0005\\
& 192 & .3888$\pm$.0080 & .4205$\pm$.0038 & \textbf{.3790$\pm$.0092} & \textbf{.4135$\pm$.0054} & .4114$\pm$.0006 & .4371$\pm$.0004\\
\bottomrule

\end{tabular}
\end{adjustbox}
\end{table*}

\begin{table*}[!htbp]
\centering
\caption{1\% few-shot forecasting results for PatchTST~\cite{PatchTST} backbone (part 2). SoftCLT~\cite{SoftCLT} is denoted as SCLT, TimeSiam~\cite{TimeSiam} as Siam, and TimesURL~\cite{TimesURL} as URL. Best results are in \textbf{bold} and the second best are \underline{underlined}. (OliO, TS2V, and MTM results are in Table~\ref{tab:fewshot_01_patchtst_p1}.)}
\label{tab:fewshot_01_patchtst_p2}
\begin{adjustbox}{width=1.0\linewidth}
\begin{tabular}{llcccccc}
\toprule
 \multicolumn{2}{c}{\multirow{2}{*}{Dataset}} & \multicolumn{2}{c}{SCLT \cite{SoftCLT}} & \multicolumn{2}{c}{Siam \cite{TimeSiam}} & \multicolumn{2}{c}{URL \cite{TimesURL}}\\
\cmidrule(lr){3-4} \cmidrule(lr){5-6} \cmidrule(lr){7-8} & & MSE & MAE & MSE & MAE & MSE & MAE\\

\midrule
 \multirow{4}{*}{\rotatebox{90}{ETTh1}} & 96  & .6120$\pm$.0147 & .5269$\pm$.0061 & .7105$\pm$.0029 & .5661$\pm$.0015 & \underline{.4884$\pm$.0035} & \underline{.4722$\pm$.0021}\\
& 192 & .6315$\pm$.0087 & .5385$\pm$.0037 & .7178$\pm$.0004 & .5739$\pm$.0009 & \underline{.5192$\pm$.0034} & \underline{.4902$\pm$.0017}\\
& 336 & .6290$\pm$.0054 & .5434$\pm$.0029 & .7116$\pm$.0040 & .5776$\pm$.0019 & \underline{.5264$\pm$.0029} & \underline{.4962$\pm$.0013}\\
& 720 & .6291$\pm$.0074 & .5617$\pm$.0027 & .7095$\pm$.0018 & .5939$\pm$.0011 & \underline{.5328$\pm$.0088} & \underline{.5170$\pm$.0044}\\

\midrule
 \multirow{4}{*}{\rotatebox{90}{ETTh2}} & 96  & .3785$\pm$.0010 & .4157$\pm$.0008 & .3868$\pm$.0030 & .4207$\pm$.0015 & \underline{.3354$\pm$.0031} & \underline{.3838$\pm$.0013}\\
& 192 & .4061$\pm$.0042 & .4345$\pm$.0023 & .4121$\pm$.0019 & .4372$\pm$.0015 & \underline{.3883$\pm$.0105} & \underline{.4164$\pm$.0064}\\
\bottomrule

\end{tabular}
\end{adjustbox}
\end{table*}

\clearpage
\begin{table*}[!htbp]
\centering
\caption{5\% few-shot forecasting results for PatchTST~\cite{PatchTST} backbone (part 1). TS2Vec~\cite{TS2Vec} is denoted as TS2V, and SimMTM~\cite{SimMTM} as MTM. Best results are in \textbf{bold} and the second best are \underline{underlined}. (SCLT, Siam, and URL results are in Table~\ref{tab:fewshot_05_patchtst_p2}.)}
\label{tab:fewshot_05_patchtst_p1}
\begin{adjustbox}{width=1.0\linewidth}
\begin{tabular}{llcccccc}
\toprule
 \multicolumn{2}{c}{\multirow{2}{*}{Dataset}} & \multicolumn{2}{c}{\textbf{OliO (Ours)}} & \multicolumn{2}{c}{TS2V \cite{TS2Vec}} & \multicolumn{2}{c}{MTM \cite{SimMTM}}\\
\cmidrule(lr){3-4} \cmidrule(lr){5-6} \cmidrule(lr){7-8} & & MSE & MAE & MSE & MAE & MSE & MAE\\

\midrule
 \multirow{4}{*}{\rotatebox{90}{ETTh1}} & 96  & .4168$\pm$.0021 & .4311$\pm$.0009 & \textbf{.4128$\pm$.0017} & \textbf{.4258$\pm$.0009} & .6222$\pm$.0138 & .5332$\pm$.0055\\
& 192 & \underline{.4463$\pm$.0002} & .4482$\pm$.0001 & \textbf{.4446$\pm$.0021} & \textbf{.4456$\pm$.0016} & .6465$\pm$.0055 & .5461$\pm$.0024\\
& 336 & \underline{.4623$\pm$.0020} & \underline{.4593$\pm$.0003} & \textbf{.4595$\pm$.0028} & \textbf{.4565$\pm$.0010} & .6518$\pm$.0005 & .5535$\pm$.0006\\
& 720 & \underline{.4641$\pm$.0009} & .4793$\pm$.0002 & \textbf{.4620$\pm$.0011} & \textbf{.4775$\pm$.0005} & .6685$\pm$.0042 & .5770$\pm$.0015\\

\midrule
 \multirow{4}{*}{\rotatebox{90}{ETTh2}} & 96  & .3077$\pm$.0101 & .3636$\pm$.0061 & \textbf{.3011$\pm$.0054} & \textbf{.3580$\pm$.0037} & .3894$\pm$.0008 & .4228$\pm$.0009\\
& 192 & .3650$\pm$.0078 & .3996$\pm$.0047 & \underline{.3575$\pm$.0017} & \textbf{.3942$\pm$.0018} & .4156$\pm$.0010 & .4396$\pm$.0007\\
\bottomrule

\end{tabular}
\end{adjustbox}
\end{table*}

\begin{table*}[!htbp]
\centering
\caption{5\% few-shot forecasting results for PatchTST~\cite{PatchTST} backbone (part 2). SoftCLT~\cite{SoftCLT} is denoted as SCLT, TimeSiam~\cite{TimeSiam} as Siam, and TimesURL~\cite{TimesURL} as URL. Best results are in \textbf{bold} and the second best are \underline{underlined}. (OliO, TS2V, and MTM results are in Table~\ref{tab:fewshot_05_patchtst_p1}.)}
\label{tab:fewshot_05_patchtst_p2}
\begin{adjustbox}{width=1.0\linewidth}
\begin{tabular}{llcccccc}
\toprule
 \multicolumn{2}{c}{\multirow{2}{*}{Dataset}} & \multicolumn{2}{c}{SCLT \cite{SoftCLT}} & \multicolumn{2}{c}{Siam \cite{TimeSiam}} & \multicolumn{2}{c}{URL \cite{TimesURL}}\\
\cmidrule(lr){3-4} \cmidrule(lr){5-6} \cmidrule(lr){7-8} & & MSE & MAE & MSE & MAE & MSE & MAE\\

\midrule
 \multirow{4}{*}{\rotatebox{90}{ETTh1}} & 96  & .4564$\pm$.0002 & .4543$\pm$.0002 & .5686$\pm$.0011 & .5062$\pm$.0007 & \underline{.4138$\pm$.0033} & \underline{.4275$\pm$.0020}\\
& 192 & .4851$\pm$.0011 & .4704$\pm$.0007 & .5866$\pm$.0015 & .5181$\pm$.0006 & .4480$\pm$.0033 & \underline{.4475$\pm$.0027}\\
& 336 & .4991$\pm$.0037 & .4806$\pm$.0016 & .5877$\pm$.0024 & .5240$\pm$.0013 & .4656$\pm$.0021 & .4601$\pm$.0001\\
& 720 & .5038$\pm$.0033 & .5011$\pm$.0016 & .6054$\pm$.0005 & .5503$\pm$.0003 & .4653$\pm$.0026 & \underline{.4791$\pm$.0014}\\

\midrule
 \multirow{4}{*}{\rotatebox{90}{ETTh2}} & 96  & .3331$\pm$.0047 & .3822$\pm$.0021 & .3580$\pm$.0014 & .4015$\pm$.0017 & \underline{.3024$\pm$.0065} & \underline{.3614$\pm$.0043}\\
& 192 & .3844$\pm$.0040 & .4165$\pm$.0031 & .3958$\pm$.0061 & .4248$\pm$.0040 & \textbf{.3560$\pm$.0011} & \underline{.3957$\pm$.0011}\\
\bottomrule

\end{tabular}
\end{adjustbox}
\end{table*}

\clearpage
\begin{table*}[!htbp]
\centering
\caption{10\% few-shot forecasting results for PatchTST~\cite{PatchTST} backbone (part 1). TS2Vec~\cite{TS2Vec} is denoted as TS2V, and SimMTM~\cite{SimMTM} as MTM. Best results are in \textbf{bold} and the second best are \underline{underlined}. (SCLT, Siam, and URL results are in Table~\ref{tab:fewshot_10_patchtst_p2}.)}
\label{tab:fewshot_10_patchtst_p1}
\begin{adjustbox}{width=1.0\linewidth}
\begin{tabular}{llcccccc}
\toprule
 \multicolumn{2}{c}{\multirow{2}{*}{Dataset}} & \multicolumn{2}{c}{\textbf{OliO (Ours)}} & \multicolumn{2}{c}{TS2V \cite{TS2Vec}} & \multicolumn{2}{c}{MTM \cite{SimMTM}}\\
\cmidrule(lr){3-4} \cmidrule(lr){5-6} \cmidrule(lr){7-8} & & MSE & MAE & MSE & MAE & MSE & MAE\\

\midrule
 \multirow{4}{*}{\rotatebox{90}{ETTh1}} & 96  & \underline{.4063$\pm$.0053} & \underline{.4233$\pm$.0037} & .4087$\pm$.0105 & .4236$\pm$.0069 & .5325$\pm$.0088 & .4949$\pm$.0044\\
& 192 & .4328$\pm$.0037 & .4380$\pm$.0030 & \underline{.4320$\pm$.0072} & \underline{.4362$\pm$.0053} & .5579$\pm$.0048 & .5083$\pm$.0025\\
& 336 & \underline{.4484$\pm$.0035} & \textbf{.4484$\pm$.0019} & \textbf{.4477$\pm$.0029} & \underline{.4490$\pm$.0018} & .5623$\pm$.0036 & .5147$\pm$.0021\\
& 720 & \textbf{.4497$\pm$.0032} & \textbf{.4694$\pm$.0024} & \underline{.4505$\pm$.0046} & \underline{.4701$\pm$.0032} & .5729$\pm$.0024 & .5371$\pm$.0140\\

\midrule
 \multirow{4}{*}{\rotatebox{90}{ETTh2}} & 96  & .2856$\pm$.0031 & .3484$\pm$.0021 & \textbf{.2843$\pm$.0020} & \textbf{.3467$\pm$.0012} & .3643$\pm$.0018 & .4051$\pm$.0012\\
& 192 & .3473$\pm$.0020 & .3904$\pm$.0024 & \underline{.3456$\pm$.0017} & \textbf{.3857$\pm$.0021} & .4000$\pm$.0014 & .4282$\pm$.0009\\
\bottomrule

\end{tabular}
\end{adjustbox}
\end{table*}

\begin{table*}[!htbp]
\centering
\caption{10\% few-shot forecasting results for PatchTST~\cite{PatchTST} backbone (part 2). SoftCLT~\cite{SoftCLT} is denoted as SCLT, TimeSiam~\cite{TimeSiam} as Siam, and TimesURL~\cite{TimesURL} as URL. Best results are in \textbf{bold} and the second best are \underline{underlined}. (OliO, TS2V, and MTM results are in Table~\ref{tab:fewshot_10_patchtst_p1}.)}
\label{tab:fewshot_10_patchtst_p2}
\begin{adjustbox}{width=1.0\linewidth}
\begin{tabular}{llcccccc}
\toprule
 \multicolumn{2}{c}{\multirow{2}{*}{Dataset}} & \multicolumn{2}{c}{SCLT \cite{SoftCLT}} & \multicolumn{2}{c}{Siam \cite{TimeSiam}} & \multicolumn{2}{c}{URL \cite{TimesURL}}\\
\cmidrule(lr){3-4} \cmidrule(lr){5-6} \cmidrule(lr){7-8} & & MSE & MAE & MSE & MAE & MSE & MAE\\

\midrule
 \multirow{4}{*}{\rotatebox{90}{ETTh1}} & 96  & .4246$\pm$.0021 & .4348$\pm$.0010 & .4869$\pm$.0007 & .4693$\pm$.0004 & \textbf{.4039$\pm$.0158} & \textbf{.4202$\pm$.0109}\\
& 192 & .4513$\pm$.0009 & .4497$\pm$.0008 & .5070$\pm$.0009 & .4810$\pm$.0003 & \textbf{.4256$\pm$.0010} & \textbf{.4311$\pm$.0007}\\
& 336 & .4670$\pm$.0018 & .4612$\pm$.0006 & .5094$\pm$.0021 & .4859$\pm$.0007 & .4537$\pm$.0022 & .4535$\pm$.0021\\
& 720 & .4688$\pm$.0024 & .4819$\pm$.0014 & .5172$\pm$.0009 & .5084$\pm$.0009 & .4541$\pm$.0064 & .4717$\pm$.0040\\

\midrule
 \multirow{4}{*}{\rotatebox{90}{ETTh2}} & 96  & .3026$\pm$.0023 & .3607$\pm$.0140 & .3283$\pm$.0007 & .3794$\pm$.0007 & \underline{.2851$\pm$.0021} & \underline{.3476$\pm$.0013}\\
& 192 & .3610$\pm$.0015 & .3993$\pm$.0011 & .3828$\pm$.0038 & .4132$\pm$.0035 & \textbf{.3450$\pm$.0010} & \underline{.3867$\pm$.0017}\\
\bottomrule

\end{tabular}
\end{adjustbox}
\end{table*}

\clearpage

\subsection{iTransformer few-shot results}
\begin{table*}[!htbp]
\centering
\caption{1\% few-shot forecasting results for iTransformer~\cite{iTransformer} backbone (part 1). TS2Vec~\cite{TS2Vec} is denoted as TS2V, and SimMTM~\cite{SimMTM} as MTM. Best results are in \textbf{bold} and the second best are \underline{underlined}. (SCLT, Siam, and URL results are in Table~\ref{tab:fewshot_01_itransformer_p2}.)}
\label{tab:fewshot_01_itransformer_p1}
\begin{adjustbox}{width=1.0\linewidth}
\begin{tabular}{llcccccc}
\toprule
 \multicolumn{2}{c}{\multirow{2}{*}{Dataset}} & \multicolumn{2}{c}{\textbf{OliO (Ours)}} & \multicolumn{2}{c}{TS2V \cite{TS2Vec}} & \multicolumn{2}{c}{MTM \cite{SimMTM}}\\
\cmidrule(lr){3-4} \cmidrule(lr){5-6} \cmidrule(lr){7-8} & & MSE & MAE & MSE & MAE & MSE & MAE\\

\midrule
 \multirow{4}{*}{\rotatebox{90}{ETTh1}} & 96  & \underline{.6050$\pm$.0112} & \underline{.5178$\pm$.0026} & .6387$\pm$.0086 & .5302$\pm$.0014 & .8185$\pm$.0045 & .6034$\pm$.0006\\
& 192 & \underline{.6829$\pm$.0123} & \underline{.5564$\pm$.0047} & .7039$\pm$.0110 & .5629$\pm$.0039 & .8576$\pm$.0042 & .6233$\pm$.0019\\
& 336 & \underline{.7136$\pm$.0027} & \underline{.5753$\pm$.0002} & .7272$\pm$.0016 & .5793$\pm$.0001 & .8667$\pm$.0035 & .6338$\pm$.0020\\
& 720 & .7371$\pm$.0196 & .5995$\pm$.0079 & \underline{.7356$\pm$.0118} & \underline{.5978$\pm$.0042} & .8759$\pm$.0025 & .6503$\pm$.0013\\

\midrule
 \multirow{4}{*}{\rotatebox{90}{ETTh2}} & 96  & .3539$\pm$.0082 & \underline{.3878$\pm$.0052} & \underline{.3526$\pm$.0167} & .3898$\pm$.0098 & .3918$\pm$.0006 & .4114$\pm$.0004\\
& 192 & .4331$\pm$.0057 & .4322$\pm$.0039 & \underline{.4277$\pm$.0009} & \underline{.4311$\pm$.0014} & .4768$\pm$.0006 & .4575$\pm$.0008\\
\bottomrule

\end{tabular}
\end{adjustbox}
\end{table*}

\begin{table*}[!htbp]
\centering
\caption{1\% few-shot forecasting results for iTransformer~\cite{iTransformer} backbone (part 2). SoftCLT~\cite{SoftCLT} is denoted as SCLT, TimeSiam~\cite{TimeSiam} as Siam, and TimesURL~\cite{TimesURL} as URL. Best results are in \textbf{bold} and the second best are \underline{underlined}. (OliO, TS2V, and MTM results are in Table~\ref{tab:fewshot_01_itransformer_p1}.)}
\label{tab:fewshot_01_itransformer_p2}
\begin{adjustbox}{width=1.0\linewidth}
\begin{tabular}{llcccccc}
\toprule
 \multicolumn{2}{c}{\multirow{2}{*}{Dataset}} & \multicolumn{2}{c}{SCLT \cite{SoftCLT}} & \multicolumn{2}{c}{Siam \cite{TimeSiam}} & \multicolumn{2}{c}{URL \cite{TimesURL}}\\
\cmidrule(lr){3-4} \cmidrule(lr){5-6} \cmidrule(lr){7-8} & & MSE & MAE & MSE & MAE & MSE & MAE\\

\midrule
 \multirow{4}{*}{\rotatebox{90}{ETTh1}} & 96  & .7401$\pm$.0010 & .5718$\pm$.0022 & .7335$\pm$.0084 & .5706$\pm$.0038 & \textbf{.5008$\pm$.0050} & \textbf{.4761$\pm$.0014}\\
& 192 & .7895$\pm$.0059 & .5962$\pm$.0026 & .7744$\pm$.0064 & .5912$\pm$.0015 & \textbf{.5601$\pm$.0062} & \textbf{.5068$\pm$.0019}\\
& 336 & .8151$\pm$.0005 & .6137$\pm$.0010 & .7879$\pm$.0048 & .6045$\pm$.0024 & \textbf{.5975$\pm$.0056} & \textbf{.5248$\pm$.0032}\\
& 720 & .8253$\pm$.0026 & .6322$\pm$.0014 & .7917$\pm$.0044 & .6207$\pm$.0018 & \textbf{.6108$\pm$.0017} & \textbf{.5466$\pm$.0009}\\

\midrule
 \multirow{4}{*}{\rotatebox{90}{ETTh2}} & 96  & .3634$\pm$.0045 & .3956$\pm$.0028 & .3755$\pm$.0031 & .4015$\pm$.0027 & \textbf{.3377$\pm$.0065} & \textbf{.3763$\pm$.0051}\\
& 192 & .4483$\pm$.0025 & .4418$\pm$.0018 & .4578$\pm$.0052 & .4465$\pm$.0043 & \textbf{.4251$\pm$.0048} & \textbf{.4264$\pm$.0039}\\
\bottomrule

\end{tabular}
\end{adjustbox}
\end{table*}

\clearpage
\begin{table*}[!htbp]
\centering
\caption{5\% few-shot forecasting results for iTransformer~\cite{iTransformer} backbone (part 1). TS2Vec~\cite{TS2Vec} is denoted as TS2V, and SimMTM~\cite{SimMTM} as MTM. Best results are in \textbf{bold} and the second best are \underline{underlined}. (SCLT, Siam, and URL results are in Table~\ref{tab:fewshot_05_itransformer_p2}.)}
\label{tab:fewshot_05_itransformer_p1}
\begin{adjustbox}{width=1.0\linewidth}
\begin{tabular}{llcccccc}
\toprule
 \multicolumn{2}{c}{\multirow{2}{*}{Dataset}} & \multicolumn{2}{c}{\textbf{OliO (Ours)}} & \multicolumn{2}{c}{TS2V \cite{TS2Vec}} & \multicolumn{2}{c}{MTM \cite{SimMTM}}\\
\cmidrule(lr){3-4} \cmidrule(lr){5-6} \cmidrule(lr){7-8} & & MSE & MAE & MSE & MAE & MSE & MAE\\

\midrule
 \multirow{4}{*}{\rotatebox{90}{ETTh1}} & 96  & \underline{.4375$\pm$.0008} & \underline{.4371$\pm$.0008} & .4876$\pm$.0014 & .4671$\pm$.0008 & .6023$\pm$.0027 & .5206$\pm$.0016\\
& 192 & \underline{.5082$\pm$.0034} & \underline{.4778$\pm$.0022} & .5481$\pm$.0018 & .4991$\pm$.0003 & .6923$\pm$.0014 & .5611$\pm$.0006\\
& 336 & \underline{.5559$\pm$.0015} & \underline{.5014$\pm$.0007} & .5878$\pm$.0033 & .5192$\pm$.0024 & .7224$\pm$.0032 & .5796$\pm$.0007\\
& 720 & \underline{.5948$\pm$.0107} & \underline{.5394$\pm$.0063} & .6097$\pm$.0068 & .5469$\pm$.0038 & .7577$\pm$.0025 & .6082$\pm$.0010\\

\midrule
 \multirow{4}{*}{\rotatebox{90}{ETTh2}} & 96  & \underline{.3206$\pm$.0059} & \underline{.3649$\pm$.0054} & .3304$\pm$.0070 & .3758$\pm$.0058 & .3609$\pm$.0031 & .3917$\pm$.0020\\
& 192 & \underline{.4016$\pm$.0031} & \underline{.4120$\pm$.0013} & .4149$\pm$.0022 & .4232$\pm$.0022 & .4505$\pm$.0022 & .4423$\pm$.0015\\
\bottomrule

\end{tabular}
\end{adjustbox}
\end{table*}

\begin{table*}[!htbp]
\centering
\caption{5\% few-shot forecasting results for iTransformer~\cite{iTransformer} backbone (part 2). SoftCLT~\cite{SoftCLT} is denoted as SCLT, TimeSiam~\cite{TimeSiam} as Siam, and TimesURL~\cite{TimesURL} as URL. Best results are in \textbf{bold} and the second best are \underline{underlined}. (OliO, TS2V, and MTM results are in Table~\ref{tab:fewshot_05_itransformer_p1}.)}
\label{tab:fewshot_05_itransformer_p2}
\begin{adjustbox}{width=1.0\linewidth}
\begin{tabular}{llcccccc}
\toprule
 \multicolumn{2}{c}{\multirow{2}{*}{Dataset}} & \multicolumn{2}{c}{SCLT \cite{SoftCLT}} & \multicolumn{2}{c}{Siam \cite{TimeSiam}} & \multicolumn{2}{c}{URL \cite{TimesURL}}\\
\cmidrule(lr){3-4} \cmidrule(lr){5-6} \cmidrule(lr){7-8} & & MSE & MAE & MSE & MAE & MSE & MAE\\

\midrule
 \multirow{4}{*}{\rotatebox{90}{ETTh1}} & 96  & .5757$\pm$.0060 & .5036$\pm$.0047 & .5393$\pm$.0060 & .4894$\pm$.0033 & \textbf{.4161$\pm$.0034} & \textbf{.4238$\pm$.0024}\\
& 192 & .6608$\pm$.0097 & .5456$\pm$.0034 & .6235$\pm$.0048 & .5303$\pm$.0012 & \textbf{.4751$\pm$.0020} & \textbf{.4581$\pm$.0014}\\
& 336 & .6902$\pm$.0052 & .5686$\pm$.0037 & .6576$\pm$.0026 & .5515$\pm$.0018 & \textbf{.5182$\pm$.0027} & \textbf{.4803$\pm$.0016}\\
& 720 & .7183$\pm$.0058 & .5986$\pm$.0042 & .6784$\pm$.0039 & .5779$\pm$.0021 & \textbf{.5554$\pm$.0039} & \textbf{.5193$\pm$.0020}\\

\midrule
 \multirow{4}{*}{\rotatebox{90}{ETTh2}} & 96  & .3266$\pm$.0058 & .3706$\pm$.0048 & .3381$\pm$.0055 & .3773$\pm$.0047 & \textbf{.3134$\pm$.0051} & \textbf{.3588$\pm$.0039}\\
& 192 & .4094$\pm$.0023 & .4187$\pm$.0023 & .4261$\pm$.0035 & .4272$\pm$.0028 & \textbf{.3905$\pm$.0022} & \textbf{.4051$\pm$.0016}\\
\bottomrule

\end{tabular}
\end{adjustbox}
\end{table*}

\clearpage
\begin{table*}[!htbp]
\centering
\caption{10\% few-shot forecasting results for iTransformer~\cite{iTransformer} backbone (part 1). TS2Vec~\cite{TS2Vec} is denoted as TS2V, and SimMTM~\cite{SimMTM} as MTM. Best results are in \textbf{bold} and the second best are \underline{underlined}. (SCLT, Siam, and URL results are in Table~\ref{tab:fewshot_10_itransformer_p2}.)}
\label{tab:fewshot_10_itransformer_p1}
\begin{adjustbox}{width=1.0\linewidth}
\begin{tabular}{llcccccc}
\toprule
 \multicolumn{2}{c}{\multirow{2}{*}{Dataset}} & \multicolumn{2}{c}{\textbf{OliO (Ours)}} & \multicolumn{2}{c}{TS2V \cite{TS2Vec}} & \multicolumn{2}{c}{MTM \cite{SimMTM}}\\
\cmidrule(lr){3-4} \cmidrule(lr){5-6} \cmidrule(lr){7-8} & & MSE & MAE & MSE & MAE & MSE & MAE\\

\midrule
 \multirow{4}{*}{\rotatebox{90}{ETTh1}} & 96  & \underline{.4119$\pm$.0018} & \underline{.4220$\pm$.0008} & .4389$\pm$.0028 & .4404$\pm$.0020 & .4907$\pm$.0014 & .4694$\pm$.0009\\
& 192 & \underline{.4647$\pm$.0019} & \underline{.4517$\pm$.0012} & .5096$\pm$.0044 & .4807$\pm$.0018 & .5669$\pm$.0036 & .5082$\pm$.0007\\
& 336 & \underline{.5113$\pm$.0013} & \underline{.4757$\pm$.0006} & .5544$\pm$.0035 & .5028$\pm$.0023 & .6115$\pm$.0009 & .5313$\pm$.0009\\
& 720 & \underline{.5475$\pm$.0023} & \underline{.5129$\pm$.0014} & .5768$\pm$.0024 & .5305$\pm$.0019 & .6499$\pm$.0041 & .5660$\pm$.0024\\

\midrule
 \multirow{4}{*}{\rotatebox{90}{ETTh2}} & 96  & \underline{.3080$\pm$.0022} & \underline{.3557$\pm$.0013} & .3198$\pm$.0098 & .3698$\pm$.0072 & .3378$\pm$.0018 & .3763$\pm$.0011\\
& 192 & \textbf{.3892$\pm$.0019} & \underline{.4055$\pm$.0010} & .4102$\pm$.0023 & .4204$\pm$.0029 & .4324$\pm$.0032 & .4310$\pm$.0024\\
\bottomrule

\end{tabular}
\end{adjustbox}
\end{table*}

\begin{table*}[!htbp]
\centering
\caption{10\% few-shot forecasting results for iTransformer~\cite{iTransformer} backbone (part 2). SoftCLT~\cite{SoftCLT} is denoted as SCLT, TimeSiam~\cite{TimeSiam} as Siam, and TimesURL~\cite{TimesURL} as URL. Best results are in \textbf{bold} and the second best are \underline{underlined}. (OliO, TS2V, and MTM results are in Table~\ref{tab:fewshot_10_itransformer_p1}.)}
\label{tab:fewshot_10_itransformer_p2}
\begin{adjustbox}{width=1.0\linewidth}
\begin{tabular}{llcccccc}
\toprule
 \multicolumn{2}{c}{\multirow{2}{*}{Dataset}} & \multicolumn{2}{c}{SCLT \cite{SoftCLT}} & \multicolumn{2}{c}{Siam \cite{TimeSiam}} & \multicolumn{2}{c}{URL \cite{TimesURL}}\\
\cmidrule(lr){3-4} \cmidrule(lr){5-6} \cmidrule(lr){7-8} & & MSE & MAE & MSE & MAE & MSE & MAE\\

\midrule
 \multirow{4}{*}{\rotatebox{90}{ETTh1}} & 96  & .4820$\pm$.0011 & .4640$\pm$.0007 & .4607$\pm$.0022 & .4512$\pm$.0010 & \textbf{.4058$\pm$.0015} & \textbf{.4169$\pm$.0005}\\
& 192 & .5458$\pm$.0117 & .4960$\pm$.0046 & .5321$\pm$.0014 & .4893$\pm$.0007 & \textbf{.4555$\pm$.0022} & \textbf{.4457$\pm$.0013}\\
& 336 & .5867$\pm$.0091 & .5176$\pm$.0048 & .5808$\pm$.0011 & .5153$\pm$.0010 & \textbf{.4993$\pm$.0032} & \textbf{.4678$\pm$.0018}\\
& 720 & .6299$\pm$.0101 & .5571$\pm$.0051 & .6090$\pm$.0027 & .5472$\pm$.0017 & \textbf{.5287$\pm$.0024} & \textbf{.5033$\pm$.0018}\\

\midrule
 \multirow{4}{*}{\rotatebox{90}{ETTh2}} & 96  & .3092$\pm$.0018 & .3588$\pm$.0008 & .3186$\pm$.0015 & .3629$\pm$.0016 & \textbf{.3046$\pm$.0020} & \textbf{.3533$\pm$.0017}\\
& 192 & .3973$\pm$.0030 & .4111$\pm$.0034 & .4103$\pm$.0020 & .4171$\pm$.0025 & \underline{.3896$\pm$.0033} & \textbf{.4053$\pm$.0017}\\
\bottomrule

\end{tabular}
\end{adjustbox}
\end{table*}

\clearpage

\subsection{TimeVLM few-shot results}
\begin{table*}[!htbp]
\centering
\caption{1\% few-shot forecasting results for TimeVLM~\cite{TimeVLM} backbone (part 1). TS2Vec~\cite{TS2Vec} is denoted as TS2V, and SimMTM~\cite{SimMTM} as MTM. Best results are in \textbf{bold} and the second best are \underline{underlined}. (SCLT, Siam, and URL results are in Table~\ref{tab:fewshot_01_timevlm_p2}.)}
\label{tab:fewshot_01_timevlm_p1}
\begin{adjustbox}{width=1.0\linewidth}
\begin{tabular}{llcccccc}
\toprule
 \multicolumn{2}{c}{\multirow{2}{*}{Dataset}} & \multicolumn{2}{c}{\textbf{OliO (Ours)}} & \multicolumn{2}{c}{TS2V \cite{TS2Vec}} & \multicolumn{2}{c}{MTM \cite{SimMTM}}\\
\cmidrule(lr){3-4} \cmidrule(lr){5-6} \cmidrule(lr){7-8} & & MSE & MAE & MSE & MAE & MSE & MAE\\

\midrule
 \multirow{4}{*}{\rotatebox{90}{ETTh1}} & 96 & \textbf{.6168$\pm$.0133} & \textbf{.5361$\pm$.0072} & .6407$\pm$.0239 & \underline{.5452$\pm$.0107} & .6561$\pm$.0172 & .5599$\pm$.0088\\
& 192 & \textbf{.6362$\pm$.0132} & \textbf{.5501$\pm$.0072} & .6533$\pm$.0079 & .5542$\pm$.0059 & .6821$\pm$.0118 & .5702$\pm$.0080\\
& 336 & \textbf{.6101$\pm$.0128} & \textbf{.5442$\pm$.0098} & .6314$\pm$.0085 & .5613$\pm$.0051 & .6482$\pm$.0102 & .5744$\pm$.0065\\
& 720 & \textbf{.6476$\pm$.0060} & \textbf{.5796$\pm$.0026} & .6671$\pm$.0075 & \underline{.5886$\pm$.0039} & .7278$\pm$.0080 & .6241$\pm$.0051\\

\midrule
 \multirow{4}{*}{\rotatebox{90}{ETTh2}} & 96 & \textbf{.3746$\pm$.0068} & \textbf{.4174$\pm$.0049} & \underline{.3753$\pm$.0087} & \underline{.4182$\pm$.0059} & .4117$\pm$.0117 & .4416$\pm$.0074\\
& 192 & \underline{.3879$\pm$.0027} & \underline{.4260$\pm$.0019} & \textbf{.3876$\pm$.0029} & \textbf{.4256$\pm$.0021} & .4536$\pm$.0157 & .4720$\pm$.0115\\
\bottomrule

\end{tabular}
\end{adjustbox}
\end{table*}

\begin{table*}[!htbp]
\centering
\caption{1\% few-shot forecasting results for TimeVLM~\cite{TimeVLM} backbone (part 2). SoftCLT~\cite{SoftCLT} is denoted as SCLT, TimeSiam~\cite{TimeSiam} as Siam, and TimesURL~\cite{TimesURL} as URL. Best results are in \textbf{bold} and the second best are \underline{underlined}. (OliO, TS2V, and MTM results are in Table~\ref{tab:fewshot_01_timevlm_p1}.)}
\label{tab:fewshot_01_timevlm_p2}
\begin{adjustbox}{width=1.0\linewidth}
\begin{tabular}{llcccccc}
\toprule
 \multicolumn{2}{c}{\multirow{2}{*}{Dataset}} & \multicolumn{2}{c}{SCLT \cite{SoftCLT}} & \multicolumn{2}{c}{Siam \cite{TimeSiam}} & \multicolumn{2}{c}{URL \cite{TimesURL}}\\
\cmidrule(lr){3-4} \cmidrule(lr){5-6} \cmidrule(lr){7-8} & & MSE & MAE & MSE & MAE & MSE & MAE\\

\midrule
 \multirow{4}{*}{\rotatebox{90}{ETTh1}} & 96 & \underline{.6278$\pm$.0114} & .5524$\pm$.0097 & .6714$\pm$.0141 & .5924$\pm$.0072 & .6714$\pm$.0141 & .5924$\pm$.0072\\
& 192 & .6511$\pm$.0076 & .5540$\pm$.0037 & .6923$\pm$.0114 & .5858$\pm$.0061 & \underline{.6498$\pm$.0096} & \underline{.5533$\pm$.0055}\\
& 336 & \underline{.6199$\pm$.0068} & \underline{.5522$\pm$.0038} & .6703$\pm$.0102 & .5952$\pm$.0086 & .6270$\pm$.0085 & .5573$\pm$.0066\\
& 720 & \underline{.6584$\pm$.0047} & .5895$\pm$.0017 & .7321$\pm$.0080 & .6244$\pm$.0055 & .6633$\pm$.0093 & .5911$\pm$.0044\\

\midrule
 \multirow{4}{*}{\rotatebox{90}{ETTh2}} & 96 & .3776$\pm$.0058 & .4196$\pm$.0034 & .4201$\pm$.0069 & .4498$\pm$.0049 & .3773$\pm$.0054 & .4194$\pm$.0036\\
& 192 & .3889$\pm$.0037 & .4278$\pm$.0027 & .4498$\pm$.0086 & .4758$\pm$.0063 & .3880$\pm$.0031 & .4268$\pm$.0023\\
\bottomrule

\end{tabular}
\end{adjustbox}
\end{table*}

\clearpage
\begin{table*}[!htbp]
\centering
\caption{5\% few-shot forecasting results for TimeVLM~\cite{TimeVLM} backbone (part 1). TS2Vec~\cite{TS2Vec} is denoted as TS2V, and SimMTM~\cite{SimMTM} as MTM. Best results are in \textbf{bold} and the second best are \underline{underlined}. (SCLT, Siam, and URL results are in Table~\ref{tab:fewshot_05_timevlm_p2}.)}
\label{tab:fewshot_05_timevlm_p1}
\begin{adjustbox}{width=1.0\linewidth}
\begin{tabular}{llcccccc}
\toprule
 \multicolumn{2}{c}{\multirow{2}{*}{Dataset}} & \multicolumn{2}{c}{\textbf{OliO (Ours)}} & \multicolumn{2}{c}{TS2V \cite{TS2Vec}} & \multicolumn{2}{c}{MTM \cite{SimMTM}}\\
\cmidrule(lr){3-4} \cmidrule(lr){5-6} \cmidrule(lr){7-8} & & MSE & MAE & MSE & MAE & MSE & MAE\\

\midrule
 \multirow{4}{*}{\rotatebox{90}{ETTh1}} & 96  & \textbf{.4274$\pm$.0069} & \textbf{.4399$\pm$.0035} & .4343$\pm$.0076 & .4444$\pm$.0041 & .6452$\pm$.0063 & .5503$\pm$.0028\\
& 192 & \textbf{.4433$\pm$.0041} & \textbf{.4511$\pm$.0025} & .4539$\pm$.0040 & .4576$\pm$.0024 & .6613$\pm$.0057 & .5649$\pm$.0039\\
& 336 & \textbf{.4517$\pm$.0051} & \textbf{.4602$\pm$.0038} & .4611$\pm$.0049 & .4653$\pm$.0035 & .6546$\pm$.0068 & .5667$\pm$.0063\\
& 720 & \textbf{.4869$\pm$.0051} & \textbf{.4939$\pm$.0033} & \underline{.4976$\pm$.0049} & \underline{.4989$\pm$.0032} & .6860$\pm$.0022 & .6009$\pm$.0019\\

\midrule
 \multirow{4}{*}{\rotatebox{90}{ETTh2}} & 96  & \underline{.3161$\pm$.0019} & \textbf{.3685$\pm$.0005} & .3165$\pm$.0012 & .3690$\pm$.0002 & .3653$\pm$.0024 & .4136$\pm$.0023\\
& 192 & .3626$\pm$.0030 & .3998$\pm$.0005 & \textbf{.3623$\pm$.0030} & .3987$\pm$.0006 & .3829$\pm$.0014 & .4244$\pm$.0015\\
\bottomrule

\end{tabular}
\end{adjustbox}
\end{table*}

\begin{table*}[!htbp]
\centering
\caption{5\% few-shot forecasting results for TimeVLM~\cite{TimeVLM} backbone (part 2). SoftCLT~\cite{SoftCLT} is denoted as SCLT, TimeSiam~\cite{TimeSiam} as Siam, and TimesURL~\cite{TimesURL} as URL. Best results are in \textbf{bold} and the second best are \underline{underlined}. (OliO, TS2V, and MTM results are in Table~\ref{tab:fewshot_05_timevlm_p1}.)}
\label{tab:fewshot_05_timevlm_p2}
\begin{adjustbox}{width=1.0\linewidth}
\begin{tabular}{llcccccc}
\toprule
 \multicolumn{2}{c}{\multirow{2}{*}{Dataset}} & \multicolumn{2}{c}{SCLT \cite{SoftCLT}} & \multicolumn{2}{c}{Siam \cite{TimeSiam}} & \multicolumn{2}{c}{URL \cite{TimesURL}}\\
\cmidrule(lr){3-4} \cmidrule(lr){5-6} \cmidrule(lr){7-8} & & MSE & MAE & MSE & MAE & MSE & MAE\\

\midrule
 \multirow{4}{*}{\rotatebox{90}{ETTh1}} & 96  & .4357$\pm$.0069 & .4453$\pm$.0035 & .6357$\pm$.0014 & .5453$\pm$.0010 & \underline{.4319$\pm$.0055} & \underline{.4435$\pm$.0032}\\
& 192 & .4541$\pm$.0041 & .4576$\pm$.0025 & .6581$\pm$.0054 & .5618$\pm$.0035 & \underline{.4524$\pm$.0036} & \underline{.4566$\pm$.0022}\\
& 336 & .4614$\pm$.0051 & .4656$\pm$.0038 & .6566$\pm$.0019 & .5692$\pm$.0020 & \underline{.4604$\pm$.0042} & \underline{.4649$\pm$.0032}\\
& 720 & .4984$\pm$.0051 & .4993$\pm$.0033 & .6821$\pm$.0046 & .5996$\pm$.0026 & .4978$\pm$.0045 & .4990$\pm$.0030\\

\midrule
 \multirow{4}{*}{\rotatebox{90}{ETTh2}} & 96  & \textbf{.3160$\pm$.0019} & \underline{.3686$\pm$.0005} & .3623$\pm$.0021 & .4112$\pm$.0130 & .3165$\pm$.0018 & .3689$\pm$.0006\\
& 192 & .3626$\pm$.0030 & .3987$\pm$.0005 & .3845$\pm$.0009 & .4264$\pm$.0130 & \underline{.3625$\pm$.0027} & \textbf{.3986$\pm$.0004}\\
\bottomrule

\end{tabular}
\end{adjustbox}
\end{table*}

\clearpage
\begin{table*}[!htbp]
\centering
\caption{10\% few-shot forecasting results for TimeVLM~\cite{TimeVLM} backbone (part 1). TS2Vec~\cite{TS2Vec} is denoted as TS2V, and SimMTM~\cite{SimMTM} as MTM. Best results are in \textbf{bold} and the second best are \underline{underlined}. (SCLT, Siam, and URL results are in Table~\ref{tab:fewshot_10_timevlm_p2}.)}
\label{tab:fewshot_10_timevlm_p1}
\begin{adjustbox}{width=1.0\linewidth}
\begin{tabular}{llcccccc}
\toprule
 \multicolumn{2}{c}{\multirow{2}{*}{Dataset}} & \multicolumn{2}{c}{\textbf{OliO (Ours)}} & \multicolumn{2}{c}{TS2V \cite{TS2Vec}} & \multicolumn{2}{c}{MTM \cite{SimMTM}}\\
\cmidrule(lr){3-4} \cmidrule(lr){5-6} \cmidrule(lr){7-8} & & MSE & MAE & MSE & MAE & MSE & MAE\\

\midrule
 \multirow{4}{*}{\rotatebox{90}{ETTh1}} & 96 & .3912$\pm$.0023 & \textbf{.4169$\pm$.0014} & \textbf{.3901$\pm$.0031} & \underline{.4176$\pm$.0027} & .5677$\pm$.0081 & .5124$\pm$.0039\\
& 192 & .4160$\pm$.0023 & .4343$\pm$.0020 & .4157$\pm$.0023 & .4338$\pm$.0014 & .5819$\pm$.0053 & .5244$\pm$.0037\\
& 336 & .4296$\pm$.0029 & .4449$\pm$.0023 & .4289$\pm$.0026 & .4444$\pm$.0019 & .5699$\pm$.0042 & .5262$\pm$.0034\\
& 720 & .4795$\pm$.0051 & .4939$\pm$.0033 & .4802$\pm$.0091 & .4896$\pm$.0060 & .5939$\pm$.0027 & .5557$\pm$.0014\\

\midrule
 \multirow{4}{*}{\rotatebox{90}{ETTh2}} & 96 & \textbf{.2877$\pm$.0010} & \textbf{.3484$\pm$.0015} & .2880$\pm$.0002 & .3487$\pm$.0013 & .3507$\pm$.0014 & .4007$\pm$.0013\\
& 192 & \underline{.3454$\pm$.0008} & .3848$\pm$.0006 & .3455$\pm$.0010 & .3848$\pm$.0005 & .3770$\pm$.0004 & .4181$\pm$.0006\\
\bottomrule

\end{tabular}
\end{adjustbox}
\end{table*}

\begin{table*}[!htbp]
\centering
\caption{10\% few-shot forecasting results for TimeVLM~\cite{TimeVLM} backbone (part 2). SoftCLT~\cite{SoftCLT} is denoted as SCLT, TimeSiam~\cite{TimeSiam} as Siam, and TimesURL~\cite{TimesURL} as URL. Best results are in \textbf{bold} and the second best are \underline{underlined}. (OliO, TS2V, and MTM results are in Table~\ref{tab:fewshot_10_timevlm_p1}.)}
\label{tab:fewshot_10_timevlm_p2}
\begin{adjustbox}{width=1.0\linewidth}
\begin{tabular}{llcccccc}
\toprule
 \multicolumn{2}{c}{\multirow{2}{*}{Dataset}} & \multicolumn{2}{c}{SCLT \cite{SoftCLT}} & \multicolumn{2}{c}{Siam \cite{TimeSiam}} & \multicolumn{2}{c}{URL \cite{TimesURL}}\\
\cmidrule(lr){3-4} \cmidrule(lr){5-6} \cmidrule(lr){7-8} & & MSE & MAE & MSE & MAE & MSE & MAE\\

\midrule
 \multirow{4}{*}{\rotatebox{90}{ETTh1}} & 96 & .3910$\pm$.0028 & .4183$\pm$.0025 & .5611$\pm$.0033 & .5108$\pm$.0021 & \underline{.3902$\pm$.0028} & .4177$\pm$.0025\\
& 192 & \underline{.4153$\pm$.0024} & \underline{.4335$\pm$.0016} & .5787$\pm$.0075 & .5218$\pm$.0043 & \textbf{.4151$\pm$.0022} & \textbf{.4334$\pm$.0014}\\
& 336 & .4289$\pm$.0025 & \underline{.4443$\pm$.0018} & .5746$\pm$.0030 & .5296$\pm$.0022 & \textbf{.4286$\pm$.0025} & \textbf{.4441$\pm$.0018}\\
& 720 & \textbf{.4780$\pm$.0076} & \textbf{.4882$\pm$.0049} & .5856$\pm$.0057 & .5509$\pm$.0034 & \underline{.4787$\pm$.0090} & \underline{.4888$\pm$.0061}\\

\midrule
 \multirow{4}{*}{\rotatebox{90}{ETTh2}} & 96 & .2880$\pm$.0002 & .3487$\pm$.0012 & .3475$\pm$.0018 & .3986$\pm$.0014 & .2885$\pm$.0005 & .3491$\pm$.0015\\
& 192 & \underline{.3454$\pm$.0003} & .3848$\pm$.0004 & .3766$\pm$.0004 & .4175$\pm$.0007 & .3457$\pm$.0012 & .3849$\pm$.0005\\
\bottomrule

\end{tabular}
\end{adjustbox}
\end{table*}

\clearpage

\section{Detailed Full-shot Forecasting Results}
\label{app:detailed_results}

In this section, we provide the full-shot forecasting results for all backbone architectures, including both MSE and MAE metrics. Note that while the MSE results are consistent with the main text, the MAE results are provided for a comprehensive comparison across all benchmarks.

\subsection{Effect size and statistical comparisons}
\label{app:fullshot_statistics}
We compare OliO with TimesURL using unrounded three-seed means. Across 72 backbone--dataset--horizon configurations, OliO wins 58, with mean, median, and maximum relative MSE gains of 0.88\%, 0.55\%, and 5.75\%. Counts use unrounded values and can therefore differ from ranks inferred from displayed rounded means.

Our primary analysis averages the four horizons within each of the 18 dataset--backbone groups: OliO wins 17 groups, and the two-sided exact Wilcoxon signed-rank test gives $W=3$, $p=3.81\times10^{-5}$. Averaging further over backbones gives six dataset-level pairs, all favoring OliO, with $W=0$, $p=0.03125$. The supplementary 72-configuration analysis gives $W=347$, $p=5.87\times10^{-9}$; its horizon-level entries need not be independent, so the grouped comparisons receive greater weight.

For a descriptive comparison with seed variation, let $\Delta=\bar m_{\mathrm{URL}}-\bar m_{\mathrm{OliO}}$ and let $s_{\mathrm{URL}}$ and $s_{\mathrm{OliO}}$ be the corresponding three-seed sample standard deviations. In 38 settings, $\Delta>\max(s_{\mathrm{URL}},s_{\mathrm{OliO}})$; the symmetric criterion favors TimesURL in five settings. This compares effect magnitude with observed seed variation and is not a per-setting significance test. The tables below report the original per-setting means and sample standard deviations; the main-text summary averages these quantities separately and does not estimate uncertainty of the aggregated mean.

\subsection{PatchTST Results}
\begin{table*}[h!]
\centering
\caption{Full-shot forecasting results for PatchTST~\cite{PatchTST} backbone (part 1). TS2Vec~\cite{TS2Vec} is denoted as TS2V. Best results are in \textbf{bold} and the second best are \underline{underlined}. (MTM, SCLT, Siam, and URL results are in Table~\ref{tab:full_shot_patchtst_p2}.)}
\label{tab:full_shot_patchtst_p1}
\begin{adjustbox}{width=1.0\linewidth}
\begin{tabular}{llcccccc}
\toprule
\multicolumn{2}{c}{\multirow{2}{*}{Dataset}} & \multicolumn{2}{c}{Rand.} & \multicolumn{2}{c}{\textbf{OliO (Ours)}} & \multicolumn{2}{c}{TS2V \cite{TS2Vec}} \\
\cmidrule(lr){3-4} \cmidrule(lr){5-6} \cmidrule(lr){7-8}
& & MSE & MAE & MSE & MAE & MSE & MAE \\ \midrule
\multirow{4}{*}{\rotatebox{90}{ETTh1}} & 96  & .3735 $\pm$ .0047 & .3973 $\pm$ .0030 & .3716 $\pm$ .0003 & .3975 $\pm$ .0001 & \underline{.3708 $\pm$ .0006} & \textbf{.3953 $\pm$ .0003} \\
& 192 & .4094 $\pm$ .0041 & .4191 $\pm$ .0023 & .4082 $\pm$ .0009 & \textbf{.4175 $\pm$ .0008} & \underline{.4080 $\pm$ .0008} & .4190 $\pm$ .0006 \\
& 336 & .4343 $\pm$ .0030 & .4367 $\pm$ .0024 & \textbf{.4252 $\pm$ .0000} & \textbf{.4315 $\pm$ .0002} & \underline{.4311 $\pm$ .0026} & .4369 $\pm$ .0021 \\
& 720 & .4582 $\pm$ .0089 & .4711 $\pm$ .0047 & \textbf{.4356 $\pm$ .0049} & \textbf{.4596 $\pm$ .0036} & \underline{.4483 $\pm$ .0016} & \underline{.4691 $\pm$ .0015} \\ \midrule
\multirow{4}{*}{\rotatebox{90}{ETTh2}} & 96  & .2833 $\pm$ .0093 & .3418 $\pm$ .0030 & \textbf{.2769 $\pm$ .0014} & .3400 $\pm$ .0004 & .2778 $\pm$ .0017 & \textbf{.3383 $\pm$ .0011} \\
& 192 & .3638 $\pm$ .0071 & .3908 $\pm$ .0020 & \textbf{.3392 $\pm$ .0002} & .3822 $\pm$ .0003 & \underline{.3406 $\pm$ .0001} & .3817 $\pm$ .0005 \\
& 336 & .3918 $\pm$ .0014 & .4131 $\pm$ .0011 & \textbf{.3599 $\pm$ .0004} & .4020 $\pm$ .0004 & .3606 $\pm$ .0003 & .4017 $\pm$ .0001 \\
& 720 & .3986 $\pm$ .0024 & .4357 $\pm$ .0015 & .3934 $\pm$ .0009 & .4326 $\pm$ .0010 & \textbf{.3910 $\pm$ .0020} & \underline{.4295 $\pm$ .0012} \\ \midrule
\multirow{4}{*}{\rotatebox{90}{ETTm1}} & 96  & .2859 $\pm$ .0007 & .3387 $\pm$ .0006 & \textbf{.2830 $\pm$ .0015} & \textbf{.3368 $\pm$ .0010} & .2860 $\pm$ .0006 & .3384 $\pm$ .0004 \\
& 192 & .3266 $\pm$ .0005 & \underline{.3653 $\pm$ .0003} & \textbf{.3242 $\pm$ .0007} & .3660 $\pm$ .0011 & .3279 $\pm$ .0002 & .3680 $\pm$ .0003 \\
& 336 & .3642 $\pm$ .0028 & \underline{.3886 $\pm$ .0015} & \underline{.3605 $\pm$ .0009} & \textbf{.3877 $\pm$ .0005} & .3668 $\pm$ .0012 & .3904 $\pm$ .0014 \\
& 720 & .4140 $\pm$ .0015 & .4204 $\pm$ .0010 & \textbf{.4060 $\pm$ .0008} & \textbf{.4174 $\pm$ .0006} & .4162 $\pm$ .0009 & .4221 $\pm$ .0013 \\ \midrule
\multirow{4}{*}{\rotatebox{90}{ETTm2}} & 96  & .1660 $\pm$ .0006 & .2561 $\pm$ .0004 & \textbf{.1636 $\pm$ .0002} & \textbf{.2535 $\pm$ .0001} & .1669 $\pm$ .0001 & .2574 $\pm$ .0001 \\
& 192 & .2243 $\pm$ .0029 & .2951 $\pm$ .0008 & \textbf{.2189 $\pm$ .0003} & \textbf{.2920 $\pm$ .0002} & .2220 $\pm$ .0003 & .2961 $\pm$ .0004 \\
& 336 & .2770 $\pm$ .0013 & \underline{.3299 $\pm$ .0006} & \textbf{.2727 $\pm$ .0003} & \textbf{.3268 $\pm$ .0002} & .2760 $\pm$ .0003 & .3310 $\pm$ .0004 \\
& 720 & \underline{.3670 $\pm$ .0017} & \underline{.3836 $\pm$ .0006} & \textbf{.3644 $\pm$ .0008} & \textbf{.3820 $\pm$ .0005} & .3684 $\pm$ .0002 & .3869 $\pm$ .0002 \\ \midrule
\multirow{4}{*}{\rotatebox{90}{Weather}} & 96  & .1529 $\pm$ .0006 & .2020 $\pm$ .0005 & \textbf{.1507 $\pm$ .0003} & \textbf{.1997 $\pm$ .0011} & .1615 $\pm$ .0005 & .2111 $\pm$ .0002 \\
& 192 & .1959 $\pm$ .0007 & .2412 $\pm$ .0002 & \textbf{.1948 $\pm$ .0004} & \textbf{.2403 $\pm$ .0005} & .2036 $\pm$ .0003 & .2485 $\pm$ .0003 \\
& 336 & .2478 $\pm$ .0008 & .2821 $\pm$ .0009 & \textbf{.2453 $\pm$ .0003} & \textbf{.2785 $\pm$ .0007} & .2528 $\pm$ .0005 & .2859 $\pm$ .0003 \\
& 720 & .3221 $\pm$ .0011 & .3347 $\pm$ .0013 & \textbf{.3193 $\pm$ .0009} & \textbf{.3318 $\pm$ .0006} & .3242 $\pm$ .0004 & .3360 $\pm$ .0004 \\ \midrule
\multirow{4}{*}{\rotatebox{90}{ECL}}     & 96  & .1356 $\pm$ .0000 & .2302 $\pm$ .0000 & \underline{.1297 $\pm$ .0002} & \underline{.2228 $\pm$ .0003} & .1314 $\pm$ .0002 & .2249 $\pm$ .0002 \\
& 192 & .1515 $\pm$ .0001 & .2444 $\pm$ .0001 & \underline{.1476 $\pm$ .0001} & .2401 $\pm$ .0001 & .1481 $\pm$ .0004 & .2401 $\pm$ .0002 \\
& 336 & .1682 $\pm$ .0002 & .2614 $\pm$ .0001 & \textbf{.1633 $\pm$ .0001} & \underline{.2572 $\pm$ .0001} & .1653 $\pm$ .0004 & .2583 $\pm$ .0001 \\
& 720 & .2064 $\pm$ .0002 & .2937 $\pm$ .0001 & \textbf{.2001 $\pm$ .0005} & \textbf{.2896 $\pm$ .0005} & \underline{.2013 $\pm$ .0006} & \underline{.2899 $\pm$ .0001} \\ \midrule
\multicolumn{2}{l}{\textbf{Top1 count}} & 0 & 0 & \textbf{18} & \textbf{15} & 1 & 2 \\
\multicolumn{2}{l}{\underline{Top2 count}} & 1 & 4 & 3 & 2 & 6 & 3 \\ \bottomrule
\end{tabular}
\end{adjustbox}
\end{table*}

\begin{table*}[h!]
\centering
\caption{Full-shot forecasting results for PatchTST~\cite{PatchTST} backbone (part 2). SimMTM~\cite{SimMTM} is denoted as MTM, SoftCLT~\cite{SoftCLT} as SCLT, TimeSiam~\cite{TimeSiam} as Siam, and TimesURL~\cite{TimesURL} as URL. Best results are in \textbf{bold} and the second best are \underline{underlined}. (Rand., OliO, and TS2V results are in Table~\ref{tab:full_shot_patchtst_p1}.)}
\label{tab:full_shot_patchtst_p2}
\begin{adjustbox}{width=1.0\linewidth}
\setlength{\tabcolsep}{3pt}
\begin{tabular}{llcccccccc}
\toprule
\multicolumn{2}{c}{\multirow{2}{*}{Dataset}} & \multicolumn{2}{c}{MTM \cite{SimMTM}} & \multicolumn{2}{c}{SCLT \cite{SoftCLT}} & \multicolumn{2}{c}{Siam \cite{TimeSiam}} & \multicolumn{2}{c}{URL \cite{TimesURL}} \\
\cmidrule(lr){3-4} \cmidrule(lr){5-6} \cmidrule(lr){7-8} \cmidrule(lr){9-10}
& & MSE & MAE & MSE & MAE & MSE & MAE & MSE & MAE \\ \midrule
\multirow{4}{*}{\rotatebox{90}{ETTh1}} & 96  & .3985 $\pm$ .0014 & .4173 $\pm$ .0013 & .3754 $\pm$ .0007 & .3984 $\pm$ .0005 & .3871 $\pm$ .0004 & .4099 $\pm$ .0005 & \underline{.3708 $\pm$ .0007} & \underline{.3957 $\pm$ .0005} \\
& 192 & .4299 $\pm$ .0011 & .4355 $\pm$ .0011 & .4095 $\pm$ .0010 & \underline{.4179 $\pm$ .0008} & .4192 $\pm$ .0008 & .4286 $\pm$ .0007 & \textbf{.4062 $\pm$ .0006} & .4187 $\pm$ .0011 \\
& 336 & .4473 $\pm$ .0004 & .4480 $\pm$ .0005 & .4357 $\pm$ .0020 & .4374 $\pm$ .0019 & .4380 $\pm$ .0011 & .4426 $\pm$ .0008 & .4329 $\pm$ .0030 & \underline{.4352 $\pm$ .0029} \\
& 720 & .4560 $\pm$ .0021 & .4744 $\pm$ .016 & .4589 $\pm$ .0001 & .4783 $\pm$ .0008 & .4547 $\pm$ .0005 & .4719 $\pm$ .0003 & .4520 $\pm$ .0015 & .4707 $\pm$ .0013 \\ \midrule
\multirow{4}{*}{\rotatebox{90}{ETTh2}} & 96  & .2783 $\pm$ .0007 & .3403 $\pm$ .0005 & .2802 $\pm$ .0010 & .3434 $\pm$ .0006 & .2808 $\pm$ .0003 & .3410 $\pm$ .0005 & \underline{.2773 $\pm$ .0002} & \underline{.3394 $\pm$ .0004} \\
& 192 & .3431 $\pm$ .0005 & .3827 $\pm$ .0007 & .3441 $\pm$ .0009 & .3844 $\pm$ .0005 & .3440 $\pm$ .0005 & \textbf{.3794 $\pm$ .0002} & .3410 $\pm$ .0004 & \underline{.3812 $\pm$ .0004} \\
& 336 & .3642 $\pm$ .0011 & .4049 $\pm$ .0008 & .3643 $\pm$ .0033 & .4024 $\pm$ .0003 & .3676 $\pm$ .0010 & \textbf{.3989 $\pm$ .0004} & \underline{.3601 $\pm$ .0002} & \underline{.4007 $\pm$ .0005} \\
& 720 & .3949 $\pm$ .0004 & .4312 $\pm$ .0004 & .3960 $\pm$ .0031 & .4342 $\pm$ .0021 & \underline{.3925 $\pm$ .0006} & \textbf{.4268 $\pm$ .0004} & .3926 $\pm$ .0010 & .4298 $\pm$ .0010 \\ \midrule
\multirow{4}{*}{\rotatebox{90}{ETTm1}} & 96  & .2839 $\pm$ .0008 & .3412 $\pm$ .0004 & .2917 $\pm$ .0017 & .3463 $\pm$ .0015 & .2948 $\pm$ .0009 & .3450 $\pm$ .0002 & \underline{.2837 $\pm$ .0012} & \underline{.3381 $\pm$ .0005} \\
& 192 & .3255 $\pm$ .0019 & .3692 $\pm$ .0015 & .3331 $\pm$ .0023 & .3730 $\pm$ .0009 & .3363 $\pm$ .0007 & .3719 $\pm$ .0006 & \underline{.3252 $\pm$ .0003} & \textbf{.3650 $\pm$ .0003} \\
& 336 & \textbf{.3592 $\pm$ .0005} & .3895 $\pm$ .0008 & .3692 $\pm$ .0019 & .3947 $\pm$ .0010 & .3672 $\pm$ .0014 & .3927 $\pm$ .0006 & .3627 $\pm$ .0005 & .3890 $\pm$ .0005 \\
& 720 & \underline{.4120 $\pm$ .0008} & .4219 $\pm$ .0006 & .4154 $\pm$ .0005 & .4238 $\pm$ .0013 & .4165 $\pm$ .0020 & .4242 $\pm$ .0014 & .4126 $\pm$ .0002 & \underline{.4203 $\pm$ .0002} \\ \midrule
\multirow{4}{*}{\rotatebox{90}{ETTm2}} & 96  & .1725 $\pm$ .0006 & .2638 $\pm$ .0008 & .1694 $\pm$ .0012 & .2590 $\pm$ .0013 & .1692 $\pm$ .0007 & .2574 $\pm$ .0004 & \underline{.1658 $\pm$ .0003} & \underline{.2554 $\pm$ .0005} \\
& 192 & .2278 $\pm$ .0008 & .3036 $\pm$ .0004 & .2249 $\pm$ .0026 & .3009 $\pm$ .0031 & .2279 $\pm$ .0003 & .3005 $\pm$ .0008 & \underline{.2210 $\pm$ .0004} & \underline{.2947 $\pm$ .0005} \\
& 336 & .2835 $\pm$ .0022 & .3393 $\pm$ .0019 & .2803 $\pm$ .0019 & .3356 $\pm$ .0022 & .2802 $\pm$ .0006 & .3340 $\pm$ .0007 & \underline{.2752 $\pm$ .0004} & .3308 $\pm$ .0007 \\
& 720 & .3706 $\pm$ .0010 & .3902 $\pm$ .0010 & .3722 $\pm$ .0010 & .3913 $\pm$ .014 & .3672 $\pm$ .0005 & .3866 $\pm$ .0016 & .3701 $\pm$ .0012 & .3894 $\pm$ .0007 \\ \midrule
\multirow{4}{*}{\rotatebox{90}{Weather}} & 96  & .1646 $\pm$ .0011 & .2148 $\pm$ .0011 & .1540 $\pm$ .0001 & .2031 $\pm$ .0001 & \underline{.1511 $\pm$ .0007} & \underline{.2003 $\pm$ .0013} & .1528 $\pm$ .0009 & .2015 $\pm$ .0014 \\
& 192 & .2043 $\pm$ .0014 & .2489 $\pm$ .0010 & .1975 $\pm$ .0005 & .2434 $\pm$ .0009 & \underline{.1949 $\pm$ .0002} & \underline{.2406 $\pm$ .0002} & .1976 $\pm$ .0003 & .2428 $\pm$ .0000 \\
& 336 & .2549 $\pm$ .0003 & .2872 $\pm$ .0004 & \underline{.2466 $\pm$ .0003} & .2817 $\pm$ .0001 & .2467 $\pm$ .0004 & \underline{.2805 $\pm$ .0005} & .2467 $\pm$ .0001 & .2806 $\pm$ .0004 \\
& 720 & .3253 $\pm$ .0005 & .3368 $\pm$ .0004 & \underline{.3201 $\pm$ .0003} & .3339 $\pm$ .0005 & .3207 $\pm$ .0013 & \underline{.3326 $\pm$ .0011} & .3216 $\pm$ .0003 & .3334 $\pm$ .0003 \\ \midrule
\multirow{4}{*}{\rotatebox{90}{ECL}}     & 96  & .1387 $\pm$ .0000 & .2352 $\pm$ .0001 & .1325 $\pm$ .0003 & .2265 $\pm$ .0003 & .1300 $\pm$ .0001 & .2234 $\pm$ .0001 & \textbf{.1287 $\pm$ .0002} & \textbf{.2212 $\pm$ .0001} \\
& 192 & .1541 $\pm$ .0000 & .2486 $\pm$ .0001 & .1493 $\pm$ .0002 & .2419 $\pm$ .0003 & .1481 $\pm$ .0000 & \underline{.2399 $\pm$ .0000} & \textbf{.1471 $\pm$ .0001} & \textbf{.2394 $\pm$ .0001} \\
& 336 & .1707 $\pm$ .0001 & .2657 $\pm$ .0002 & .1656 $\pm$ .0002 & .2589 $\pm$ .0003 & .1672 $\pm$ .0003 & .2585 $\pm$ .0001 & \underline{.1638 $\pm$ .0007} & \underline{.2572 $\pm$ .0009} \\
& 720 & .2026 $\pm$ .0002 & .2914 $\pm$ .0002 & .2017 $\pm$ .0010 & .2907 $\pm$ .0007 & .2042 $\pm$ .0007 & .2924 $\pm$ .0007 & .2123 $\pm$ .0018 & .3010 $\pm$ .0019 \\ \midrule
\multicolumn{2}{l}{\textbf{Top1 count}} & 1 & 0 & 0 & 0 & 0 & 3 & 3 & 3 \\
\multicolumn{2}{l}{\underline{Top2 count}} & 1 & 0 & 2 & 1 & 3 & 5 & \underline{9} & \underline{10} \\ \bottomrule
\end{tabular}
\end{adjustbox}
\end{table*}

\clearpage

\subsection{iTransformer Results}
\begin{table*}[h!]
\centering
\caption{Full-shot forecasting results for iTransformer~\cite{iTransformer} backbone (part 1). TS2Vec~\cite{TS2Vec} is denoted as TS2V. Best results are in \textbf{bold} and the second best are \underline{underlined}. (MTM, SCLT, Siam, and URL results are in Table~\ref{tab:full_shot_itransformer_p2}.)}
\label{tab:full_shot_itransformer_p1}
\begin{adjustbox}{width=1.0\linewidth}
\begin{tabular}{llcccccc}
\toprule
\multicolumn{2}{c}{\multirow{2}{*}{Dataset}} & \multicolumn{2}{c}{Rand.} & \multicolumn{2}{c}{\textbf{OliO (Ours)}} & \multicolumn{2}{c}{TS2V \cite{TS2Vec}} \\
\cmidrule(lr){3-4} \cmidrule(lr){5-6} \cmidrule(lr){7-8}
& & MSE & MAE & MSE & MAE & MSE & MAE \\ \midrule
\multirow{4}{*}{\rotatebox{90}{ETTh1}}  & 96   & \textbf{.3903 $\pm$ .0007} & \textbf{.4055 $\pm$ .0002} & .3969 $\pm$ .0002 & .4101 $\pm$ .0003 & .4027 $\pm$ .0001 & .4155 $\pm$ .0001 \\
 & 192  & .4448 $\pm$ .0004 & .4372 $\pm$ .0003 & \underline{.4442 $\pm$ .0017} & \underline{.4368 $\pm$ .0006} & .4507 $\pm$ .0037 & .4445 $\pm$ .0021 \\
 & 336  & .4884 $\pm$ .0030 & .4612 $\pm$ .0022 & \underline{.4865 $\pm$ .0037} & \underline{.4607 $\pm$ .0027} & .4875 $\pm$ .0034 & .4638 $\pm$ .0020 \\
 & 720  & .5120 $\pm$ .0056 & .4966 $\pm$ .0038 & \underline{.4919 $\pm$ .0162} & \underline{.4814 $\pm$ .0064} & .4978 $\pm$ .0076 & .4848 $\pm$ .0049 \\ \midrule
\multirow{4}{*}{\rotatebox{90}{ETTh2}}  & 96   & \textbf{.2988 $\pm$ .0029} & \textbf{.3493 $\pm$ .0019} & \underline{.3002 $\pm$ .0021} & \underline{.3500 $\pm$ .0008} & .3079 $\pm$ .0044 & .3571 $\pm$ .0026 \\
 & 192  & \textbf{.3785 $\pm$ .0009} & \underline{.3979 $\pm$ .0010} & .3811 $\pm$ .0026 & .4000 $\pm$ .0009 & .3923 $\pm$ .0051 & .4084 $\pm$ .0033 \\
 & 336  & .4250 $\pm$ .0023 & \underline{.4327 $\pm$ .0007} & \underline{.4230 $\pm$ .0037} & .4336 $\pm$ .0020 & .4333 $\pm$ .0022 & .4423 $\pm$ .0006 \\
 & 720  & .4357 $\pm$ .0050 & .4500 $\pm$ .0028 & \textbf{.4271 $\pm$ .0029} & \underline{.4465 $\pm$ .0010} & .4373 $\pm$ .0018 & .4535 $\pm$ .0014 \\ \midrule
\multirow{4}{*}{\rotatebox{90}{ETTm1}}  & 96   & .3456 $\pm$ .0011 & .3782 $\pm$ .0007 & \textbf{.3291 $\pm$ .0008} & \textbf{.3655 $\pm$ .0008} & .3367 $\pm$ .0016 & .3697 $\pm$ .0010 \\
 & 192  & .3831 $\pm$ .0003 & .3946 $\pm$ .0005 & \textbf{.3711 $\pm$ .0015} & \textbf{.3876 $\pm$ .0011} & .3777 $\pm$ .0005 & .3937 $\pm$ .0005 \\
 & 336  & .4205 $\pm$ .0011 & .4183 $\pm$ .0007 & \textbf{.4064 $\pm$ .0023} & \textbf{.4109 $\pm$ .0009} & .4138 $\pm$ .0020 & .4159 $\pm$ .0007 \\
 & 720  & .4896 $\pm$ .0014 & .4573 $\pm$ .0008 & \textbf{.4730 $\pm$ .0027} & \textbf{.4492 $\pm$ .0013} & .4844 $\pm$ .0008 & .4538 $\pm$ .0004 \\ \midrule
\multirow{4}{*}{\rotatebox{90}{ETTm2}}  & 96   & .1857 $\pm$ .0001 & .2709 $\pm$ .0010 & \textbf{.1812 $\pm$ .0022} & \textbf{.2637 $\pm$ .0019} & .1901 $\pm$ .0002 & .2759 $\pm$ .0001 \\
 & 192  & .2528 $\pm$ .0006 & .3128 $\pm$ .0008 & \textbf{.2486 $\pm$ .0005} & \textbf{.3086 $\pm$ .0003} & .2548 $\pm$ .0004 & .3151 $\pm$ .0004 \\
 & 336  & .3146 $\pm$ .0022 & .3516 $\pm$ .0016 & \underline{.3105 $\pm$ .0016} & \textbf{.3486 $\pm$ .0005} & .3181 $\pm$ .0007 & .3555 $\pm$ .0003 \\
 & 720  & .4138 $\pm$ .0001 & .4074 $\pm$ .0008 & \textbf{.4088 $\pm$ .0033} & \textbf{.4044 $\pm$ .0021} & .4184 $\pm$ .0005 & .4105 $\pm$ .0005 \\ \midrule
\multirow{4}{*}{\rotatebox{90}{Weather}}  & 96   & \underline{.1776 $\pm$ .0005} & \underline{.2182 $\pm$ .0006} & \textbf{.1756 $\pm$ .0008} & \textbf{.2161 $\pm$ .0016} & .2139 $\pm$ .0011 & .2630 $\pm$ .0020 \\
 & 192  & .2263 $\pm$ .0010 & \underline{.2590 $\pm$ .0005} & \textbf{.2256 $\pm$ .0012} & \textbf{.2585 $\pm$ .0009} & .2591 $\pm$ .0018 & .2976 $\pm$ .2976 \\
 & 336  & .2829 $\pm$ .0013 & \textbf{.2999 $\pm$ .0008} & \underline{.2825 $\pm$ .0007} & \underline{.3002 $\pm$ .0006} & .3075 $\pm$ .0009 & .3270 $\pm$ .0024 \\
 & 720  & .3590 $\pm$ .0008 & .3505 $\pm$ .0001 & \underline{.3587 $\pm$ .0010} & \textbf{.3496 $\pm$ .0009} & .3774 $\pm$ .0007 & .3712 $\pm$ .0006 \\ \midrule
\multirow{4}{*}{\rotatebox{90}{ECL}}      & 96   & \underline{.1503 $\pm$ .0004} & \underline{.2419 $\pm$ .0004} & \textbf{.1497 $\pm$ .0003} & \textbf{.2416 $\pm$ .0004} & .2611 $\pm$ .0124 & .3462 $\pm$ .0086 \\
 & 192  & \textbf{.1649 $\pm$ .0003} & \textbf{.2560 $\pm$ .0003} & .1671 $\pm$ .0011 & .2582 $\pm$ .0012 & .2646 $\pm$ .0084 & .3503 $\pm$ .0056 \\
 & 336  & .1815 $\pm$ .0010 & .2739 $\pm$ .0007 & \underline{.1800 $\pm$ .0006} & \textbf{.2733 $\pm$ .0007} & .2880 $\pm$ .0024 & .3503 $\pm$ .0018 \\
 & 720  & .2131 $\pm$ .0002 & .3016 $\pm$ .0002 & \textbf{.2107 $\pm$ .0002} & \textbf{.3000 $\pm$ .0003} & .3216 $\pm$ .0110 & .3922 $\pm$ .0079 \\ \midrule
\multicolumn{2}{l}{\textbf{Top1 count}} & 4 & 4 & \textbf{12} & \textbf{14} & 0 & 0 \\
\multicolumn{2}{l}{\underline{Top2 count}} & 2 & 5 & \underline{9} & 6 & 0 & 0 \\ \bottomrule
\end{tabular}
\end{adjustbox}
\end{table*}

\begin{table*}[h!]
\centering
\caption{Full-shot forecasting results for iTransformer~\cite{iTransformer} backbone (part 2). SimMTM~\cite{SimMTM} is denoted as MTM, SoftCLT~\cite{SoftCLT} as SCLT, TimeSiam~\cite{TimeSiam} as Siam, and TimesURL~\cite{TimesURL} as URL. Best results are in \textbf{bold} and the second best are \underline{underlined}. (Rand., OliO, and TS2V results are in Table~\ref{tab:full_shot_itransformer_p1}.)}
\label{tab:full_shot_itransformer_p2}
\begin{adjustbox}{width=1.0\linewidth}
\setlength{\tabcolsep}{3pt}
\begin{tabular}{llcccccccc}
\toprule
\multicolumn{2}{c}{\multirow{2}{*}{Dataset}} & \multicolumn{2}{c}{MTM \cite{SimMTM}} & \multicolumn{2}{c}{SCLT \cite{SoftCLT}} & \multicolumn{2}{c}{Siam \cite{TimeSiam}} & \multicolumn{2}{c}{URL \cite{TimesURL}} \\
\cmidrule(lr){3-4} \cmidrule(lr){5-6} \cmidrule(lr){7-8} \cmidrule(lr){9-10}
& & MSE & MAE & MSE & MAE & MSE & MAE & MSE & MAE \\ \midrule
\multirow{4}{*}{\rotatebox{90}{ETTh1}}  & 96   & .3993 $\pm$ .0006 & .4118 $\pm$ .0004 & .4040 $\pm$ .0006 & .4152 $\pm$ .0002 & \underline{.3914 $\pm$ .0011} & \underline{.4057 $\pm$ .0009} & .3924 $\pm$ .0009 & .4076 $\pm$ .0006 \\
 & 192  & .4490 $\pm$ .0009 & .4417 $\pm$ .0014 & .4540 $\pm$ .0080 & .4444 $\pm$ .0058 & \textbf{.4408 $\pm$ .0013} & \textbf{.4342 $\pm$ .0005} & .4448 $\pm$ .0013 & .4388 $\pm$ .0008 \\
 & 336  & .4927 $\pm$ .0030 & .4634 $\pm$ .0026 & .4932 $\pm$ .0027 & .4661 $\pm$ .0004 & \textbf{.4824 $\pm$ .0028} & \textbf{.4559 $\pm$ .0021} & .4868 $\pm$ .0007 & .4612 $\pm$ .0004 \\
 & 720  & .5043 $\pm$ .0043 & .4846 $\pm$ .0019 & .5203 $\pm$ .0054 & .4970 $\pm$ .0028 & \textbf{.4812 $\pm$ .0064} & \textbf{.4738 $\pm$ .0030} & .5016 $\pm$ .0112 & .4873 $\pm$ .0051 \\ \midrule
\multirow{4}{*}{\rotatebox{90}{ETTh2}}  & 96   & .3181 $\pm$ .0021 & .3587 $\pm$ .0013 & .3016 $\pm$ .0044 & .3520 $\pm$ .0028 & .3050 $\pm$ .0017 & .3515 $\pm$ .0011 & .3037 $\pm$ .0002 & .3521 $\pm$ .0002 \\
 & 192  & .4040 $\pm$ .0047 & .4095 $\pm$ .0019 & .3871 $\pm$ .0043 & .4058 $\pm$ .0040 & \underline{.3794 $\pm$ .0016} & \textbf{.3971 $\pm$ .0007} & .3891 $\pm$ .0047 & .4057 $\pm$ .0037 \\
 & 336  & .4496 $\pm$ .0061 & .4424 $\pm$ .0022 & .4271 $\pm$ .0026 & .4333 $\pm$ .0023 & \textbf{.4180 $\pm$ .0024} & \textbf{.4296 $\pm$ .0013} & .4313 $\pm$ .0052 & .4339 $\pm$ .0020 \\
 & 720  & .4386 $\pm$ .0010 & .4479 $\pm$ .0004 & .4412 $\pm$ .0071 & .4541 $\pm$ .0045 & \underline{.4289 $\pm$ .0029} & \textbf{.4464 $\pm$ .0021} & .4398 $\pm$ .0034 & .4519 $\pm$ .0022 \\ \midrule
\multirow{4}{*}{\rotatebox{90}{ETTm1}}  & 96   & .3341 $\pm$ .0002 & .3732 $\pm$ .0002 & .3368 $\pm$ .0008 & .3712 $\pm$ .0001 & .3435 $\pm$ .0015 & .3764 $\pm$ .0007 & \underline{.3315 $\pm$ .0022} & \underline{.3660 $\pm$ .0013} \\
 & 192  & .3746 $\pm$ .0015 & .3945 $\pm$ .0010 & .3764 $\pm$ .0009 & .3919 $\pm$ .0006 & .3837 $\pm$ .0002 & .3958 $\pm$ .0003 & \underline{.3732 $\pm$ .0003} & \underline{.3901 $\pm$ .0005} \\
 & 336  & \underline{.4083 $\pm$ .0007} & .4157 $\pm$ .0003 & .4097 $\pm$ .0007 & \underline{.4140 $\pm$ .0001} & .4175 $\pm$ .0003 & .4173 $\pm$ .0001 & .4124 $\pm$ .0018 & .4146 $\pm$ .0011 \\
 & 720  & \underline{.4784 $\pm$ .0005} & .4534 $\pm$ .0002 & .4805 $\pm$ .0007 & \underline{.4521 $\pm$ .0002} & .4839 $\pm$ .0005 & .4543 $\pm$ .0004 & .4812 $\pm$ .0014 & .4527 $\pm$ .0006 \\ \midrule
\multirow{4}{*}{\rotatebox{90}{ETTm2}}  & 96   & .1836 $\pm$ .0003 & .2696 $\pm$ .0003 & .1875 $\pm$ .0017 & .2705 $\pm$ .0016 & \underline{.1826 $\pm$ .0005} & \underline{.2657 $\pm$ .0005} & .1838 $\pm$ .0004 & .2689 $\pm$ .0009 \\
 & 192  & \underline{.2488 $\pm$ .0008} & \underline{.3103 $\pm$ .0006} & .2539 $\pm$ .0021 & .3125 $\pm$ .0010 & .2536 $\pm$ .0043 & .3113 $\pm$ .0031 & .2517 $\pm$ .0011 & .3123 $\pm$ .014 \\
 & 336  & \textbf{.3095 $\pm$ .0010} & \underline{.3498 $\pm$ .0005} & .3173 $\pm$ .0008 & .3516 $\pm$ .0006 & .3203 $\pm$ .0022 & .3530 $\pm$ .0012 & .3134 $\pm$ .0011 & .3501 $\pm$ .0010 \\
 & 720  & .4131 $\pm$ .0032 & .4091 $\pm$ .0018 & .4151 $\pm$ .0009 & .4078 $\pm$ .0007 & .4191 $\pm$ .0018 & .4100 $\pm$ .0143 & \underline{.4129 $\pm$ .0020} & \underline{.4062 $\pm$ .0012} \\ \midrule
\multirow{4}{*}{\rotatebox{90}{Weather}}  & 96   & .2041 $\pm$ .0017 & .2478 $\pm$ .0007 & .2156 $\pm$ .0056 & .2648 $\pm$ .0050 & .1778 $\pm$ .0008 & .2190 $\pm$ .0007 & .1834 $\pm$ .0003 & .2236 $\pm$ .0004 \\
 & 192  & .2499 $\pm$ .0006 & .2821 $\pm$ .0003 & .2570 $\pm$ .0016 & .2913 $\pm$ .0027 & \underline{.2262 $\pm$ .0012} & .2604 $\pm$ .0008 & .2314 $\pm$ .0004 & .2629 $\pm$ .0005 \\
 & 336  & .3008 $\pm$ .0006 & .3163 $\pm$ .0005 & .3085 $\pm$ .0030 & .3255 $\pm$ .0032 & \textbf{.2819 $\pm$ .0010} & .3014 $\pm$ .0008 & .2865 $\pm$ .0003 & .3023 $\pm$ .0002 \\
 & 720  & .3706 $\pm$ .0009 & .3609 $\pm$ .0008 & .3798 $\pm$ .0007 & .3706 $\pm$ .0005 & \textbf{.3574 $\pm$ .0010} & \underline{.3502 $\pm$ .0004} & .3623 $\pm$ .0005 & .3518 $\pm$ .0006 \\ \midrule
\multirow{4}{*}{\rotatebox{90}{ECL}}      & 96   & .3656 $\pm$ .0010 & .4260 $\pm$ .0007 & .1646 $\pm$ .0004 & .2567 $\pm$ .0004 & .2849 $\pm$ .0011 & .3656 $\pm$ .0009 & .1508 $\pm$ .0003 & .2435 $\pm$ .0004 \\
 & 192  & .3655 $\pm$ .0021 & .4279 $\pm$ .013 & .1789 $\pm$ .0005 & .2695 $\pm$ .0006 & .2908 $\pm$ .0023 & .3718 $\pm$ .0111 & \underline{.1659 $\pm$ .0004} & \underline{.2572 $\pm$ .0004} \\
 & 336  & .3800 $\pm$ .0025 & .4388 $\pm$ .0020 & .1982 $\pm$ .0007 & .2894 $\pm$ .0007 & .3050 $\pm$ .0008 & .3836 $\pm$ .0004 & \textbf{.1797 $\pm$ .0001} & .2739 $\pm$ .0001 \\
 & 720  & .4121 $\pm$ .0006 & .4584 $\pm$ .0005 & .2394 $\pm$ .0005 & .3244 $\pm$ .0003 & .3138 $\pm$ .0007 & .3855 $\pm$ .0006 & \underline{.2114 $\pm$ .0002} & \underline{.3015 $\pm$ .0001} \\ \midrule
\multicolumn{2}{l}{\textbf{Top1 count}} & 1 & 0 & 0 & 0 & 6 & 6 & 1 & 0 \\
\multicolumn{2}{l}{\underline{Top2 count}} & 3 & 2 & 0 & 2 & 5 & 3 & 5 & 5 \\ \bottomrule
\end{tabular}
\end{adjustbox}
\end{table*}

\clearpage

\subsection{TimeVLM Results}
\begin{table*}[h!]
\centering
\caption{Full-shot forecasting results for TimeVLM~\cite{TimeVLM} backbone (part 1). TS2Vec~\cite{TS2Vec} is denoted as TS2V. Best results are in \textbf{bold} and the second best are \underline{underlined}. (MTM, SCLT, Siam, and URL results are in Table~\ref{tab:full_shot_timevlm_p2}.)}
\label{tab:full_shot_timevlm_p1}
\begin{adjustbox}{width=1.0\linewidth}
\begin{tabular}{llcccccc}
\toprule
\multicolumn{2}{c}{\multirow{2}{*}{Dataset}} & \multicolumn{2}{c}{Rand.} & \multicolumn{2}{c}{\textbf{OliO (Ours)}} & \multicolumn{2}{c}{TS2V \cite{TS2Vec}} \\
\cmidrule(lr){3-4} \cmidrule(lr){5-6} \cmidrule(lr){7-8}
& & MSE & MAE & MSE & MAE & MSE & MAE \\ \midrule
\multirow{4}{*}{\rotatebox{90}{ETTh1}} & 96  & .3704 $\pm$ .0033 & .4028 $\pm$ .0036 & .3669 $\pm$ .0024 & \textbf{.3979 $\pm$ .0000} & .3666 $\pm$ .0012 & \underline{.3983 $\pm$ .0014} \\
& 192 & .4074 $\pm$ .0006 & .4301 $\pm$ .0015 & .4029 $\pm$ .0046 & .4251 $\pm$ .0000 & .4040 $\pm$ .0023 & .4278 $\pm$ .0008 \\
& 336 & .4659 $\pm$ .0034 & .4661 $\pm$ .0032 & \underline{.4232 $\pm$ .0003} & \textbf{.4399 $\pm$ .0000} & .4287 $\pm$ .0038 & .4442 $\pm$ .0035 \\
& 720 & .4981 $\pm$ .0066 & .5011 $\pm$ .0037 & .4587 $\pm$ .0069 & \underline{.4752 $\pm$ .0000} & .4634 $\pm$ .0095 & .4824 $\pm$ .0063 \\ \midrule
\multirow{4}{*}{\rotatebox{90}{ETTh2}} & 96  & .2818 $\pm$ .0004 & .3456 $\pm$ .0009 & \underline{.2755 $\pm$ .0000} & .3407 $\pm$ .0000 & .2766 $\pm$ .0004 & .3407 $\pm$ .0009 \\
& 192 & .3497 $\pm$ .0010 & .3828 $\pm$ .0027 & \textbf{.3359 $\pm$ .0002} & \textbf{.3811 $\pm$ .0000} & .3369 $\pm$ .0013 & .3814 $\pm$ .0009 \\
& 336 & .3767 $\pm$ .0079 & .4093 $\pm$ .0036 & \textbf{.3613 $\pm$ .0005} & \textbf{.4040 $\pm$ .0000} & \underline{.3621 $\pm$ .0016} & \underline{.4051 $\pm$ .0010} \\
& 720 & .4117 $\pm$ .0036 & .4463 $\pm$ .0047 & \underline{.4009 $\pm$ .0042} & \underline{.4407 $\pm$ .0000} & .4069 $\pm$ .0048 & .4437 $\pm$ .0028 \\ \midrule
\multirow{4}{*}{\rotatebox{90}{ETTm1}} & 96  & .2908 $\pm$ .0009 & .3483 $\pm$ .0004 & \underline{.2895 $\pm$ .0009} & \textbf{.3468 $\pm$ .0000} & .2896 $\pm$ .0003 & .3477 $\pm$ .0006 \\
& 192 & .3354 $\pm$ .0015 & .3759 $\pm$ .0004 & \textbf{.3271 $\pm$ .0011} & \textbf{.3713 $\pm$ .0000} & .3310 $\pm$ .0022 & .3751 $\pm$ .0011 \\
& 336 & .3676 $\pm$ .0081 & .3966 $\pm$ .0027 & \textbf{.3593 $\pm$ .0008} & .3932 $\pm$ .0000 & .3617 $\pm$ .0027 & .3938 $\pm$ .0016 \\
& 720 & .4209 $\pm$ .0012 & .4297 $\pm$ .0022 & \underline{.4096 $\pm$ .0029} & .4256 $\pm$ .0000 & .4102 $\pm$ .0030 & .4256 $\pm$ .0021 \\ \midrule
\multirow{4}{*}{\rotatebox{90}{ETTm2}} & 96  & .1690 $\pm$ .0006 & .2603 $\pm$ .0005 & .1674 $\pm$ .0004 & .2584 $\pm$ .0000 & \underline{.1673 $\pm$ .0011} & \underline{.2569 $\pm$ .0011} \\
& 192 & .2251 $\pm$ .0020 & .2992 $\pm$ .0028 & \underline{.2216 $\pm$ .0023} & .2964 $\pm$ .0000 & .2233 $\pm$ .0025 & .2963 $\pm$ .0029 \\
& 336 & .2745 $\pm$ .0027 & .3313 $\pm$ .0009 & .2733 $\pm$ .0014 & .3293 $\pm$ .0000 & \underline{.2702 $\pm$ .0017} & .3292 $\pm$ .0013 \\
& 720 & .3676 $\pm$ .0018 & .3887 $\pm$ .0017 & \textbf{.3521 $\pm$ .0014} & \underline{.3824 $\pm$ .0000} & .3540 $\pm$ .0025 & .3829 $\pm$ .0004 \\ \midrule
\multirow{4}{*}{\rotatebox{90}{Weather}} & 96  & .1495 $\pm$ .0006 & .2004 $\pm$ .0004 & .1490 $\pm$ .0010 & \textbf{.1988 $\pm$ .0000} & .1490 $\pm$ .0006 & .1991 $\pm$ .0016 \\
& 192 & \underline{.1937 $\pm$ .0005} & .2428 $\pm$ .0008 & \underline{.1937 $\pm$ .0003} & .2431 $\pm$ .0000 & .1938 $\pm$ .0100 & \textbf{.2409 $\pm$ .0005} \\
& 336 & .2445 $\pm$ .0005 & .2828 $\pm$ .0002 & \textbf{.2433 $\pm$ .0002} & \textbf{.2793 $\pm$ .0000} & \underline{.2444 $\pm$ .0008} & \underline{.2804 $\pm$ .0003} \\
& 720 & .3180 $\pm$ .0005 & .3342 $\pm$ .0004 & \underline{.3148 $\pm$ .0003} & .3313 $\pm$ .0000 & \textbf{.3147 $\pm$ .0003} & \textbf{.3310 $\pm$ .0007} \\ \midrule
\multirow{4}{*}{\rotatebox{90}{ECL}}     & 96  & .1353 $\pm$ .0002 & .2346 $\pm$ .0004 & .1319 $\pm$ .0002 & \underline{.2264 $\pm$ .0000} & \textbf{.1315 $\pm$ .0000} & .2266 $\pm$ .0001 \\
& 192 & .1513 $\pm$ .0001 & .2487 $\pm$ .0007 & .1483 $\pm$ .0001 & \textbf{.2412 $\pm$ .0000} & \textbf{.1478 $\pm$ .0001} & \underline{.2418 $\pm$ .0001} \\
& 336 & .1673 $\pm$ .0007 & .2648 $\pm$ .0008 & \textbf{.1643 $\pm$ .0003} & \textbf{.2578 $\pm$ .0000} & .1670 $\pm$ .0005 & .2629 $\pm$ .0006 \\
& 720 & .2080 $\pm$ .0013 & .3000 $\pm$ .0026 & \textbf{.2044 $\pm$ .0005} & \textbf{.2924 $\pm$ .0000} & .2091 $\pm$ .0023 & .3005 $\pm$ .0270 \\ \midrule
\multicolumn{2}{l}{\textbf{Top1 count}} & 0 & 0 & \textbf{8} & \textbf{11} & 3 & 2 \\
\multicolumn{2}{l}{\underline{Top2 count}} & 1 & 0 & \underline{8} & 4 & 4 & 5 \\ \bottomrule
\end{tabular}
\end{adjustbox}
\end{table*}

\begin{table*}[h!]
\centering
\caption{Full-shot forecasting results for TimeVLM~\cite{TimeVLM} backbone (part 2). SimMTM~\cite{SimMTM} is denoted as MTM, SoftCLT~\cite{SoftCLT} as SCLT, TimeSiam~\cite{TimeSiam} as Siam, and TimesURL~\cite{TimesURL} as URL. Best results are in \textbf{bold} and the second best are \underline{underlined}. (Rand., OliO, and TS2V results are in Table~\ref{tab:full_shot_timevlm_p1}.)}
\label{tab:full_shot_timevlm_p2}
\begin{adjustbox}{width=1.0\linewidth}
\setlength{\tabcolsep}{3pt}
\begin{tabular}{llcccccccc}
\toprule
\multicolumn{2}{c}{\multirow{2}{*}{Dataset}} & \multicolumn{2}{c}{MTM \cite{SimMTM}} & \multicolumn{2}{c}{SCLT \cite{SoftCLT}} & \multicolumn{2}{c}{Siam \cite{TimeSiam}} & \multicolumn{2}{c}{URL \cite{TimesURL}} \\
\cmidrule(lr){3-4} \cmidrule(lr){5-6} \cmidrule(lr){7-8} \cmidrule(lr){9-10}
& & MSE & MAE & MSE & MAE & MSE & MAE & MSE & MAE \\ \midrule
\multirow{4}{*}{\rotatebox{90}{ETTh1}} & 96  & .3666 $\pm$ .0016 & .3984 $\pm$ .0017 & \textbf{.3660 $\pm$ .0005} & .3988 $\pm$ .0004 & .3779 $\pm$ .0001 & .4085 $\pm$ .0000 & .3672 $\pm$ .0008 & .4000 $\pm$ .0008 \\
& 192 & .4038 $\pm$ .0017 & .4278 $\pm$ .0011 & \textbf{.4017 $\pm$ .0015} & \textbf{.4223 $\pm$ .0013} & .4074 $\pm$ .0007 & .4263 $\pm$ .0010 & \underline{.4027 $\pm$ .0028} & \underline{.4234 $\pm$ .0026} \\
& 336 & .4341 $\pm$ .0073 & .4533 $\pm$ .0062 & \underline{.4232 $\pm$ .0042} & \underline{.4401 $\pm$ .0042} & .4285 $\pm$ .0011 & .4428 $\pm$ .0011 & .4250 $\pm$ .0067 & .4414 $\pm$ .0064 \\
& 720 & .4930 $\pm$ .0009 & .4964 $\pm$ .0006 & \textbf{.4556 $\pm$ .0049} & \textbf{.4736 $\pm$ .0041} & \underline{.4577 $\pm$ .0114} & .4769 $\pm$ .0070 & .4596 $\pm$ .0119 & .4794 $\pm$ .0092 \\ \midrule
\multirow{4}{*}{\rotatebox{90}{ETTh2}} & 96  & .2862 $\pm$ .0061 & .3470 $\pm$ .0036 & .2762 $\pm$ .0010 & .3428 $\pm$ .0016 & .2777 $\pm$ .0003 & .3448 $\pm$ .0004 & \textbf{.2754 $\pm$ .0006} & \textbf{.3401 $\pm$ .0007} \\
& 192 & .3409 $\pm$ .0018 & .3868 $\pm$ .0014 & .3377 $\pm$ .0004 & .3835 $\pm$ .0021 & .3409 $\pm$ .0018 & .3865 $\pm$ .0017 & \underline{.3368 $\pm$ .0011} & \underline{.3822 $\pm$ .0021} \\
& 336 & .3664 $\pm$ .0004 & .4076 $\pm$ .0009 & .3624 $\pm$ .0007 & .4059 $\pm$ .0003 & .3713 $\pm$ .0142 & .4093 $\pm$ .0082 & .3625 $\pm$ .0019 & .4059 $\pm$ .0028 \\
& 720 & .4073 $\pm$ .0038 & .4438 $\pm$ .0022 & .4119 $\pm$ .0032 & .4493 $\pm$ .0032 & .4218 $\pm$ .0024 & .4599 $\pm$ .0023 & \textbf{.3977 $\pm$ .0006} & \textbf{.4372 $\pm$ .0007} \\ \midrule
\multirow{4}{*}{\rotatebox{90}{ETTm1}} & 96  & .2899 $\pm$ .0005 & .3478 $\pm$ .0003 & .2903 $\pm$ .0008 & \underline{.3472 $\pm$ .0007} & .2940 $\pm$ .0016 & .3515 $\pm$ .0008 & \underline{.2895 $\pm$ .0004} & .3480 $\pm$ .0005 \\
& 192 & .3348 $\pm$ .0036 & .3770 $\pm$ .0022 & \underline{.3308 $\pm$ .0009} & \underline{.3735 $\pm$ .0003} & .3331 $\pm$ .0006 & .3742 $\pm$ .0004 & .3309 $\pm$ .0020 & .3736 $\pm$ .0010 \\
& 336 & .3663 $\pm$ .0027 & .3967 $\pm$ .0012 & .3615 $\pm$ .0018 & .3930 $\pm$ .0004 & .3619 $\pm$ .0004 & \underline{.3919 $\pm$ .0010} & \underline{.3605 $\pm$ .0003} & \textbf{.3908 $\pm$ .0004} \\
& 720 & .4104 $\pm$ .0017 & .4262 $\pm$ .0015 & .4102 $\pm$ .0025 & \underline{.4244 $\pm$ .0008} & .4165 $\pm$ .0012 & \textbf{.4236 $\pm$ .0010} & \textbf{.4081 $\pm$ .0012} & \underline{.4244 $\pm$ .0008} \\ \midrule
\multirow{4}{*}{\rotatebox{90}{ETTm2}} & 96  & .1691 $\pm$ .0004 & .2603 $\pm$ .0007 & .1688 $\pm$ .0018 & .2590 $\pm$ .0011 & .1678 $\pm$ .0018 & .2586 $\pm$ .0018 & \underline{.1673 $\pm$ .0011} & \underline{.2569 $\pm$ .0011} \\
& 192 & .2238 $\pm$ .0008 & .2978 $\pm$ .0011 & \textbf{.2211 $\pm$ .0004} & \underline{.2951 $\pm$ .0007} & .2227 $\pm$ .012 & \textbf{.2950 $\pm$ .0014} & .2229 $\pm$ .0025 & .2963 $\pm$ .0029 \\
& 336 & .2728 $\pm$ .0040 & .3311 $\pm$ .0031 & \textbf{.2699 $\pm$ .0014} & \underline{.3279 $\pm$ .0015} & .2716 $\pm$ .0015 & \textbf{.3269 $\pm$ .0016} & .2712 $\pm$ .0035 & .3293 $\pm$ .0028 \\
& 720 & .3581 $\pm$ .0082 & .3859 $\pm$ .0032 & .3544 $\pm$ .0011 & .3825 $\pm$ .0008 & .3590 $\pm$ .0034 & \textbf{.3823 $\pm$ .0025} & .3540 $\pm$ .0025 & .3829 $\pm$ .0004 \\ \midrule
\multirow{4}{*}{\rotatebox{90}{Weather}} & 96  & .1492 $\pm$ .0006 & \underline{.1990 $\pm$ .0011} & \textbf{.1488 $\pm$ .0005} & .1991 $\pm$ .0006 & .1520 $\pm$ .0002 & .2021 $\pm$ .0002 & .1491 $\pm$ .0009 & .1998 $\pm$ .0015 \\
& 192 & .1946 $\pm$ .0003 & .2431 $\pm$ .0006 & .1942 $\pm$ .0009 & \underline{.2411 $\pm$ .0009} & .1956 $\pm$ .0001 & .2422 $\pm$ .0002 & .1938 $\pm$ .0003 & .2412 $\pm$ .0009 \\
& 336 & .2451 $\pm$ .0003 & .2826 $\pm$ .0005 & .2453 $\pm$ .010 & .2808 $\pm$ .0007 & .2453 $\pm$ .0004 & .2809 $\pm$ .0003 & .2445 $\pm$ .0003 & .2822 $\pm$ .0007 \\
& 720 & .3163 $\pm$ .011 & .3335 $\pm$ .0005 & .3170 $\pm$ .0008 & .3334 $\pm$ .0008 & .3151 $\pm$ .0008 & .3321 $\pm$ .0005 & .3149 $\pm$ .0001 & \underline{.3312 $\pm$ .0004} \\ \midrule
\multirow{4}{*}{\rotatebox{90}{ECL}}     & 96  & \underline{.1318 $\pm$ .0003} & \underline{.2264 $\pm$ .0003} & .1325 $\pm$ .0001 & .2276 $\pm$ .0001 & .1346 $\pm$ .0002 & .2321 $\pm$ .0002 & .1363 $\pm$ .0004 & .2330 $\pm$ .0007 \\
& 192 & .1487 $\pm$ .0008 & .2426 $\pm$ .0012 & \underline{.1481 $\pm$ .0001} & .2424 $\pm$ .0002 & .1533 $\pm$ .0002 & .2498 $\pm$ .0004 & .1534 $\pm$ .0005 & .2493 $\pm$ .0120 \\
& 336 & .1665 $\pm$ .0010 & .2612 $\pm$ .0130 & \underline{.1651 $\pm$ .0001} & \underline{.2605 $\pm$ .0002} & .1692 $\pm$ .0110 & .2647 $\pm$ .0011 & .1727 $\pm$ .0003 & .2720 $\pm$ .0004 \\
& 720 & .2080 $\pm$ .0026 & .2989 $\pm$ .0029 & .2077 $\pm$ .0001 & .2971 $\pm$ .0001 & .2080 $\pm$ .0006 & .2978 $\pm$ .0007 & \underline{.2061 $\pm$ .0001} & \underline{.2941 $\pm$ .0001} \\ \midrule
\multicolumn{2}{l}{\textbf{Top1 count}} & 0 & 0 & 6 & 2 & 0 & 4 & 3 & 3 \\
\multicolumn{2}{l}{\underline{Top2 count}} & 1 & 2 & 4 & \underline{8} & 1 & 1 & 6 & 3 \\ \bottomrule
\end{tabular}
\end{adjustbox}
\end{table*}

\clearpage

\section{Detailed comparative analysis of performance improvements}
\label{app:comparative_analysis}

Figs.~\ref{fig:heatmap_patchtst}, \ref{fig:heatmap_itransformer}, and~\ref{fig:heatmap_timevlm} show backbone-specific forecasting comparisons. Each cell averages the performance index $\mathrm{MSE}_{\mathrm{Rand}}/\mathrm{MSE}_{\mathrm{Method}}$ across the six benchmark datasets at a given prediction horizon. Values above 1.0 correspond to lower MSE than random initialization.

Figs.~\ref{fig:heatmap_etth1}, \ref{fig:heatmap_etth2}, \ref{fig:heatmap_ettm1}, \ref{fig:heatmap_ettm2}, \ref{fig:heatmap_weather}, and~\ref{fig:heatmap_ecl} instead average the index across the three backbones for each dataset. Together, the two views show how the relative benefit of pre-training varies across architectures, datasets, and horizons. App.~\ref{app:detailed_results} provides per-setting MSE and sample standard deviations.

\begin{figure*}[h]
    \centering
    \begin{minipage}[t]{0.48\textwidth}
        \centering
        \includegraphics[width=\linewidth]{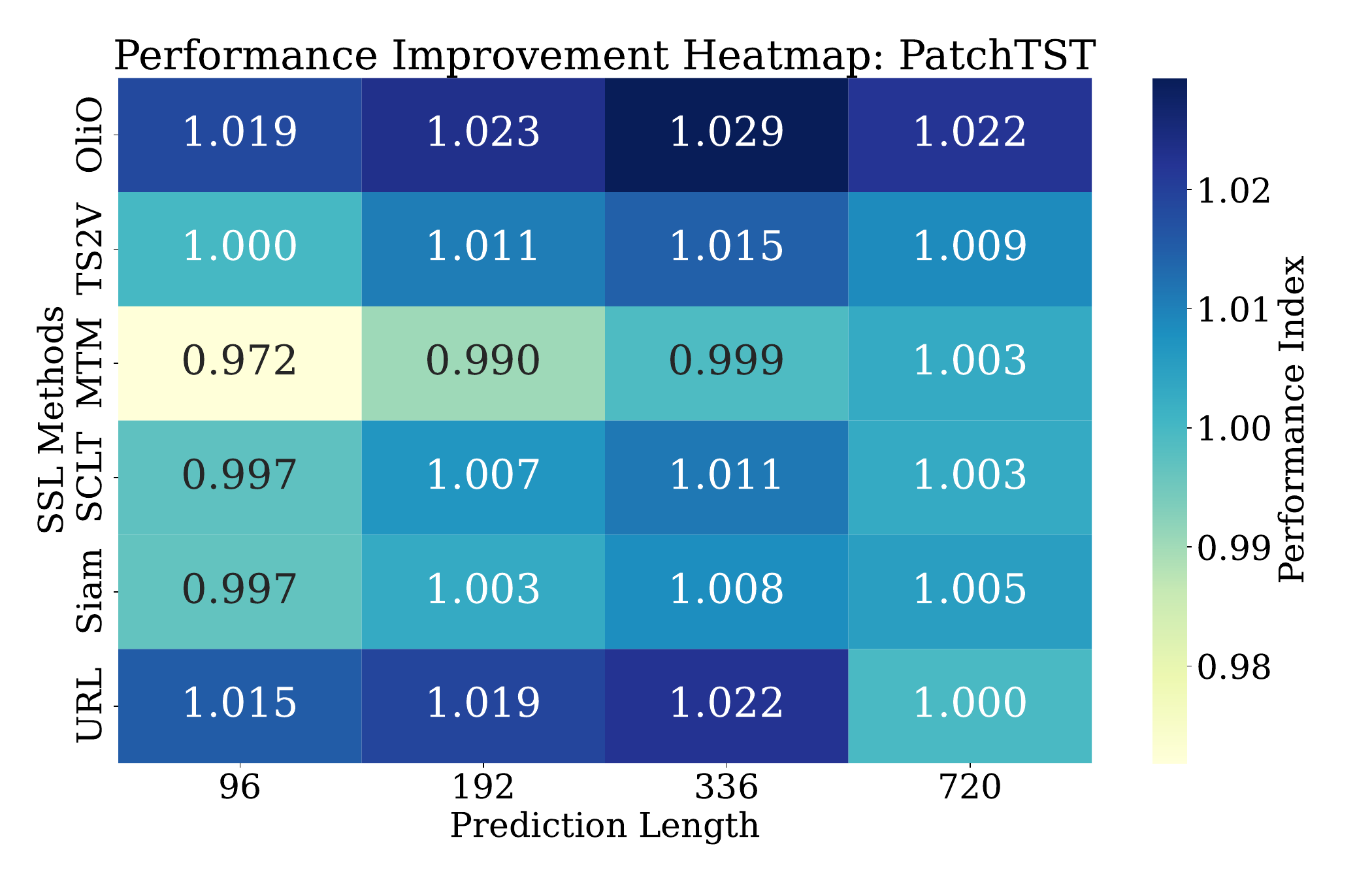}
        \caption{PatchTST backbone.}
        \label{fig:heatmap_patchtst}
    \end{minipage}
    \hfill
    \begin{minipage}[t]{0.48\textwidth}
        \centering
        \includegraphics[width=\linewidth]{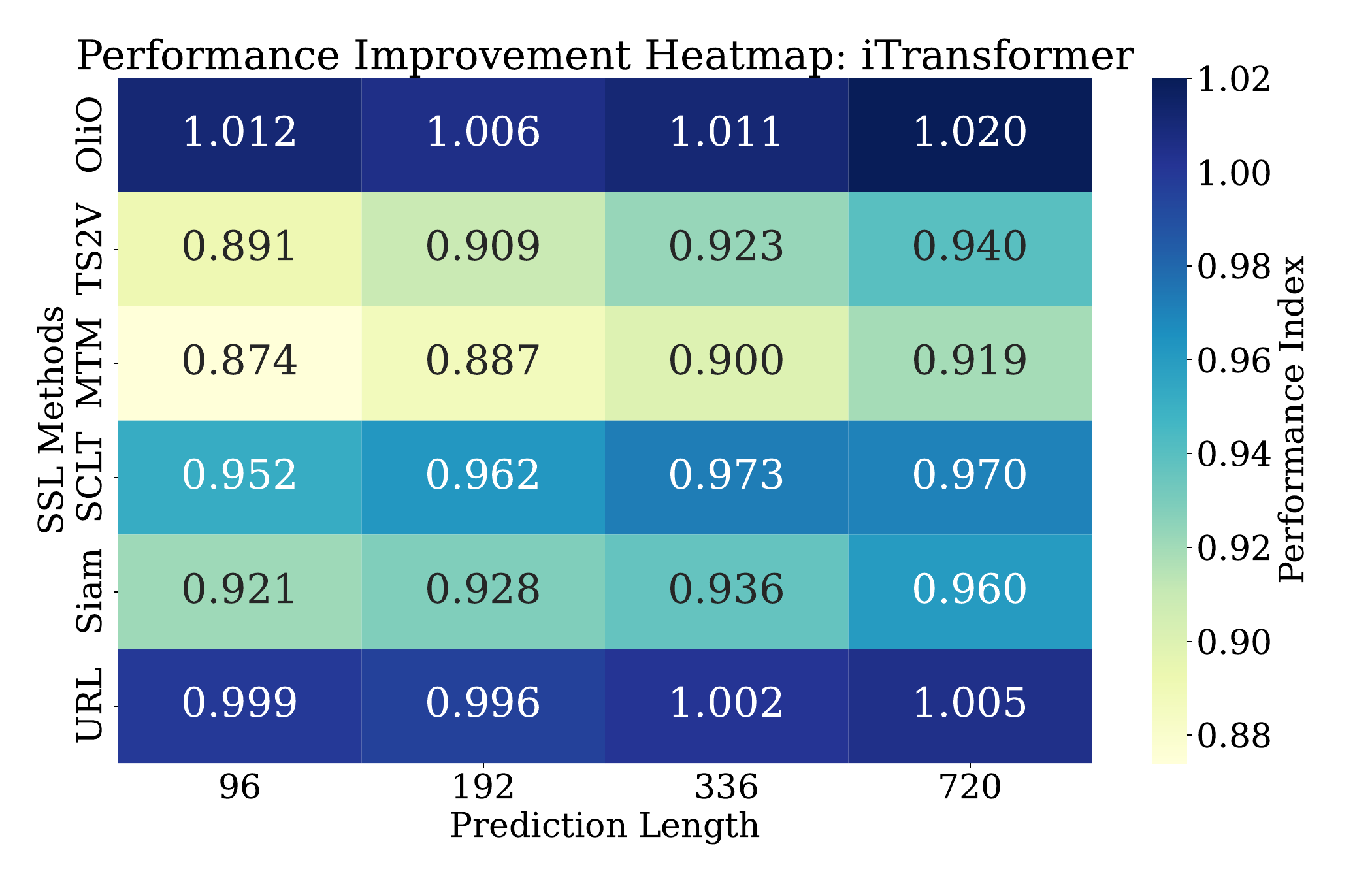}
        \caption{iTransformer backbone.}
        \label{fig:heatmap_itransformer}
    \end{minipage}

    \vspace{2em}

    \begin{minipage}[t]{0.48\textwidth}
        \centering
        \includegraphics[width=\linewidth]{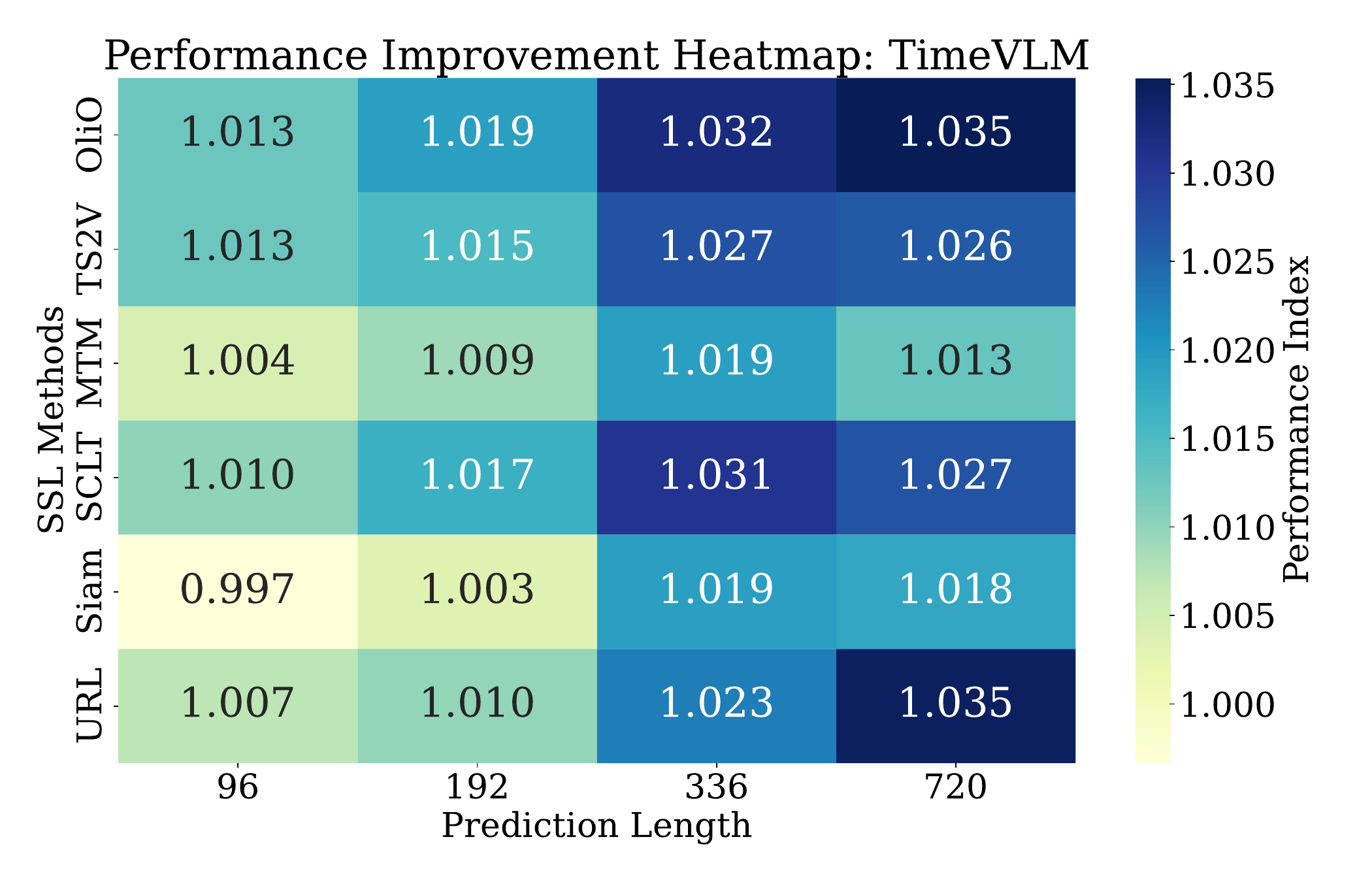}
        \caption{TimeVLM backbone.}
        \label{fig:heatmap_timevlm}
    \end{minipage}
    \caption*{Appendix figure: detailed performance improvement heatmaps for each backbone architecture. The values represent the mean performance index (Random MSE / Method MSE) aggregated across six benchmark datasets. Dark blue cells indicate higher performance indices.}
\end{figure*}

\begin{figure*}[h]
    \centering
    \begin{minipage}[t]{0.48\textwidth}
        \centering
        \includegraphics[width=\linewidth]{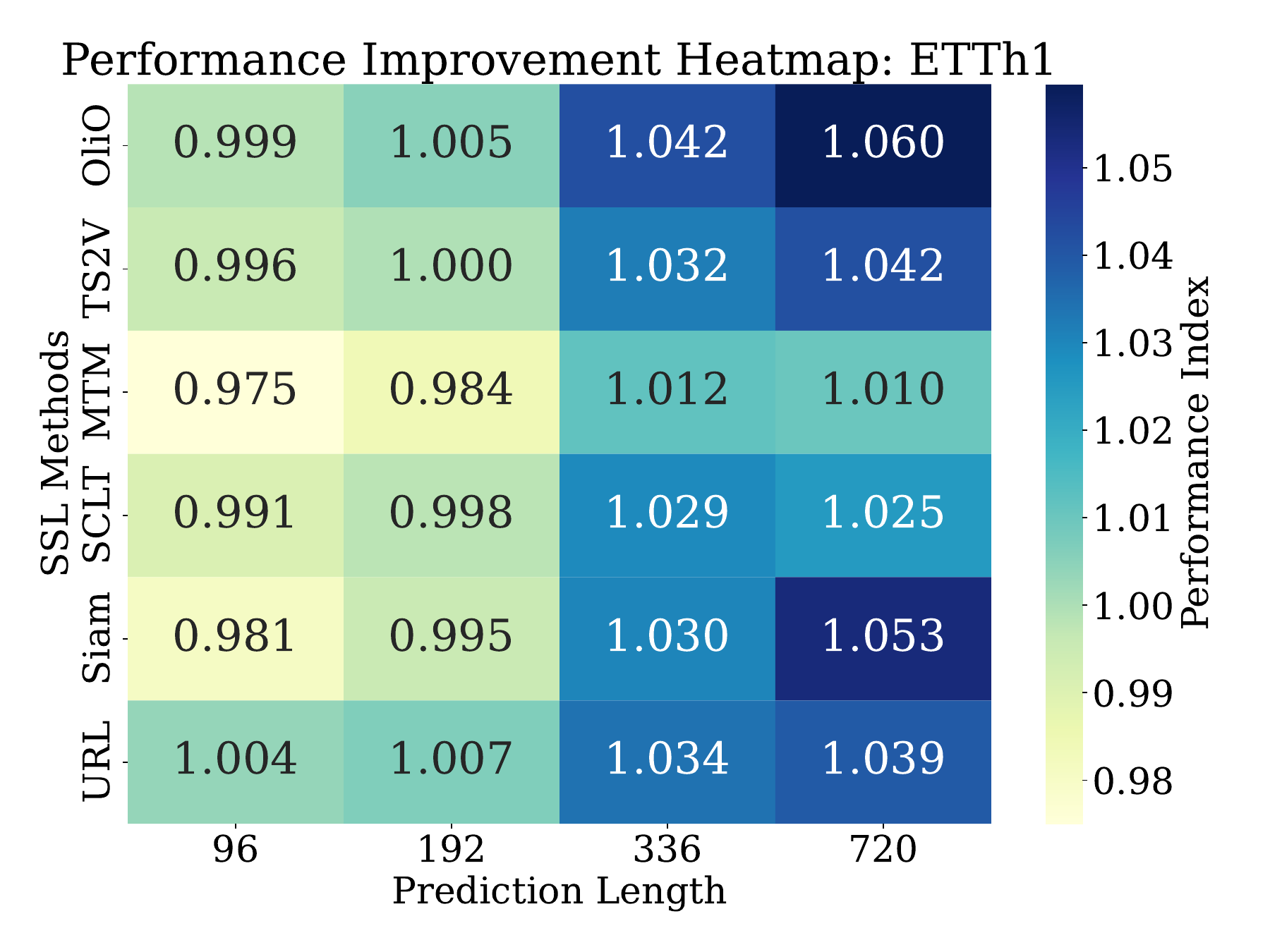}
        \caption{ETTh1 dataset.}
        \label{fig:heatmap_etth1}
    \end{minipage}
    \hfill
    \begin{minipage}[t]{0.48\textwidth}
        \centering
        \includegraphics[width=\linewidth]{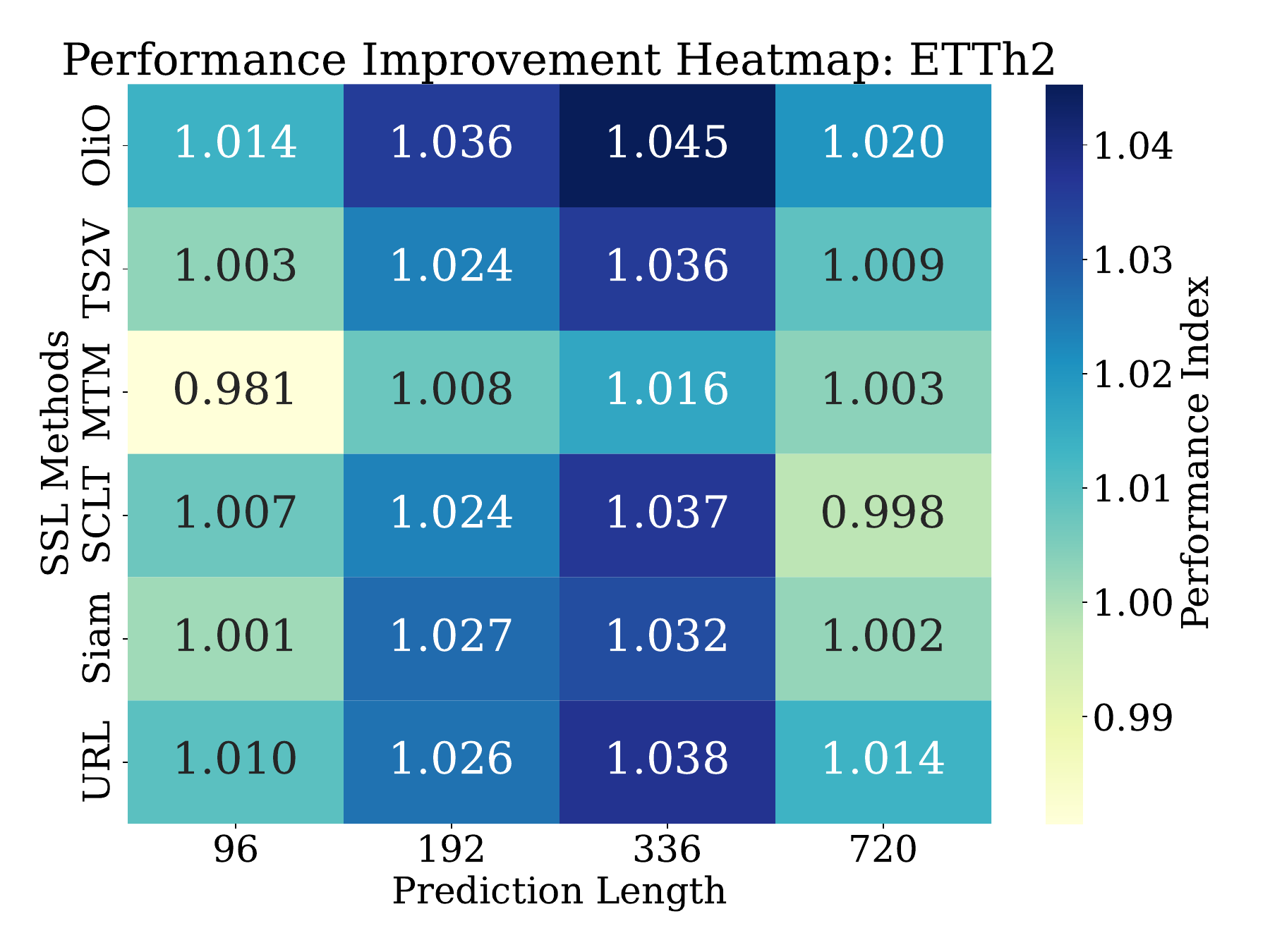}
        \caption{ETTh2 dataset.}
        \label{fig:heatmap_etth2}
    \end{minipage}

    \vspace{1.5em}

    \begin{minipage}[t]{0.48\textwidth}
        \centering
        \includegraphics[width=\linewidth]{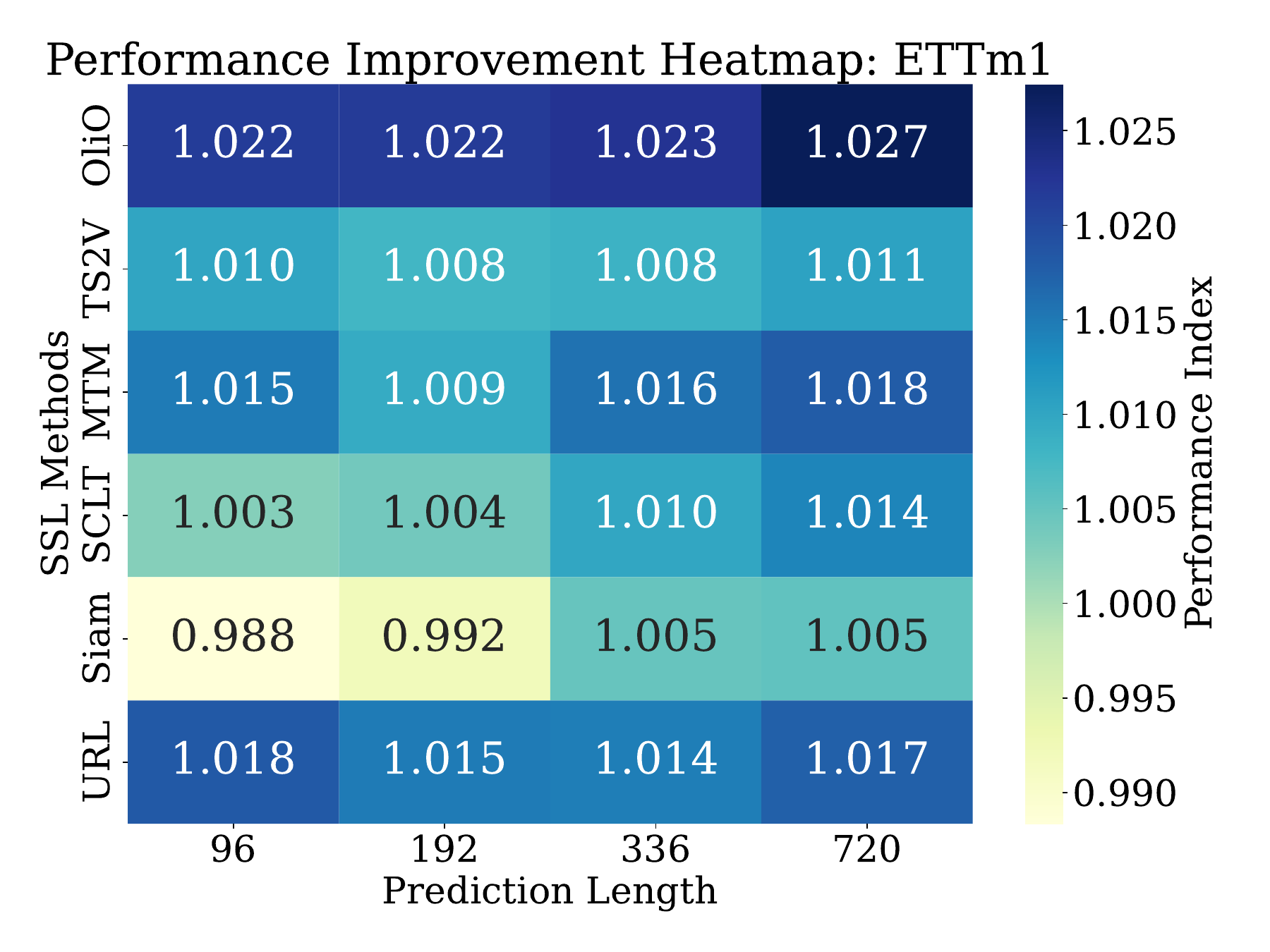}
        \caption{ETTm1 dataset.}
        \label{fig:heatmap_ettm1}
    \end{minipage}
    \hfill
    \begin{minipage}[t]{0.48\textwidth}
        \centering
        \includegraphics[width=\linewidth]{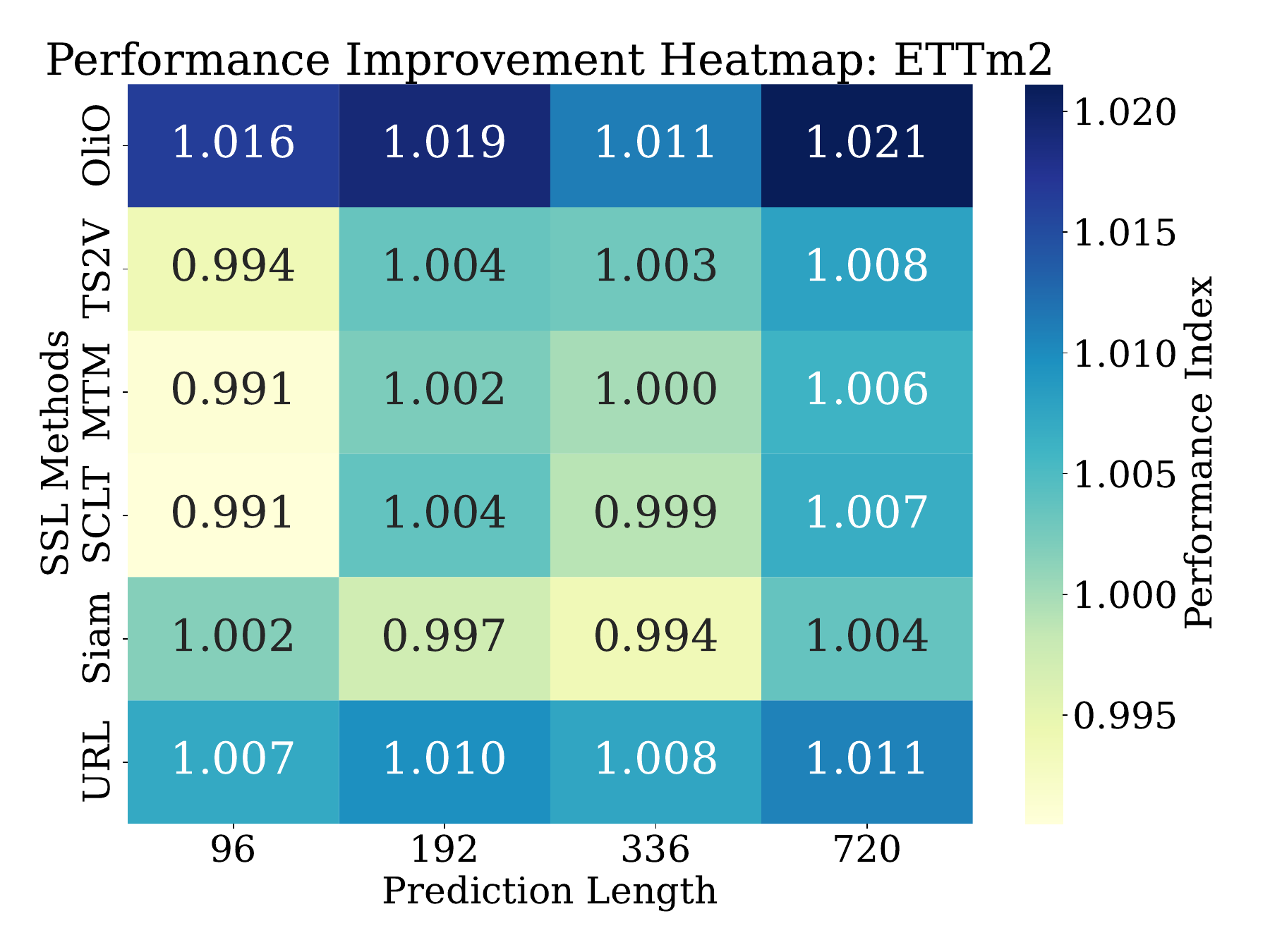}
        \caption{ETTm2 dataset.}
        \label{fig:heatmap_ettm2}
    \end{minipage}

    \vspace{1.5em}

    \begin{minipage}[t]{0.48\textwidth}
        \centering
        \includegraphics[width=\linewidth]{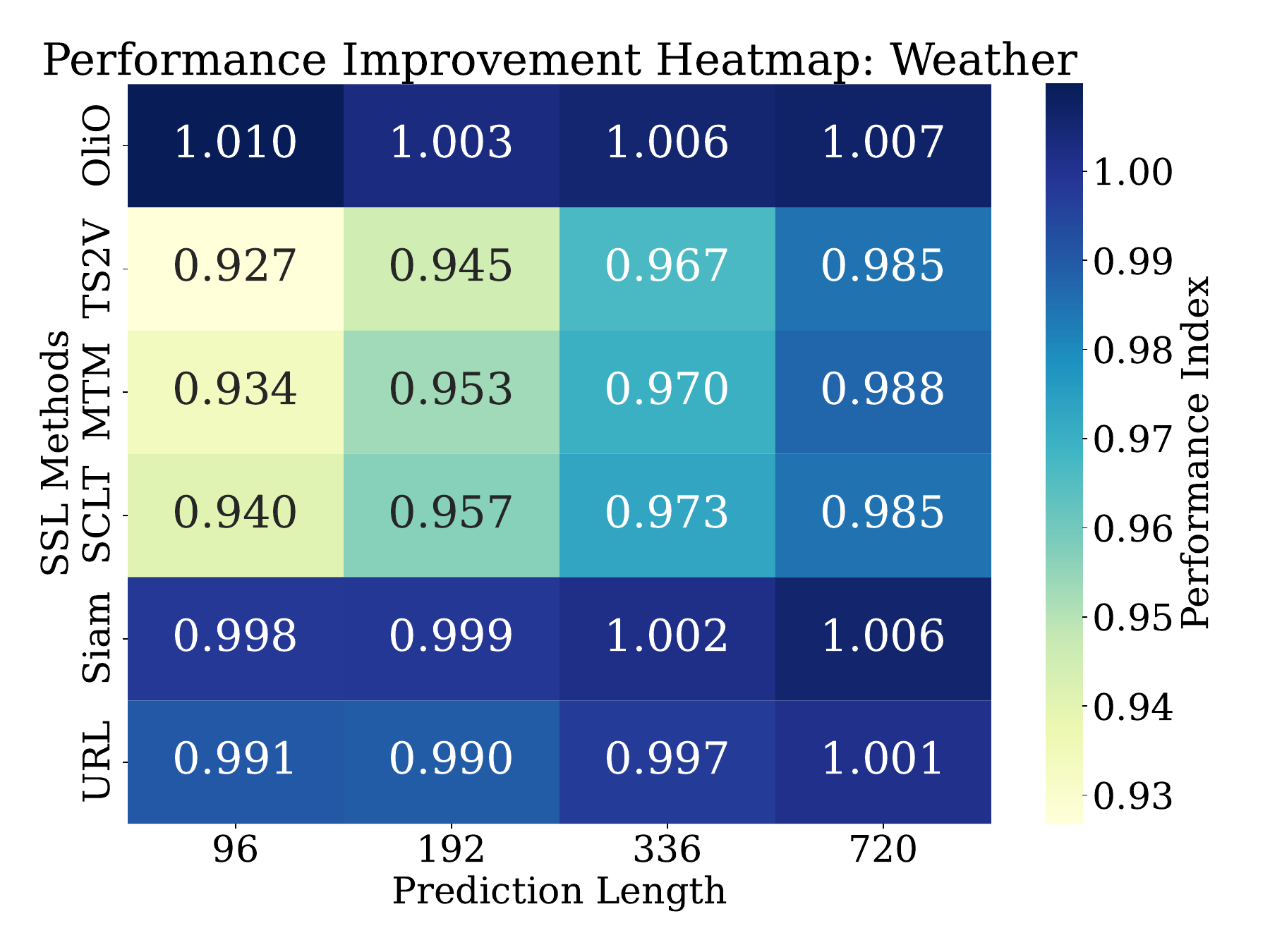}
        \caption{Weather dataset.}
        \label{fig:heatmap_weather}
    \end{minipage}
    \hfill
    \begin{minipage}[t]{0.48\textwidth}
        \centering
        \includegraphics[width=\linewidth]{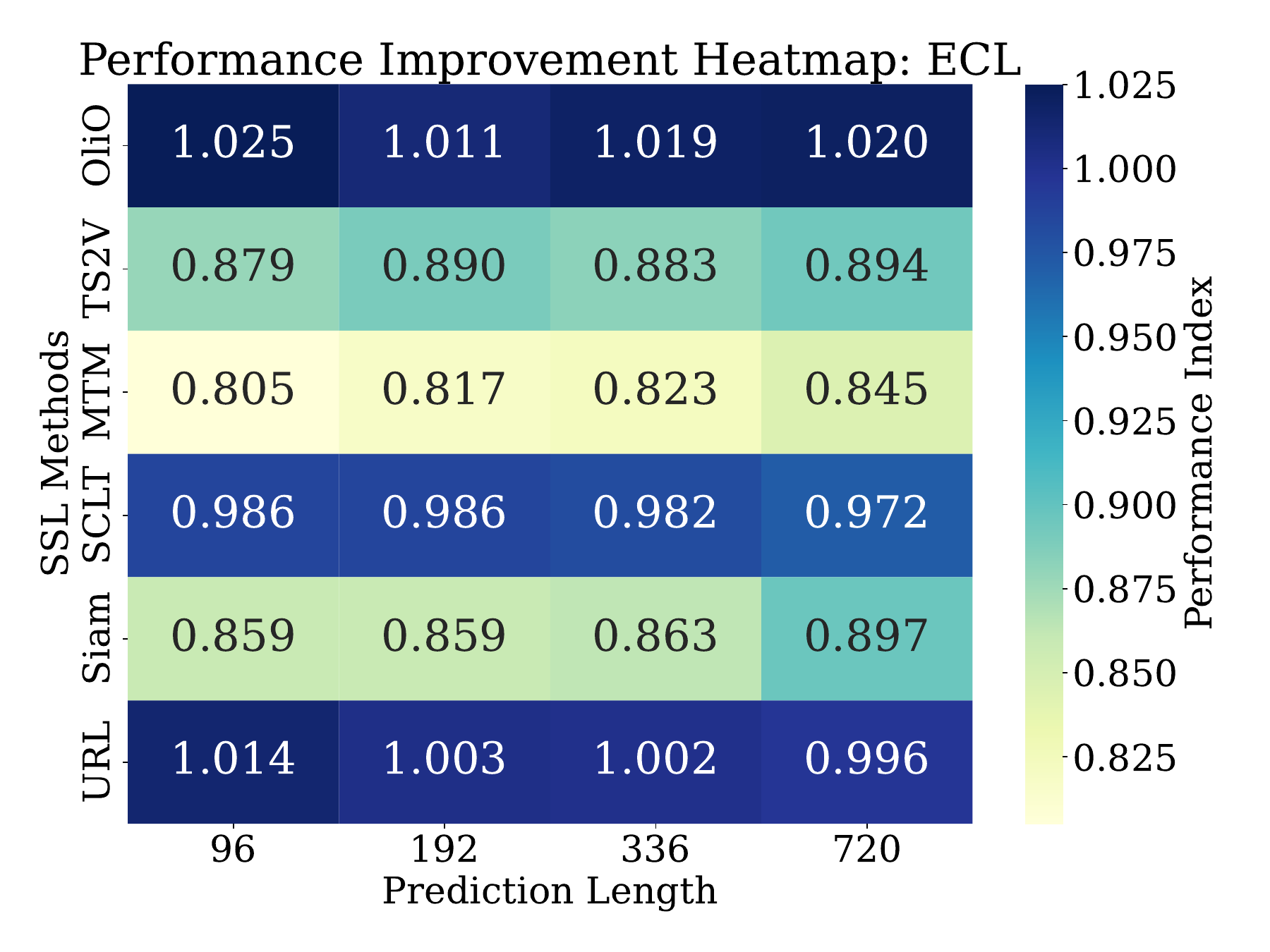}
        \caption{ECL dataset.}
        \label{fig:heatmap_ecl}
    \end{minipage}
    \caption*{Appendix figure: performance improvement heatmaps aggregated over all backbone architectures for each benchmark dataset. The values represent the mean performance index (Random MSE / Method MSE). These visualizations summarize how relative forecasting performance varies across datasets and prediction horizons.}
\end{figure*}

\clearpage

\section{Detailed ablation study results}
\label{app:ablation_study}

In this section, we provide detailed ablation results complementing the analyses in Sec.~\ref{sec:ablation_study}. We first report backbone-specific results comparing the bidirectional formulation of OliO with its unidirectional variants in Table~\ref{tab:ablation_detailed}. We then provide complete results for the transition-matrix constraint and approximation order $p$ in Table~\ref{tab:ablation_order_full}.

We then report overlap-ratio sweeps, endpoint comparisons, and fixed-pair masking measurements, followed by alternative transition parameterizations. Higher-order transition measurements and finite-range distance ordering complete the analysis.

\begin{table}[h]
\centering
\small
\caption{Detailed ablation study results for each backbone architecture. Values represent the average MSE across all prediction horizons $T \in \{96, 192, 336, 720\}$.}
\label{tab:ablation_detailed}
\begin{adjustbox}{width=\textwidth}
\begin{tabular}{llcccccc}
\toprule
Backbone & Method & ETTh1 & ETTh2 & ETTm1 & ETTm2 & Weather & ECL \\
\midrule
\multirow{3}{*}{PatchTST}
& \textbf{OliO (Bi)} & \textbf{.4103} & \textbf{.3424} & .3434          & \textbf{.2549} & \textbf{.2275} & \textbf{.1602} \\
& Forward Only        & .4133          & .3431          & .3474          & .2601          & .2137          & .2063          \\
& Backward Only       & .4126          & .3428          & \textbf{.3432} & .2601          & .2303          & .1621          \\
\midrule
\multirow{3}{*}{iTransformer}
& \textbf{OliO (Bi)} & \textbf{.4549} & \textbf{.3829} & \textbf{.3949} & \textbf{.2873} & \textbf{.2606} & \textbf{.1769} \\
& Forward Only        & .5181          & .4015          & .4877          & .3027          & .2867          & .3313          \\
& Backward Only       & .5117          & .3996          & .4895          & .3047          & .2798          & .3608          \\
\midrule
\multirow{3}{*}{TimeVLM}
& \textbf{OliO (Bi)} & \textbf{.4129} & .3434          & .3464          & .2536          & \textbf{.2252} & \textbf{.1622} \\
& Forward Only        & .4142          & \textbf{.3430} & .3464          & .2515          & .2256          & .1625          \\
& Backward Only       & .4151          & \textbf{.3430} & \textbf{.3463} & \textbf{.2503} & .2256          & .1625          \\
\bottomrule
\end{tabular}
\end{adjustbox}
\end{table}

\begin{table}[h]
\centering
\small
\caption{Full computation analysis: evaluation of transition matrix constraints and approximation order $p=1 \dots 5$. Values represent the average MSE across all backbones.}
\label{tab:ablation_order_full}
\setlength{\tabcolsep}{4pt}
\begin{tabular}{lccccccc}
\toprule
Matrix Type & Order ($p$) & ETTh1 & ETTh2 & ETTm1 & ETTm2 & Weather & ECL \\
\midrule
Unconstrained (Full) & Exact & NaN & NaN & NaN & NaN & NaN & NaN \\
\textbf{Nilpotent (OliO)} & \textbf{Exact} & \textbf{.4260} & \textbf{.3562} & \textbf{.3616} & \textbf{.2653} & \textbf{.2378} & \textbf{.1664} \\
\midrule
Nilpotent (OliO) & $p=1$ & .4434 & .3661 & .3752 & .2674 & .2470 & .2137 \\
Nilpotent (OliO) & $p=2$ & .4315 & .3624 & .3695 & .2661 & .2480 & .2063 \\
Nilpotent (OliO) & $p=3$ & .4315 & .3606 & .3686 & .2678 & .2382 & .2029 \\
Nilpotent (OliO) & $p=4$ & .4325 & .3629 & .3720 & .2675 & .2457 & .2051 \\
Nilpotent (OliO) & $p=5$ & .4299 & .3646 & .3678 & .2669 & .2464 & .1957 \\
\bottomrule
\end{tabular}
\end{table}

\subsection{Effect of window overlap}
\label{app:overlap_analysis}

For windows of length $l$ with start separation $0\le\tau\le l$, the overlap ratio is $1-\tau/l$. We vary $\tau$ while keeping the window length, backbone, and remaining training configuration fixed. Table~\ref{tab:overlap_ratio} reports forecasting MSE across four overlap ratios.

\begin{table}[!htbp]
\centering
\caption{Forecasting MSE under different overlap ratios. Lower is better.}
\label{tab:overlap_ratio}
\begin{tabular}{lcccc}
\toprule
Setting & 90\% & 80\% & 60\% & 20\% \\
\midrule
PatchTST--ETTh1      & 0.421 & 0.418 & \textbf{0.417} & \textbf{0.417} \\
PatchTST--ETTh2      & \textbf{0.343} & \textbf{0.343} & \textbf{0.343} & 0.344 \\
iTransformer--ETTm1 & 0.404 & 0.406 & 0.403 & \textbf{0.400} \\
iTransformer--ETTm2 & \textbf{0.299} & 0.305 & 0.306 & 0.306 \\
\bottomrule
\end{tabular}
\end{table}

The lowest reported MSE occurs at different overlap ratios across the four settings. Increasing overlap produces no monotonic improvement in forecasting MSE. Both lag and shared content vary in this sweep; the fixed-pair masking analysis in App.~\ref{app:matched_masking} holds the pair and lag constant.

For Weather, we additionally report an extended sweep spanning both overlap endpoints in Table~\ref{tab:weather_overlap}. The preferred ratio depends on the backbone, and increasing overlap does not monotonically improve forecasting MSE.

\begin{table}[!htbp]
\centering
\caption{Extended overlap-ratio sweep on Weather. Lower MSE is better.}
\label{tab:weather_overlap}
\begin{adjustbox}{max width=\linewidth}
\begin{tabular}{lccccccc}
\toprule
Backbone & 100\% & 90\% & 80\% & 60\% & 40\% & 20\% & 0\% \\
\midrule
PatchTST & .2299 & .2298 & .2288 & .2288 & \textbf{.2279} & .2288 & .2287 \\
iTransformer & .2871 & .2681 & .2725 & .2818 & .2752 & \textbf{.2668} & \textbf{.2668} \\
\bottomrule
\end{tabular}
\end{adjustbox}
\end{table}

\subsection{Zero-overlap analysis}
\label{app:zero_overlap}

Table~\ref{tab:zero_overlap} compares the backbone without OliO with two overlap endpoints. At 100\% overlap, $\tau=0$ and $\mathcal T_0(\bm h)=\bm h$, giving a zero-lag control. At 0\% overlap, the windows share no observations. These endpoint conditions supplement the standard pre-training range $0<\tau<l$.

\begin{table}[!htbp]
\centering
\caption{Comparison of models without OliO, OliO with full-overlap pairs, and OliO with zero-overlap pairs. Results are averaged over three seeds. Lower MSE is better.}
\label{tab:zero_overlap}
\begin{tabular}{lccc}
\toprule
Setting & w/o OliO & 100\% Overlap & 0\% Overlap \\
\midrule
PatchTST--ETTh1      & 0.4189 & \textbf{0.4178} & 0.4179 \\
PatchTST--ETTh2      & 0.3594 & \textbf{0.3416} & 0.3437 \\
PatchTST--Weather    & 0.2297 & 0.2299 & \textbf{0.2287} \\
iTransformer--ETTm1 & 0.4097 & 0.4068 & \textbf{0.4060} \\
iTransformer--ETTm2 & 0.2917 & \textbf{0.2889} & 0.2908 \\
iTransformer--Weather & \textbf{0.2615} & 0.2871 & 0.2668 \\
\bottomrule
\end{tabular}
\end{table}

In five of the six settings, the MSE difference between the zero-overlap and full-overlap conditions is within 0.7\%. The zero-overlap condition also has lower MSE than the corresponding backbone without OliO in five settings. iTransformer--Weather favors the backbone without OliO, illustrating variation across the evaluated configurations.

\subsection{Matched masking control}
\label{app:matched_masking}

We further conduct a controlled masking experiment at 50\% overlap. The underlying window pair, signed temporal lag, mask ratio, backbone, training schedule, and hyperparameters are held identical. The shared half of the second window is set to zero in the overlap-mask condition; an equal-sized unique half is set to zero in the unique-mask condition. Masked observations retain their original positions, and the input length is unchanged.

\begin{table}[!htbp]
\centering
\caption{Matched masking control at 50\% overlap. MSE is averaged over ETTh1, ETTh2, ETTm1, ETTm2, Weather, and ECL and horizons $\{96,192,336,720\}$ (24 settings per backbone; 48 overall). ``Wins'' reports Overlap-Mask / Unique-Mask wins. Lower MSE is better.}
\label{tab:matched_masking}
\begin{tabular}{lcccc}
\toprule
Backbone & Overlap Mask & Unique Mask & Gap & Wins \\
\midrule
PatchTST
& 0.29818 & \textbf{0.29767} & +0.17\% & 9 / 15 \\
iTransformer
& 0.35022 & \textbf{0.34932} & +0.26\% & 14 / 10 \\
\midrule
Overall
& 0.32420 & \textbf{0.32349} & +0.22\% & 23 / 25 \\
\bottomrule
\end{tabular}
\end{table}

The aggregate MSE difference is 0.22\%. Overlap masking yields lower MSE in 23 settings, and unique masking in 25 settings. The equal-budget conditions therefore produce closely matched aggregate errors and balanced setting-level wins. This diagnostic measures sensitivity to mask location for fixed window pairs and signed lags.

\subsection{Alternative transition parameterizations}
\label{app:transition_parameterization}

The parameterization comparison in Sec.~\ref{sec:ablation_study} uses PatchTST and iTransformer across six datasets and four horizons, giving 48 settings. For signed lag $s$, Bias-only uses $\bm h+s\bm v$, and K-only uses $e^{sK}\bm h$. MLP replaces the complete transition with a generic nonlinear mapping. Spec-K uses a full generator regularized by spectral normalization in place of the strictly upper-triangular generator.

Across the 48 backbone--dataset--horizon settings, OliO achieves an average MSE of 0.3080, compared with 0.3116 for Spec-K, 0.3123 for K-only, 0.3176 for MLP, and 0.3198 for Bias-only. OliO achieves lower MSE than these alternatives in 39/48, 42/48, 45/48, and 47/48 settings, respectively.

Table~\ref{tab:transition_parameterization_detail} gives the dataset--backbone averages. Bias-only has the largest aggregate MSE increase among these alternatives. The dataset-level results also show variation: K-only has the lowest MSE on PatchTST--ETTm1, and Spec-K on iTransformer--Weather. Spec-K trains stably in these settings, providing a full-generator comparison alongside the unconstrained generator in Table~\ref{tab:ablation_order_full}.

\begin{table}[!htbp]
\centering
\caption{
Dataset-averaged MSE of alternative transition parameterizations.
Each value is averaged over four forecasting horizons.
Lower MSE is better.
}
\label{tab:transition_parameterization_detail}
\begin{adjustbox}{max width=\linewidth}
\begin{tabular}{llccccc}
\toprule
Backbone & Dataset & OliO & Bias-only & K-only & MLP & Spec-K \\
\midrule

\multirow{6}{*}{iTransformer}
& ECL     & \textbf{.1769} & .1904 & .1786 & .1954 & .1785 \\
& ETTh1   & \textbf{.4549} & .4581 & .4592 & .4600 & .4596 \\
& ETTh2   & \textbf{.3829} & .3865 & .3879 & .3888 & .3873 \\
& ETTm1   & \textbf{.3949} & .4731 & .4005 & .4308 & .3986 \\
& ETTm2   & \textbf{.2873} & .2901 & .2883 & .3035 & .2894 \\
& Weather & .2606 & .2789 & .2840 & .2805 & \textbf{.2600} \\
\midrule

\multirow{6}{*}{PatchTST}
& ECL     & \textbf{.1602} & .1634 & .1605 & .1620 & .1605 \\
& ETTh1   & \textbf{.4102} & .4200 & .4132 & .4144 & .4158 \\
& ETTh2   & \textbf{.3424} & .3433 & .3437 & .3433 & .3440 \\
& ETTm1   & .3434 & .3484 & \textbf{.3431} & .3440 & .3458 \\
& ETTm2   & \textbf{.2549} & .2573 & .2577 & .2594 & .2722 \\
& Weather & \textbf{.2275} & .2282 & .2309 & .2293 & .2278 \\

\bottomrule
\end{tabular}
\end{adjustbox}
\end{table}

\subsection{Higher-order transition analysis}
\label{app:higher_order_transition}

We evaluate the matrix component of the transition at all integer lags from 1 to 64. Table~\ref{tab:higher_order_transition_full} reports eight representative lags, extending the summary in Table~\ref{tab:higher_order_transition_main}. The diagnostic compares the exact matrix exponential with its first-order approximation; the additive offset is identical in both affine maps and is excluded from the relative matrix error.

We measure the relative approximation error between the ODE-derived matrix
transport and its first-order approximation as
\begin{equation}
r(\tau)
=
\frac{
\left\|e^{\tau K}-(I+\tau K)\right\|_F
}{
\left\|e^{\tau K}\right\|_F
}.
\end{equation}

\begin{table}[!htbp]
\centering
\caption{Relative approximation error between the complete transition and its first-order approximation across temporal lags.}
\label{tab:higher_order_transition_full}
\begin{tabular}{ccc}
\toprule
Lag $\tau$ & ETTh1 & ETTh2 \\
\midrule
1  & 0.018 & 0.000 \\
10 & 0.617 & 0.034 \\
20 & 0.857 & 0.104 \\
30 & 0.931 & 0.182 \\
40 & 0.965 & 0.258 \\
50 & 0.986 & 0.331 \\
60 & 1.001 & 0.402 \\
64 & 1.004 & 0.430 \\
\bottomrule
\end{tabular}
\end{table}

For the two evaluated operators, relative error increases monotonically over lags 1--64. The discrepancy is smaller at short lags and grows as higher-order terms contribute more to the matrix transition. The order ablation in Table~\ref{tab:ablation_order_full} reports the corresponding forecasting results for separately trained truncation orders.

\subsection{Distance ordering over the evaluated lag range}
\label{app:distance_ordering}
We evaluate the learned checkpoints using the complete affine transition, including its shared lag-dependent offset. For each encoded state $\bm h$, we compute the adjacent-lag increment
\begin{equation}
\Delta_h(k)=\|\mathcal T_{+(k+1)}(\bm h)-\bm h\|_2^2-\|\mathcal T_{+k}(\bm h)-\bm h\|_2^2,
\qquad k=0,\ldots,7.
\end{equation}
Complete pointwise distance ordering holds for 88\%--100\% of evaluated representations across checkpoints. At every adjacent lag in every evaluated checkpoint, both the mean and median of $\Delta_h(k)$ are positive. Every evaluated representation also has positive initial transition-velocity norm $\|K\bm h+\bm v\|_2$. These measurements cover integer lags 0 through 8. App.~\ref{app:displacement_growth} gives the analytical local and asymptotic properties of fixed-source displacement.

\clearpage
\newpage
\section*{NeurIPS Paper Checklist}

\begin{enumerate}

\item {\bf Claims}
    \item[] Question: Do the main claims made in the abstract and introduction accurately reflect the paper's contributions and scope?
    \item[] Answer: \answerYes{} 
    \item[] Justification: The abstract and introduction explicitly detail the main contribution.
    \item[] Guidelines:
    \begin{itemize}
        \item The answer \answerNA{} means that the abstract and introduction do not include the claims made in the paper.
        \item The abstract and/or introduction should clearly state the claims made, including the contributions made in the paper and important assumptions and limitations. A \answerNo{} or \answerNA{} answer to this question will not be perceived well by the reviewers. 
        \item The claims made should match theoretical and experimental results, and reflect how much the results can be expected to generalize to other settings. 
        \item It is fine to include aspirational goals as motivation as long as it is clear that these goals are not attained by the paper. 
    \end{itemize}

    \item {\bf Limitations}
    \item[] Question: Does the paper discuss the limitations of the work performed by the authors?
    \item[] Answer: \answerYes{} 
    \item[] Justification: The limitations regarding the expressivity of low-order polynomial expansions for highly complex datasets are discussed in \cref{sec:experiment}. The impact of static text prompts in specific backbone models (e.g., TimeVLM) on latent visualization is addressed in \cref{appendix:latent_timevlm}. Computational efficiency and scaling analysis are provided in \cref{sec:complexity_analysis}.
    \item[] Guidelines:
    \begin{itemize}
        \item The answer \answerNA{} means that the paper has no limitation while the answer \answerNo{} means that the paper has limitations, but those are not discussed in the paper. 
        \item The authors are encouraged to create a separate ``Limitations'' section in their paper.
        \item The paper should point out any strong assumptions and how robust the results are to violations of these assumptions (e.g., independence assumptions, noiseless settings, model well-specification, asymptotic approximations only holding locally). The authors should reflect on how these assumptions might be violated in practice and what the implications would be.
        \item The authors should reflect on the scope of the claims made, e.g., if the approach was only tested on a few datasets or with a few runs. In general, empirical results often depend on implicit assumptions, which should be articulated.
        \item The authors should reflect on the factors that influence the performance of the approach. For example, a facial recognition algorithm may perform poorly when image resolution is low or images are taken in low lighting. Or a speech-to-text system might not be used reliably to provide closed captions for online lectures because it fails to handle technical jargon.
        \item The authors should discuss the computational efficiency of the proposed algorithms and how they scale with dataset size.
        \item If applicable, the authors should discuss possible limitations of their approach to address problems of privacy and fairness.
        \item While the authors might fear that complete honesty about limitations might be used by reviewers as grounds for rejection, a worse outcome might be that reviewers discover limitations that aren't acknowledged in the paper. The authors should use their best judgment and recognize that individual actions in favor of transparency play an important role in developing norms that preserve the integrity of the community. Reviewers will be specifically instructed to not penalize honesty concerning limitations.
    \end{itemize}

\item {\bf Theory assumptions and proofs}
    \item[] Question: For each theoretical result, does the paper provide the full set of assumptions and a complete (and correct) proof?
    \item[] Answer: \answerYes{} 
    \item[] Justification: The paper provides formal statements and assumptions for all theoretical results, including the polynomial distance relationship. Detailed proofs for these results, along with auxiliary Corollary are provided in \cref{sec:proofs}. 
    \item[] Guidelines:
    \begin{itemize}
        \item The answer \answerNA{} means that the paper does not include theoretical results. 
        \item All the theorems, formulas, and proofs in the paper should be numbered and cross-referenced.
        \item All assumptions should be clearly stated or referenced in the statement of any theorems.
        \item The proofs can either appear in the main paper or the supplemental material, but if they appear in the supplemental material, the authors are encouraged to provide a short proof sketch to provide intuition. 
        \item Inversely, any informal proof provided in the core of the paper should be complemented by formal proofs provided in appendix or supplemental material.
        \item Theorems and Lemmas that the proof relies upon should be properly referenced. 
    \end{itemize}

    \item {\bf Experimental result reproducibility}
    \item[] Question: Does the paper fully disclose all the information needed to reproduce the main experimental results of the paper to the extent that it affects the main claims and/or conclusions of the paper (regardless of whether the code and data are provided or not)?
    \item[] Answer: \answerYes{} 
    \item[] Justification: Detailed setup, including hardware, software, and hyperparameter search space, is provided in \cref{sec:implementation_details}.
    \item[] Guidelines:
    \begin{itemize}
        \item The answer \answerNA{} means that the paper does not include experiments.
        \item If the paper includes experiments, a \answerNo{} answer to this question will not be perceived well by the reviewers: Making the paper reproducible is important, regardless of whether the code and data are provided or not.
        \item If the contribution is a dataset and\slash or model, the authors should describe the steps taken to make their results reproducible or verifiable. 
        \item Depending on the contribution, reproducibility can be accomplished in various ways. For example, if the contribution is a novel architecture, describing the architecture fully might suffice, or if the contribution is a specific model and empirical evaluation, it may be necessary to either make it possible for others to replicate the model with the same dataset, or provide access to the model. In general. releasing code and data is often one good way to accomplish this, but reproducibility can also be provided via detailed instructions for how to replicate the results, access to a hosted model (e.g., in the case of a large language model), releasing of a model checkpoint, or other means that are appropriate to the research performed.
        \item While NeurIPS does not require releasing code, the conference does require all submissions to provide some reasonable avenue for reproducibility, which may depend on the nature of the contribution. For example
        \begin{enumerate}
            \item If the contribution is primarily a new algorithm, the paper should make it clear how to reproduce that algorithm.
            \item If the contribution is primarily a new model architecture, the paper should describe the architecture clearly and fully.
            \item If the contribution is a new model (e.g., a large language model), then there should either be a way to access this model for reproducing the results or a way to reproduce the model (e.g., with an open-source dataset or instructions for how to construct the dataset).
            \item We recognize that reproducibility may be tricky in some cases, in which case authors are welcome to describe the particular way they provide for reproducibility. In the case of closed-source models, it may be that access to the model is limited in some way (e.g., to registered users), but it should be possible for other researchers to have some path to reproducing or verifying the results.
        \end{enumerate}
    \end{itemize}

\item {\bf Open access to data and code}
    \item[] Question: Does the paper provide open access to the data and code, with sufficient instructions to faithfully reproduce the main experimental results, as described in supplemental material?
    \item[] Answer: \answerYes{} 
    \item[] Justification: An anonymous GitHub repository containing the implementation code and instructions is provided at \url{https://anonymous.4open.science/r/OliO-46EF}.
    \item[] Guidelines:
    \begin{itemize}
        \item The answer \answerNA{} means that paper does not include experiments requiring code.
        \item Please see the NeurIPS code and data submission guidelines (\url{https://neurips.cc/public/guides/CodeSubmissionPolicy}) for more details.
        \item While we encourage the release of code and data, we understand that this might not be possible, so \answerNo{} is an acceptable answer. Papers cannot be rejected simply for not including code, unless this is central to the contribution (e.g., for a new open-source benchmark).
        \item The instructions should contain the exact command and environment needed to run to reproduce the results. See the NeurIPS code and data submission guidelines (\url{https://neurips.cc/public/guides/CodeSubmissionPolicy}) for more details.
        \item The authors should provide instructions on data access and preparation, including how to access the raw data, preprocessed data, intermediate data, and generated data, etc.
        \item The authors should provide scripts to reproduce all experimental results for the new proposed method and baselines. If only a subset of experiments are reproducible, they should state which ones are omitted from the script and why.
        \item At submission time, to preserve anonymity, the authors should release anonymized versions (if applicable).
        \item Providing as much information as possible in supplemental material (appended to the paper) is recommended, but including URLs to data and code is permitted.
    \end{itemize}

\item {\bf Experimental setting/details}
    \item[] Question: Does the paper specify all the training and test details (e.g., data splits, hyperparameters, how they were chosen, type of optimizer) necessary to understand the results?
    \item[] Answer: \answerYes{} 
    \item[] Justification: Detailed pre-training and fine-tuning configurations, including the hyperparameter search space for each backbone, are provided in \cref{sec:implementation_details,sec:datasets}.
    \item[] Guidelines:
    \begin{itemize}
        \item The answer \answerNA{} means that the paper does not include experiments.
        \item The experimental setting should be presented in the core of the paper to a level of detail that is necessary to appreciate the results and make sense of them.
        \item The full details can be provided either with the code, in appendix, or as supplemental material.
    \end{itemize}

\item {\bf Experiment statistical significance}
    \item[] Question: Does the paper report error bars suitably and correctly defined or other appropriate information about the statistical significance of the experiments?
    \item[] Answer: \answerYes{} 
    \item[] Justification: Experimental results are reported with standard deviations calculated over three independent runs with different random seeds, as shown in the result tables in \cref{app:detailed_results}.
    \item[] Guidelines:
    \begin{itemize}
        \item The answer \answerNA{} means that the paper does not include experiments.
        \item The authors should answer \answerYes{} if the results are accompanied by error bars, confidence intervals, or statistical significance tests, at least for the experiments that support the main claims of the paper.
        \item The factors of variability that the error bars are capturing should be clearly stated (for example, train/test split, initialization, random drawing of some parameter, or overall run with given experimental conditions).
        \item The method for calculating the error bars should be explained (closed form formula, call to a library function, bootstrap, etc.)
        \item The assumptions made should be given (e.g., Normally distributed errors).
        \item It should be clear whether the error bar is the standard deviation or the standard error of the mean.
        \item It is OK to report 1-sigma error bars, but one should state it. The authors should preferably report a 2-sigma error bar than state that they have a 96\% CI, if the hypothesis of Normality of errors is not verified.
        \item For asymmetric distributions, the authors should be careful not to show in tables or figures symmetric error bars that would yield results that are out of range (e.g., negative error rates).
        \item If error bars are reported in tables or plots, the authors should explain in the text how they were calculated and reference the corresponding figures or tables in the text.
    \end{itemize}

\item {\bf Experiments compute resources}
    \item[] Question: For each experiment, does the paper provide sufficient information on the computer resources (type of compute workers, memory, time of execution) needed to reproduce the experiments?
    \item[] Answer: \answerYes{} 
    \item[] Justification: Hardware specifications (single NVIDIA RTX 3090 GPU), compute environment, average execution times, and total compute estimates are detailed in \cref{sec:implementation_details}.
    \item[] Guidelines:
    \begin{itemize}
        \item The answer \answerNA{} means that the paper does not include experiments.
        \item The paper should indicate the type of compute workers CPU or GPU, internal cluster, or cloud provider, including relevant memory and storage.
        \item The paper should provide the amount of compute required for each of the individual experimental runs as well as estimate the total compute. 
        \item The paper should disclose whether the full research project required more compute than the experiments reported in the paper (e.g., preliminary or failed experiments that didn't make it into the paper). 
    \end{itemize}
    
\item {\bf Code of ethics}
    \item[] Question: Does the research conducted in the paper conform, in every respect, with the NeurIPS Code of Ethics \url{https://neurips.cc/public/EthicsGuidelines}?
    \item[] Answer: \answerYes{} 
    \item[] Justification: The research conducted in this paper conforms to the NeurIPS Code of Ethics.
    \item[] Guidelines:
    \begin{itemize}
        \item The answer \answerNA{} means that the authors have not reviewed the NeurIPS Code of Ethics.
        \item If the authors answer \answerNo, they should explain the special circumstances that require a deviation from the Code of Ethics.
        \item The authors should make sure to preserve anonymity (e.g., if there is a special consideration due to laws or regulations in their jurisdiction).
    \end{itemize}

\item {\bf Broader impacts}
    \item[] Question: Does the paper discuss both potential positive societal impacts and negative societal impacts of the work performed?
    \item[] Answer: \answerNA{} 
    \item[] Justification: This work focuses on foundational representation learning for time series. As such, it does not directly address specific societal applications or impacts.
    \item[] Guidelines:
    \begin{itemize}
        \item The answer \answerNA{} means that there is no societal impact of the work performed.
        \item If the authors answer \answerNA{} or \answerNo, they should explain why their work has no societal impact or why the paper does not address societal impact.
        \item Examples of negative societal impacts include potential malicious or unintended uses (e.g., disinformation, generating fake profiles, surveillance), fairness considerations (e.g., deployment of technologies that could make decisions that unfairly impact specific groups), privacy considerations, and security considerations.
        \item The conference expects that many papers will be foundational research and not tied to particular applications, let alone deployments. However, if there is a direct path to any negative applications, the authors should point it out. For example, it is legitimate to point out that an improvement in the quality of generative models could be used to generate Deepfakes for disinformation.
        \item The authors should consider possible harms that could arise when the technology is being used as intended and functioning correctly, harms that could arise when the technology is being used as intended but gives incorrect results, and harms following from (intentional or unintentional) misuse of the technology.
        \item If there are negative societal impacts, the authors could also discuss possible mitigation strategies (e.g., gated release of models, providing defenses in addition to attacks, mechanisms for monitoring misuse, mechanisms to monitor how a system learns from feedback over time, improving the efficiency and accessibility of ML).
    \end{itemize}
    
\item {\bf Safeguards}
    \item[] Question: Does the paper describe safeguards that have been put in place for responsible release of data or models that have a high risk for misuse (e.g., pre-trained language models, image generators, or scraped datasets)?
    \item[] Answer: \answerNA{} 
    \item[] Justification: The proposed framework is a general time series forecasting tool and does not pose significant risks for misuse compared to large-scale generative models.
    \item[] Guidelines:
    \begin{itemize}
        \item The answer \answerNA{} means that the paper poses no such risks.
        \item Released models that have a high risk for misuse or dual-use should be released with necessary safeguards to allow for controlled use of the model, for example by requiring that users adhere to usage guidelines or restrictions to access the model or implementing safety filters. 
        \item Datasets that have been scraped from the Internet could pose safety risks. The authors should describe how they avoided releasing unsafe images.
        \item We recognize that providing effective safeguards is challenging, and many papers do not require this, but we encourage authors to take this into account and make a best faith effort.
    \end{itemize}

\item {\bf Licenses for existing assets}
    \item[] Question: Are the creators or original owners of assets (e.g., code, data, models), used in the paper, properly credited and are the license and terms of use explicitly mentioned and properly respected?
    \item[] Answer: \answerYes{} 
    \item[] Justification: All benchmark datasets (ETT, Weather, Electricity) are properly credited and cited, and were used in accordance with their research-oriented terms of use.
    \item[] Guidelines:
    \begin{itemize}
        \item The answer \answerNA{} means that the paper does not use existing assets.
        \item The authors should cite the original paper that produced the code package or dataset.
        \item The authors should state which version of the asset is used and, if possible, include a URL.
        \item The name of the license (e.g., CC-BY 4.0) should be included for each asset.
        \item For scraped data from a particular source (e.g., website), the copyright and terms of service of that source should be provided.
        \item If assets are released, the license, copyright information, and terms of use in the package should be provided. For popular datasets, \url{paperswithcode.com/datasets} has curated licenses for some datasets. Their licensing guide can help determine the license of a dataset.
        \item For existing datasets that are re-packaged, both the original license and the license of the derived asset (if it has changed) should be provided.
        \item If this information is not available online, the authors are encouraged to reach out to the asset's creators.
    \end{itemize}

\item {\bf New assets}
    \item[] Question: Are new assets introduced in the paper well documented and is the documentation provided alongside the assets?
    \item[] Answer: \answerNA{} 
    \item[] Justification: No new assets (like new datasets or physical artifacts) are introduced in this paper.
    \item[] Guidelines:
    \begin{itemize}
        \item The answer \answerNA{} means that the paper does not release new assets.
        \item Researchers should communicate the details of the dataset\slash code\slash model as part of their submissions via structured templates. This includes details about training, license, limitations, etc. 
        \item The paper should discuss whether and how consent was obtained from people whose asset is used.
        \item At submission time, remember to anonymize your assets (if applicable). You can either create an anonymized URL or include an anonymized zip file.
    \end{itemize}

\item {\bf Crowdsourcing and research with human subjects}
    \item[] Question: For crowdsourcing experiments and research with human subjects, does the paper include the full text of instructions given to participants and screenshots, if applicable, as well as details about compensation (if any)? 
    \item[] Answer: \answerNA{} 
    \item[] Justification: This research does not involve crowdsourcing or human subjects.
    \item[] Guidelines:
    \begin{itemize}
        \item The answer \answerNA{} means that the paper does not involve crowdsourcing nor research with human subjects.
        \item Including this information in the supplemental material is fine, but if the main contribution of the paper involves human subjects, then as much detail as possible should be included in the main paper. 
        \item According to the NeurIPS Code of Ethics, workers involved in data collection, curation, or other labor should be paid at least the minimum wage in the country of the data collector. 
    \end{itemize}

\item {\bf Institutional review board (IRB) approvals or equivalent for research with human subjects}
    \item[] Question: Does the paper describe potential risks incurred by study participants, whether such risks were disclosed to the subjects, and whether Institutional Review Board (IRB) approvals (or an equivalent approval/review based on the requirements of your country or institution) were obtained?
    \item[] Answer: \answerNA{} 
    \item[] Justification: This research does not involve human subjects.
    \item[] Guidelines:
    \begin{itemize}
        \item The answer \answerNA{} means that the paper does not involve crowdsourcing nor research with human subjects.
        \item Depending on the country in which research is conducted, IRB approval (or equivalent) may be required for any human subjects research. If you obtained IRB approval, you should clearly state this in the paper. 
        \item We recognize that the procedures for this may vary significantly between institutions and locations, and we expect authors to adhere to the NeurIPS Code of Ethics and the guidelines for their institution. 
        \item For initial submissions, do not include any information that would break anonymity (if applicable), such as the institution conducting the review.
    \end{itemize}

\item {\bf Declaration of LLM usage}
    \item[] Question: Does the paper describe the usage of LLMs if it is an important, original, or non-standard component of the core methods in this research? Note that if the LLM is used only for writing, editing, or formatting purposes and does \emph{not} impact the core methodology, scientific rigor, or originality of the research, declaration is not required.
    \item[] Answer: \answerNA{} 
    \item[] Justification: LLMs were not used as an important, original, or non-standard component of the core methods in this research.
    \item[] Guidelines:
    \begin{itemize}
        \item The answer \answerNA{} means that the core method development in this research does not involve LLMs as any important, original, or non-standard components.
        \item Please refer to our LLM policy in the NeurIPS handbook for what should or should not be described.
    \end{itemize}

\end{enumerate}

\end{document}